\def\E{\mathbb{E}}
\def\R{\mathbb{R}}
\def\t{{\scriptscriptstyle\top}}
\def\tl{\tilde}
\def\h{\hat}
\def\wh{\widehat}
\def\N{\mathcal{N}}
\newcommand\empar[1]{\smallskip\noindent\emph{#1}}
\newcommand\grad{\nabla}
\def\Sig{\varSigma}
\def\sig{\varsigma}
\def\eps{\epsilon}
\def\veps{\varepsilon}
\newcommand\alg{\textsc{LearnGMM}}
\newcommand\sign{\operatorname{sign}}
\newcommand\diag{\operatorname{diag}}
\newcommand\rank{\operatorname{rank}}
\newcommand\coherence{\operatorname{coherence}}
\newcommand\Span{\operatorname{span}}
\newcommand\range{\operatorname{range}}
\newcommand\mm{m}
\newcommand\mean{\mu}
\newcommand\Moment{\mathcal{M}}
\newcommand\Means{A}
\newcommand\Noise{\mathcal{N}}
\newcommand\ICA{\operatorname{ICA}}
\newcommand\GMM{\operatorname{GMM}}
\newcommand\dataset{\ensuremath{\mathcal{S}}}
\newcommand\Err{\ensuremath{\mathcal{E}}}
\newcommand\Bound{\ensuremath{\mathcal{B}}}
\newcommand\poly{\operatorname{poly}}
\newtheorem{lemma}{Lemma}
\newtheorem{theorem}{Theorem}
\theoremstyle{definition}
\newtheorem{condition}{Condition}
\theoremstyle{remark}
\newtheorem{remark}{Remark}
\newcommand\refprop[1]{Proposition~\ref{prop:#1}}
\newcommand\reflem[1]{Lemma~\ref{lem:#1}}
\newcommand\refthm[1]{Theorem~\ref{thm:#1}}
\newcommand\refcond[1]{Condition~\ref{cond:#1}}
\newcommand\refsec[1]{Section~\ref{section:#1}}
\newcommand\refapp[1]{Appendix~\ref{section:#1}}
\newcommand\reffig[1]{Figure~\ref{fig:#1}}
\newenvironment{thm2}[1]{\begin{theorem}}{\end{theorem}\addtocounter{theorem}{-1}}
\title{Learning mixtures of spherical Gaussians:
\mbox{moment methods and spectral decompositions}}
\author{Daniel Hsu}
\author{Sham M. Kakade}
\affil{Microsoft Research New England}
\begin{document}
\maketitle
{\def\thefootnote{}
\footnotetext{E-mail: \texttt{\{dahsu,skakade\}@microsoft.com}}}

\begin{abstract}
This work provides a computationally efficient and statistically consistent
moment-based estimator for mixtures of spherical Gaussians.
Under the condition that component means are in general position, a simple
spectral decomposition technique yields consistent parameter estimates from
low-order observable moments, without additional minimum separation
assumptions needed by previous computationally efficient estimation
procedures.
Thus computational and information-theoretic barriers to efficient
estimation in mixture models are precluded when the mixture components have
means in general position and spherical covariances.
Some connections are made to estimation problems related to independent
component analysis.
\end{abstract}

\section{Introduction}
\label{section:intro}

The Gaussian mixture model~\citep{Pearson94,TSM85} is one of the most
well-studied and widely-used models in applied statistics and machine
learning.
An important special case of this model (the primary focus of this work)
restricts the Gaussian components to have spherical covariance matrices;
this probabilistic model is closely related to the (non-probabilistic)
$k$-means clustering problem~\citep{kmeans}.

The mixture of spherical Gaussians model is specified as follows.
Let $w_i$ be the probability of choosing component $i \in [k] :=
\{1,2,\dotsc,k\}$, let $\mean_1, \mean_2, \dotsc, \mean_k \in \R^d$ be the
component mean vectors, and let $\sigma_1^2, \sigma_2^2, \dotsc, \sigma_k^2
\geq 0$ be the component variances.
Define
\begin{align*}
w & := [w_1,w_2,\dotsc,w_k]^\t \in \R^k , &
\Means & := [\mean_1|\mean_2|\dotsb|\mean_k] \in \R^{d \times k} ;
\end{align*}
so $w$ is a probability vector, and $\Means$ is the matrix whose columns
are the component means.
Let $x \in \R^k$ be the (observed) random vector given by
\begin{align*}
x & := \mean_h + z ,
\end{align*}
where $h$ is the discrete random variable with $\Pr(h = i) = w_i$ for $i
\in [k]$, and $z$ is a random vector whose conditional distribution given
$h = i$ (for some $i \in [k]$) is the multivariate Gaussian
$\N(0,\sigma_i^2 I)$ with mean zero and covariance $\sigma_i^2
I$.

The estimation task is to accurately recover the model parameters
(component means, variances, and mixing weights) $\{ (\mean_i, \sigma_i^2,
w_i) : i \in [k] \}$ from independent copies of $x$.

This work gives a procedure for efficiently and exactly recovering the
parameters using a simple spectral decomposition of low-order moments of
$x$, under the following condition.
\begin{condition}[Non-degeneracy] \label{cond:rank}
The component means span a $k$-dimensional subspace (\emph{i.e.}, the
matrix $\Means$ has column rank $k$),
and the vector $w$ has strictly positive entries.
\end{condition}
The proposed estimator is based on a spectral decomposition technique
\citep{Chang96,MR06,AHK12}, and is easily stated in terms of exact
population moments of the observed $x$.
With finite samples, one can use a plug-in estimator based on empirical
moments of $x$ in place of exact moments.
These empirical moments converge to the exact moments at a rate of
$O(n^{-1/2})$, where $n$ is the sample size.
As discussed in \refsec{discussion}, sample complexity bounds for accurate
parameter estimation can be derived using matrix perturbation arguments
\citep{AHK12}.
Since only low-order moments are required by the plug-in estimator, the
sample complexity is polynomial in the relevant parameters of the
estimation problem.

\paragraph{Related work.}

The first estimators for the Gaussian mixture models were based on the
method-of-moments, as introduced by \citet{Pearson94} \citep[see also][and
the references therein]{LB93}.
Roughly speaking, these estimators are based on finding parameters under
which the Gaussian mixture distribution has moments approximately matching
the observed empirical moments.
Finding these parameters typically involves solving systems of multivariate
polynomial equations, which is typically computationally challenging.
Besides this, the order of the moments of some of the early moment-based
estimators were either growing with the dimension $d$ or the number of
components $k$, which is undesirable because the empirical estimates of
such high-order moments may only be reliable when the sample size is
exponential in $d$ or $k$.
Both the computational and sample complexity issues have been addressed in
recent years, at least under various restrictions.
For instance, several distance-based estimators require that the component
means be well-separated in Euclidean space, by at least some large factor
times the directional standard deviation of the individual component
distributions~\citep{Das99,AK01,DS07,VW02,CR08}, but otherwise have
polynomial computational and sample complexity.
Some recent moment-based estimators avoid the minimum separation condition
of distance-based estimators by requiring either computational or data
resources exponential in the number of mixing components $k$ (but not the
dimension $d$)~\citep{BS10,KMV10,MV10} or by making a non-degenerate
multi-view assumption~\citep{AHK12}.

By contrast, the moment-based estimator described in this work does not
require a minimum separation condition, exponential computational or data
resources, or non-degenerate multiple views.
Instead, it relies only on the non-degeneracy condition discussed above
together with a spherical noise condition.
The non-degeneracy condition is much weaker than an explicit minimum
separation condition because the parameters can be arbitrarily close to
being degenerate, as long as the sample size grows polynomially with a
natural quantity measuring this closeness to degeneracy (akin to a
condition number).
Like other moment-based estimators, the proposed estimator is based on
solving multivariate polynomial equations, although these solutions can be
found efficiently because the problems are cast as eigenvalue
decompositions of symmetric matrices, which are efficient to compute.

Recent work of \citet{MV10} demonstrates an information-theoretic barrier
to estimation for general Gaussian mixture models.
More precisely, they construct a pair of one-dimensional mixtures of
Gaussians (with separated component means) such that the statistical
distance between the two mixture distributions is exponentially small in
the number of components.
This implies that in the worst case, the sample size required to obtain
accurate parameter estimates must grow exponentially with the number of
components, even when the component distributions are non-negligibly
separated.
A consequence of the present work is that natural non-degeneracy conditions
preclude these worst case scenarios.
The non-degeneracy condition in this work is similar to one used for
bypassing computational (cryptographic) barriers to estimation for hidden
Markov models~\citep{Chang96,MR06,HKZ12,AHK12}.

Finally, it is interesting to note that similar algebraic techniques have
been developed for certain models in independent component analysis (ICA)
\citep{Comon94,CC96,HO00,Comon:book,AroraICA} and other closely related
problems \citep{FJK96,NR09}.
In contrast to the ICA setting, handling non-spherical Gaussian noise for
mixture models appears to be a more delicate issue.
These connections and open problems are further discussed in
\refsec{discussion}.

\section{Moment-based estimation}
\label{section:gmm}

This section describes a method-of-moments estimator for the spherical
Gaussian mixture model.

The following theorem is the main structural result that relates the model
parameters to observable moments.
\begin{theorem}[Observable moment structure] \label{thm:moments} 
Assume \refcond{rank} holds.
The average variance $\bar\sigma^2 := \sum_{i=1}^k w_i \sigma_i^2$ is the
smallest eigenvalue of the covariance matrix $\E[ (x - \E[x]) (x -
\E[x])^\t ]$.
Let $v \in \R^d$ be any unit norm eigenvector corresponding to the
eigenvalue $\bar\sigma^2$.
Define
\begin{align*}
M_1 & := \E[ x (v^\t (x - \E[x]))^2 ] \in \R^d , \\
M_2 & := \E[ x \otimes x ] - \bar\sigma^2 I \in \R^{d \times d} , \\
M_3 & := \E[ x \otimes x \otimes x ] -
\sum_{i=1}^d \bigl( M_1 \otimes e_i \otimes e_i
+ e_i \otimes M_1 \otimes e_i
+ e_i \otimes e_i \otimes M_1 \bigr) \in \R^{d \times d \times d}
\end{align*}
(where $\otimes$ denotes tensor product, and $\{ e_1, e_2, \dotsc, e_d \}$
is the coordinate basis for $\R^d$).
Then
\begin{equation*}
M_1 = \sum_{i=1}^k w_i \ \sigma_i^2 \mean_i , \qquad
M_2 = \sum_{i=1}^k w_i \ \mean_i \otimes \mean_i , \qquad
M_3 = \sum_{i=1}^k w_i \ \mean_i \otimes \mean_i \otimes \mean_i .
\end{equation*}
\end{theorem}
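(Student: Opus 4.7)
The plan is to condition on the component label $h = i$, so that $x \mid h = i \sim \N(\mean_i, \sigma_i^2 I)$, and apply elementary Gaussian-moment identities. Two ingredients are used throughout: odd centered Gaussian moments vanish, and $\E[z \otimes z \mid h=i] = \sigma_i^2 \sum_{j=1}^d e_j \otimes e_j$.

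\textbf{Step 1 (covariance and the eigenvector $v$).} By the law of total covariance,
\begin{equation*}
\E[(x-\E[x])(x-\E[x])^\t] \;=\; \bar\sigma^2 I + \sum_{i=1}^k w_i(\mean_i - \E[x])(\mean_i - \E[x])^\t .
\end{equation*}
The second term is positive semidefinite with range contained in $\range(\Means)$, a subspace of $\R^d$ of dimension $k$ (in fact the term has rank at most $k-1$, since $\sum_i w_i(\mean_i - \E[x]) = 0$). Hence $\bar\sigma^2$ is the smallest eigenvalue of the covariance, and any unit eigenvector $v$ for it must satisfy $v^\t(\mean_i - \E[x]) = 0$ for every $i$ with $w_i > 0$; equivalently, $v^\t(x - \E[x]) = v^\t z$ almost surely. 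This identity is the crux of the proof.

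\textbf{Step 2 ($M_1$).} Conditioning on $h=i$ and using $v^\t(x-\E[x]) = v^\t z$,
\begin{equation*}
\E[x (v^\t z)^2 \mid h=i] \;=\; \mean_i \cdot \sigma_i^2 \|v\|^2 + \E[z (v^\t z)^2 \mid h=i] \;=\; \sigma_i^2 \mean_i ,
\end{equation*}
since the residual expectation is a third centered Gaussian moment and vanishes. Taking a weighted average over $h$ yields $M_1 = \sum_i w_i \sigma_i^2 \mean_i$.

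\textbf{Step 3 ($M_2$).} From $\E[xx^\t \mid h=i] = \mean_i\mean_i^\t + \sigma_i^2 I$, averaging gives $\E[x \otimes x] = \sum_i w_i \mean_i \otimes \mean_i + \bar\sigma^2 I$, which yields the stated formula after subtracting $\bar\sigma^2 I$.

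\textbf{Step 4 ($M_3$).} Expand $(\mean_i + z)^{\otimes 3}$ as a sum of eight tensor products. Terms with one or three factors of $z$ vanish in conditional expectation by the odd-moment property; the purely $\mean_i$-cubed term gives $\mean_i^{\otimes 3}$; the three terms with exactly two factors of $z$ each contribute, via the second Gaussian-moment identity,
\begin{equation*}
\sigma_i^2 \sum_{j=1}^d \bigl( \mean_i \otimes e_j \otimes e_j + e_j \otimes \mean_i \otimes e_j + e_j \otimes e_j \otimes \mean_i \bigr) .
\end{equation*}
Averaging over $h$ replaces $\sigma_i^2 \mean_i$ by $M_1$, so the aggregated noise contribution is exactly the correction subtracted in the definition of $M_3$. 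Hence $M_3 = \sum_i w_i \mean_i^{\otimes 3}$.

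\textbf{Main obstacle.} The computations themselves are routine Gaussian-moment bookkeeping. The one point requiring genuine care is Step 1, where the geometric fact $v^\t(\mean_i - \E[x]) = 0$ must be extracted from $v$'s being an eigenvector for the smallest eigenvalue; without this reduction the $M_1$ expression would pick up spurious cross terms involving the $\mean_i$.
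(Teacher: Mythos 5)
Your proposal is correct and follows essentially the same route as the paper: both identify $\bar\sigma^2$ as the smallest eigenvalue via the law of total covariance, deduce $v^\t(\mean_i - \E[x]) = 0$ from the eigenvector property (the paper also flags this as the key structural observation), and then compute $M_1, M_2, M_3$ by conditioning on $h$ and using that odd centered Gaussian moments vanish and $\E[z \otimes z \mid h] = \sigma_h^2 I$. The only stylistic difference is that the paper writes out the tensor-index computation for $\E[\mean_h \otimes z \otimes z]$ explicitly while you invoke the same identity in words; the content is the same.
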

\begin{remark}
We note that in the special case where $\sigma_1^2 = \sigma_2^2 = \dotsb =
\sigma_k^2 = \sigma^2$ (\emph{i.e.}, the mixture components share a common
spherical covariance matrix), the average variance $\bar\sigma^2$ is simply
$\sigma^2$, and $M_3$ has a simpler form:
\[
M_3 = \E[ x \otimes x \otimes x ] -
\sigma^2 \sum_{i=1}^d \bigl( \E[x] \otimes e_i \otimes e_i
+ e_i \otimes \E[x] \otimes e_i
+ e_i \otimes e_i \otimes \E[x] \bigr) \in \R^{d \times d \times d}
.
\]
There is no need to refer to the eigenvectors of the covariance matrix or
$M_1$.
\end{remark}
\begin{proof}[Proof of \refthm{moments}]
We first characterize the smallest eigenvalue of the covariance matrix of
$x$, as well as all corresponding eigenvectors $v$.
Let $\bar\mean := \E[x] = \E[\mean_h] = \sum_{i=1}^k w_i \mean_i$.
The covariance matrix of $x$ is
\begin{align*}
\E[ (x - \bar\mean) \otimes (x - \bar\mean) ]
& = \sum_{i=1}^k w_i \
\biggl( (\mean_i - \bar\mean) \otimes (\mean_i - \bar\mean)
+ \sigma_i^2 I \biggr)
\\
& = \sum_{i=1}^k w_i \
(\mean_i - \bar\mean) \otimes (\mean_i - \bar\mean)
\ + \ \bar\sigma^2 I
.
\end{align*}
Since the vectors $\mean_i - \bar\mean$ for $i \in [k]$ are linearly
dependent ($\sum_{i=1}^k w_i (\mean_i - \bar\mean) = 0$), the positive
semidefinite matrix $\sum_{i=1}^k w_i (\mean_i - \bar\mean) \otimes
(\mean_i - \bar\mean)$ has rank $r \leq k-1$.
Thus, the $d-r$ smallest eigenvalues are exactly $\bar\sigma^2$, while all
other eigenvalues are strictly larger than $\bar\sigma^2$.
The strict separation of eigenvalues implies that every eigenvector
corresponding to $\bar\sigma^2$ is in the null space of $\sum_{i=1}^k w_i
(\mean_i - \bar\mean) \otimes (\mean_i - \bar\mean)$; thus $v^\t (\mean_i -
\bar\mean) = 0$ for all $i \in [k]$.

Now we can express $M_1$, $M_2$, and $M_3$ in terms of the parameters
$w_i$, $\mean_i$, and $\sigma_i^2$.
First,
\begin{align*}
M_1 & = \E[ x (v^\t (x - \E[x]))^2 ]
= \E[ (\mean_h + z) (v^\t (\mean_h - \bar\mean + z))^2 ]
= \E[ (\mean_h + z) (v^\t z)^2 ]
= \E[ \mean_h \sigma_h^2 ]
,
\end{align*}
where the last step uses the fact that $z|h \sim \N(0,\sigma_h^2 I)$, which
implies that conditioned on $h$, $\E[ (v^\t z)^2 | h] = \sigma_h^2$ and
$\E[ z (v^\t z)^2 |h ] = 0$.
Next, observe that $\E[ z \otimes z ] = \sum_{i=1}^k w_i \sigma_i^2 I =
\bar\sigma^2 I$, so
\begin{align*}
M_2 & = \E[ x \otimes x ] - \bar\sigma^2 I
= \E[ \mean_h \otimes \mean_h ] + \E[ z \otimes z ] - \bar\sigma^2 I
= \E[ \mean_h \otimes \mean_h ]
= \sum_{i=1}^k w_i \ \mean_i \otimes \mean_i
.
\end{align*}
Finally, for $M_3$, we first observe that
\[
\E[ x \otimes x \otimes x ]
= \E[ \mean_h \otimes \mean_h \otimes \mean_h ]
+ \E[ \mean_h \otimes z \otimes z ]
+ \E[ z \otimes \mean_h \otimes z ]
+ \E[ z \otimes z \otimes \mean_h ]
\]
(terms such as $\E[ \mean_h \otimes \mean_h \otimes z ]$ and $\E[ z \otimes
z \otimes z ]$ vanish because $z|h \sim \N(0,\sigma_h^2 I)$).
We now claim that $\E[ \mean_h \otimes z \otimes z ] = \sum_{i=1}^d M_1
\otimes e_i \otimes e_i$.
This holds because
\begin{align*}
\E[ \mean_h \otimes z \otimes z ]
& = \E\bigl[ \E[ \mean_h \otimes z \otimes z | h ] \bigr]
\\
& = \E\biggl[
\E\biggl[ \sum_{i,j=1}^d z_i z_j \ \mean_h \otimes e_i \otimes e_j
\Big| h \biggr] \biggr]
= \E\biggl[
\sum_{i=1}^d \sigma_h^2 \ \mean_h \otimes e_i \otimes e_i \biggr]
= \sum_{i=1}^d M_1 \otimes e_i \otimes e_i
,
\end{align*}
crucially using the fact that $\E[ z_i z_j | h ] = 0$ for $i \neq j$ and
$\E[ z_i^2 | h ] = \sigma_h^2$.
By the same derivation, we have $\E[ z \otimes \mean_h \otimes z ] =
\sum_{i=1}^d e_i \otimes M_1 \otimes e_i$ and $\E[ z \otimes z \otimes
\mean_h ] = \sum_{i=1}^d e_i \otimes e_i \otimes M_1$.
Therefore,
\[ M_3
= \E[ x \otimes x \otimes x ]
- \bigl( \E[ \mean_h \otimes z \otimes z ]
+ \E[ z \otimes \mean_h \otimes z ]
+ \E[ z \otimes z \otimes \mean_h ]
\bigr)
= \E[ \mean_h \otimes \mean_h \otimes \mean_h ]
= \sum_{i=1}^k w_i \mean_i \otimes \mean_i \otimes \mean_i
\]
as claimed.
\end{proof}

\refthm{moments} shows the relationship between (some functions of) the
observable moments and the desired parameters.
A simple estimator based on this moment structure is given in the following
theorem.
For a third-order tensor $T \in \R^{d \times d \times d}$, we define
$T(\eta) := \sum_{i_1=1}^d \sum_{i_2=1}^d \sum_{i_3=1}^d T_{i_1,i_2,i_3}
\eta_{i_3} \ e_{i_1} \otimes e_{i_2} \in \R^{d \times d}$ for any vector
$\eta\in\R^d$.

\begin{theorem}[Moment-based estimator] \label{thm:spherical} 
The following can be added to the results of Theorem~\ref{thm:moments}.
Suppose $\eta^\t \mean_1, \eta^\t \mean_2, \dotsc, \eta^\t \mean_k$ are
distinct and non-zero (which is satisfied almost surely, for instance, if
$\eta$ is chosen uniformly at random from the unit sphere in $\R^d$).
Then the matrix
\begin{align*}
M_{\GMM}(\eta) & := M_2^{\dag1/2} M_3(\eta) M_2^{\dag1/2}
\end{align*}
is diagonalizable (where ${}^\dag$ denotes the Moore-Penrose
pseudoinverse); its non-zero eigenvalue / eigenvector pairs
$(\lambda_1,v_1), (\lambda_2,v_2), \dotsc, (\lambda_k,v_k)$ satisfy
$\lambda_i = \eta^\t \mean_{\pi(i)}$ and $M_2^{1/2} v_i = s_i
\sqrt{w_{\pi(i)}} \mean_{\pi(i)}$ for some permutation $\pi$ on $[k]$ and
signs $s_1, s_2, \dotsc, s_k \in \{\pm1\}$.
The $\mean_i$, $\sigma_i^2$, and $w_i$ are recovered (up to permutation)
with 
\begin{equation*}
\mean_{\pi(i)}
= \frac{\lambda_i}{\eta^\t M_2^{1/2} v_i} M_2^{1/2} v_i
, \qquad
\sigma_i^2 = \frac{1}{w_i} e_i^\t \Means^{\dag} M_1
, \qquad
w_i = e_i^\t \Means^{\dag} \E[x]
.
\end{equation*}
\end{theorem}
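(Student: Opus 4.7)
The plan is to apply a whitening argument to reduce the problem to a simultaneous eigendecomposition of a symmetric matrix with distinct eigenvalues. Using Theorem~\ref{thm:moments}, write $M_2 = A W A^\t$ where $W := \diag(w_1,\dotsc,w_k)$ and $A$ has full column rank $k$ by \refcond{rank}. First, I would show that the $d \times k$ matrix
\begin{equation*}
B := M_2^{\dag1/2} A W^{1/2}
\end{equation*}
has orthonormal columns. This follows because $B B^\t = M_2^{\dag1/2} M_2 M_2^{\dag1/2}$ is the orthogonal projector onto $\range(M_2) = \range(A)$, which has rank $k$, so $B^\t B$ is an idempotent $k\times k$ matrix of rank $k$, hence the identity. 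Along the way I would record that $\range(M_2)=\range(A)$ since $W$ is positive definite and $A$ has full column rank, which is what makes the pseudoinverse behave nicely.

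Next, expand $M_3(\eta) = \sum_i w_i (\eta^\t \mean_i) \mean_i \mean_i^\t = A\,\diag(w_i \eta^\t \mean_i)\,A^\t$. Substituting into the definition of $M_{\GMM}(\eta)$ and factoring through $B$ gives
\begin{equation*}
M_{\GMM}(\eta) = B\,\diag(\eta^\t \mean_1,\dotsc,\eta^\t \mean_k)\,B^\t .
\end{equation*}
Because the $\eta^\t \mean_i$ are distinct and non-zero by assumption and $B$ has orthonormal columns, this is a bona fide symmetric eigendecomposition: the non-zero eigenvalues of $M_{\GMM}(\eta)$ are exactly the $\eta^\t \mean_i$, and the corresponding eigenvectors are (up to sign and permutation) the columns of $B$. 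That is, there is some permutation $\pi$ and signs $s_i \in \{\pm 1\}$ with $\lambda_i = \eta^\t \mean_{\pi(i)}$ and $v_i = s_i\,B e_{\pi(i)} = s_i \sqrt{w_{\pi(i)}}\,M_2^{\dag1/2}\mean_{\pi(i)}$. Applying $M_2^{1/2}$ and using that $M_2^{1/2} M_2^{\dag1/2}$ acts as the identity on $\range(A) \ni \mean_{\pi(i)}$ yields $M_2^{1/2} v_i = s_i\sqrt{w_{\pi(i)}}\,\mean_{\pi(i)}$, as claimed.

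The remaining recoveries are essentially bookkeeping. To remove the unknown sign/weight factor in $M_2^{1/2} v_i = s_i\sqrt{w_{\pi(i)}}\,\mean_{\pi(i)}$, I would take the inner product with $\eta$ to get $\eta^\t M_2^{1/2} v_i = s_i \sqrt{w_{\pi(i)}}\,\lambda_i$, and then solve for $s_i\sqrt{w_{\pi(i)}}$ and substitute, giving the stated formula $\mean_{\pi(i)} = (\lambda_i/\eta^\t M_2^{1/2} v_i)\,M_2^{1/2}v_i$. Finally, $\E[x] = A w$ and $M_1 = A\,(w_i \sigma_i^2)_i$; applying $A^\dag$ on the left uses $A^\dag A = I_k$ (by full column rank) to extract $w_i$ and $w_i \sigma_i^2$, from which $\sigma_i^2$ follows.

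The only delicate step is the pseudoinverse manipulation in the whitening argument, specifically verifying that $M_2^{1/2} M_2^{\dag1/2}$ and $M_2^{\dag1/2} M_2 M_2^{\dag1/2}$ reduce to the relevant projectors and act as the identity on the right subspaces. This hinges entirely on the rank identity $\range(M_2)=\range(A)$, which in turn relies on \refcond{rank}; everything else is a one-line algebraic consequence of the product $M_{\GMM}(\eta) = B\,\diag(\eta^\t\mean_i)\,B^\t$.
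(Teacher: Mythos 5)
Your proof is correct and takes essentially the same approach as the paper: both reduce $M_{\GMM}(\eta)$ to $B\,\diag(\eta^\t\mean_i)\,B^\t$ with $B = M_2^{\dag1/2}\Means\diag(w)^{1/2}$ having orthonormal columns (the paper establishes this via the thin SVD $\Means\diag(w)^{1/2}=USR^\t$, from which $B=UR^\t$, while you establish it via the projector/idempotence argument -- a minor stylistic difference). The recoveries of $\mean$, $\sigma^2$, and $w$ proceed identically.
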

\begin{proof}
By Theorem~\ref{thm:moments},
\begin{align*}
M_1 & = \Means \diag(\sigma_1^2, \sigma_2^2, \dotsc, \sigma_k^2) w , &
M_2 & = \Means \diag(w) \Means^\t , &
M_3(\eta) & = \Means \diag(w) D_1(\eta) \Means^\t ,
\end{align*}
where
$D_1(\eta)
:= \diag(\eta^\t \mean_1, \eta^\t \mean_2, \dotsc, \eta^\t \mean_k)$.

Let $USR^\t$ be the thin SVD of $\Means \diag(w)^{1/2}$ ($U \in \R^{d \times
k}$, $S \in \R^{k \times k}$, and $R \in \R^{k \times k}$), so $M_2 =
US^2U^\t$ and $M_2^{\dag1/2} = US^{-1}U^\t$ since $\Means \diag(w)^{1/2}$
has rank $k$ by assumption.
Also by assumption, the diagonal entries of $D_1(\eta)$ are distinct and
non-zero.
Therefore, every non-zero eigenvalue of the symmetric matrix
$M_{\GMM}(\eta) = UR^\t D_1(\eta) RU^\t$ has geometric multiplicity one.
Indeed, these non-zero eigenvalues $\lambda_i$ are the diagonal entries of
$D_1(\eta)$ (up to some permutation $\pi$ on $[k]$), and the corresponding
eigenvectors $v_i$ are the columns of $UR^\t$ up to signs:
\begin{equation*}
\lambda_i = \eta^\t \mean_{\pi(i)} \quad \text{and} \quad
v_i = s_i UR^\t e_{\pi(i)} .
\end{equation*}
Now, since
\begin{equation*}
M_2^{1/2} v_i = s_i \sqrt{w_{\pi(i)}} \mean_{\pi(i)} , \qquad
\frac{\lambda_i}{\eta^\t M_2^{1/2} v_i}
= \frac{\eta^\t \mean_{\pi(i)}}{s_i \sqrt{w_{\pi(i)}}
\eta^\t \mean_{\pi(i)}}
= \frac{1}{s_i \sqrt{w_{\pi(i)}}} ,
\end{equation*}
it follows that
\[
\mean_{\pi(i)} = \frac{\lambda_i}{\eta^\t M_2^{1/2} v_i} M_2^{1/2} v_i ,
\quad i \in [k] .
\]
The claims regarding $\sigma_i^2$ and $w_i$ are also evident from the
structure of $M_1$ and $\E[x] = \Means w$.
\end{proof}

An efficiently computable plug-in estimator can be derived from
\refthm{spherical}.
We state one such algorithm (called \alg) in \refapp{finite}; for
simplicity, we restrict to the case where the components share the same
common spherical covariance, \emph{i.e.}, $\sigma_1^2 = \sigma_2^2 = \dotsb
= \sigma_k^2 = \sigma^2$.
The following theorem provides a sample complexity bound for accurate
estimation of the component means.
Since only low-order moments are used, the sample complexity is polynomial
in the relevant parameters of the estimation problem (in particular, the
dimension $d$ and the number of mixing components $k$).
It is worth noting that the polynomial is quadratic in the inverse accuracy
parameter $1/\veps$; this owes to the fact that the empirical moments
converge to the population moments at the usual $n^{-1/2}$ rate as per the
central limit theorem.

\begin{theorem}[Finite sample bound] \label{thm:finite}
There exists a polynomial $\poly(\cdot)$ such that the following holds.
Let $M_2$ be the matrix defined in \refthm{spherical}, and $\sig_t[M_2]$
be its $t$-th largest singular value (for $t \in [k]$).
Let
$b_{\max} := \max_{i \in [k]} \|\mean_i\|_2$ and
$w_{\min} := \min_{i \in [k]} w_i$.
Pick any $\veps, \delta \in (0,1)$.
Suppose the sample size $n$ satisfies
\begin{align*}
n & \geq
\poly\Bigl(
d,k,1/\veps,\log(1/\delta),
1/w_{\min},
\sig_1[M_2]/\sig_k[M_2],
b_{\max}^2/\sig_k[M_2],
\sigma^2/\sig_k[M_2],
\Bigr)
.
\end{align*}
Then with probability at least $1-\delta$ over the random sample and the
internal randomness of the algorithm, there exists a permutation $\pi$ on
$[k]$ such that the $\{ \h\mean_i : i \in [k] \}$ returned by
\alg\ satisfy
\begin{align*}
\|\h\mean_{\pi(i)} - \mean_i\|_2
& \leq \Bigl( \|\mean_i\|_2 + \sqrt{\sig_1[M_2]} \Bigr) \veps
\end{align*}
for all $i \in [k]$.
\end{theorem}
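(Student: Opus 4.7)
My plan is to follow the standard matrix-perturbation template for spectral estimators: first show that the empirical moments used by \alg\ concentrate around the population moments of \refthm{moments} at a $1/\sqrt{n}$ rate, then propagate this perturbation through the pipeline of \refthm{spherical} (estimating $\bar\sigma^2$, estimating $v$, forming $\wh M_1,\wh M_2,\wh M_3$, computing $\wh M_2^{\dag1/2}$, diagonalizing $\wh M_{\GMM}(\eta)$, and reading off $\wh\mean_{\pi(i)}$). Each step will introduce a polynomial blow-up in the quantities listed in the hypothesis ($1/w_{\min}$, $\sig_1[M_2]/\sig_k[M_2]$, $b_{\max}^2/\sig_k[M_2]$, $\sigma^2/\sig_k[M_2]$, $d$, $k$), and the final $\veps$-level bound will follow by choosing $n$ large enough to absorb all these factors.

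First, I would establish concentration of the empirical moments. Since $x=\mean_h+z$ with $\|\mean_h\|_2\leq b_{\max}$ and $z\mid h\sim\N(0,\sigma^2 I)$, the entries of $x\otimes x$ and $x\otimes x\otimes x$ are sub-exponential with parameters polynomial in $b_{\max}$, $\sigma$, and $d$. Applying vector/matrix Bernstein (plus a Gaussian tail bound to truncate $\|z\|_2$) yields $\|\wh{\E}[x]-\E[x]\|_2$, $\|\wh{\E}[x\otimes x]-\E[x\otimes x]\|_2$, and $\|\wh{\E}[x^{\otimes 3}]-\E[x^{\otimes 3}]\|_\F$ each bounded by $\poly(d,b_{\max},\sigma)\cdot\sqrt{\log(1/\delta)/n}$ with probability $1-\delta$.

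Next, I would handle the nontrivial ``derived'' quantities. The estimate $\wh{\bar\sigma^2}$ is the smallest eigenvalue of the empirical covariance, and its error is controlled by Weyl's inequality; the estimate $\wh v$ of the noise-subspace eigenvector is controlled by Davis--Kahan, with an eigengap given precisely by $\sig_k[M_2]\cdot w_{\min}$ (the smallest eigenvalue of $\sum_i w_i(\mean_i-\bar\mean)\otimes(\mean_i-\bar\mean)$ when \refcond{rank} holds), which is why the $\sig_k[M_2]$ and $w_{\min}$ dependencies appear. This lets me bound $\|\wh M_1-M_1\|_2$, $\|\wh M_2-M_2\|_2$, and $\|\wh M_3-M_3\|_\F$ by $\poly(\cdot)/\sqrt{n}$. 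For $\wh M_2^{\dag1/2}$, I would use a perturbation bound of the form $\|\wh M_2^{\dag1/2}-M_2^{\dag1/2}\|_2\lesssim\|\wh M_2-M_2\|_2/\sig_k[M_2]^{3/2}$ (valid once the perturbation is small enough that $\wh M_2$ has the same rank-$k$ top subspace, which in turn requires the initial concentration error to be $\lesssim\sig_k[M_2]$). Then $\|\wh M_{\GMM}(\eta)-M_{\GMM}(\eta)\|_2$ is controlled by combining these bounds with $\|\eta\|_2=1$.

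The key remaining step is the eigendecomposition of $\wh M_{\GMM}(\eta)$. From \refthm{spherical}, $M_{\GMM}(\eta)$ has $k$ distinct nonzero eigenvalues $\eta^\t\mean_{\pi(i)}$ on the $k$-dimensional range of $UR^\t$; I need a lower bound on the minimum gap $\min_{i\neq j}|\eta^\t(\mean_i-\mean_j)|$. This is where I would use the internal randomness: for $\eta$ drawn uniformly from the unit sphere in $\R^d$, a standard Gaussian anti-concentration plus union bound gives $\min_{i\neq j}|\eta^\t(\mean_i-\mean_j)|\geq\Omega(\min_{i\neq j}\|\mean_i-\mean_j\|_2/(\sqrt{d}\,k^2))$ with probability $1-\delta$, and $\min_{i\neq j}\|\mean_i-\mean_j\|_2$ is lower bounded in terms of $\sqrt{\sig_k[M_2]/w_{\min}}$ (again from \refcond{rank}). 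Davis--Kahan then yields $\|\wh v_i-s_i v_i\|_2\leq\poly(\cdot)\|\wh M_{\GMM}(\eta)-M_{\GMM}(\eta)\|_2$, and Weyl gives the corresponding eigenvalue bound. Finally, plugging these into the formula $\wh\mean_{\pi(i)}=(\wh\lambda_i/(\eta^\t\wh M_2^{1/2}\wh v_i))\wh M_2^{1/2}\wh v_i$ and tracking through the triangle inequality delivers the stated bound, with the $\|\mean_i\|_2+\sqrt{\sig_1[M_2]}$ factor arising from the magnitude of the quantities being multiplied.

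The main obstacle I expect is the propagation of error through $\wh M_2^{\dag1/2}$ and the subsequent eigendecomposition: both steps amplify errors by inverse powers of $\sig_k[M_2]$ (and by the gap of $M_{\GMM}(\eta)$), so one must carefully verify that the concentration of raw moments, which depends only on the absolute scales $b_{\max}$ and $\sigma$, suffices to dominate all these amplifications once $n$ is taken polynomial in the listed quantities. All remaining bookkeeping is routine manipulation of triangle inequalities and is exactly the kind of analysis carried out in \citet{AHK12}.
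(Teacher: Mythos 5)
Your plan captures the right high-level template (moment concentration at rate $n^{-1/2}$, perturbation propagation through the pipeline of \refthm{spherical}, anti-concentration for the random projection, Weyl/Davis--Kahan for the eigendecomposition), but there is one genuine gap and several consequential mismatches with what \alg\ actually does.

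The gap is the $\delta$-dependence. You propose to draw the random direction \emph{once} and invoke anti-concentration so that $\min_{i\neq j}|\eta^\t(\mean_i - \mean_j)|$ is bounded below with probability $1-\delta$; any such bound (\emph{e.g.}, \reflem{anticonc}) necessarily scales the gap by a factor of $\delta$, so to dominate the perturbation you would need $n \gtrsim \poly(1/\delta)$. The theorem asserts $n \gtrsim \poly(\log(1/\delta))$. \alg\ achieves the logarithmic dependence by drawing $\theta$ a total of $t = \lceil\log_2(1/\delta)\rceil$ times with a \emph{constant}-probability gap $\gamma$ that does not depend on $\delta$ (\reflem{rand-sep}), and retaining the trial with the largest observed empirical gap (\reflem{best-gap}). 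Without this repeat-and-retain step, your argument proves a weaker theorem than the one stated.

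Beyond that, your plan analyzes a different procedure than the one in the theorem. \alg\ never forms $\wh M_{\GMM}(\eta) \in \R^{d\times d}$ or estimates a noise eigenvector $v$ (the theorem is for the common-variance case, so $\h\sigma^2$ is just the $k$-th largest eigenvalue of the empirical covariance, and $\wh M_1$ is not needed). Instead, \alg\ forms a $d\times k$ whitening matrix $\wh W := \wh U(\wh U^\t\wh M_2\wh U)^{\dag 1/2}$ and diagonalizes the $k\times k$ matrix $\wh M_3[\wh W,\wh W,\wh W\theta]$ with $\theta \in \R^k$. Working in this whitened space buys two things your route forgoes: the whitened means $W^\t\Means\diag(w)^{1/2}e_i$ are orthonormal in the population, so the eigenvalue gap is controlled cleanly (\reflem{tensor-structure}, \reflem{rand-sep}); and the third-order moment concentration only needs to hold after projecting by $\wh W$, giving a $(k+\log(k/\delta))^3$ dependence rather than the $\poly(d)$ one would pay for concentrating the raw $d\times d\times d$ empirical tensor in operator norm. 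The paper also splits the sample in two halves so that $\wh W$ is independent of the third-order moment estimate --- a small but necessary technicality. Lastly, the lower bound $\min_{i\neq j}\|\mean_i-\mean_j\|_2 \gtrsim \sqrt{\sig_k[M_2]/w_{\min}}$ you state is not correct; from $AA^\t \succeq M_2/w_{\max}$ one gets $\sqrt{2\sig_k[M_2]/w_{\max}}$, which is weaker (but still polynomially controlled, so this is a slip rather than a failure).
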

The proof of \refthm{finite} is given in \refapp{finite}.
It is also easy to obtain accuracy guarantees for estimating $\sigma^2$ and
$w$.
The role of \refcond{rank} enters by observing that $\sig_k[M_2] = 0$ if
either $\rank(\Means) < k$ or $w_{\min} = 0$, as $M_2 = \Means \diag(w)
\Means^\t$.
The sample complexity bound then becomes trivial in this case, as the bound
grows with $1/\sig_k[M_2]$ and $1/w_{\min}$.
Finally, we also note that \alg\ is just one (easy to state) way to obtain
an efficient algorithm based on the structure in \refthm{moments}.
It is also possible to use, for instance, simultaneous diagonalization
techniques~\citep{BBM93} or orthogonal tensor decompositions~\citep{Tensor}
to extract the parameters from (estimates of) $M_2$ and $M_3$; these
alternative methods are more robust to sampling error, and are therefore
recommended for practical implementation.

\section{Discussion}
\label{section:discussion}

\if 0
\paragraph{Known non-spherical noise.}

Suppose that the noise $z \sim \N(0,\Sig)$ is a mean zero multivariate
Gaussian random vector with arbitrary covariance $\Sig$, but that $\Sig$ is
known \emph{a priori}.
It is straightforward to show that a modification of the estimator from
\refthm{spherical} recovers the means and the mixing weights; we simply
replace the definitions of $M_2$ and $M_3$ in \refthm{spherical} with
\begin{align*}
M_2 & := \E[ x x^\t ] - \Sig , \\
M_3(\eta) & := \E[ (\eta^\t x) x x^\t ] - \bigl(
\E[ \eta^\t x ] \Sig + \Sig \eta \E[x]^\t + \E[x] \eta^\t \Sig
\bigr) .
\end{align*}
We leave open the problem of estimating $\Sig$ (or otherwise handling more
general noise models).
\fi

\paragraph{Multi-view methods and a simpler algorithm in higher
dimensions.}

Some previous work of the authors on moment-based estimators for the
Gaussian mixture model relies on a non-degenerate multi-view
assumption~\citep{AHK12}.
In this work, it is shown that if each mixture component $i$ has an
axis-aligned covariance $\Sig_i := \diag(\sigma_{1,i}^2, \sigma_{2,i}^2,
\dotsc, \sigma_{d,i}^2)$, then under some additional mild assumptions
(which ultimately require $d > k$), a moment-based method can be used to
estimate the model parameters.
The idea is to partition the coordinates $[d]$ into three groups, inducing
multiple ``views'' $x = (x_1,x_2,x_3)$ with each $x_t \in \R^{d_t}$ for
some $d_t \geq k$ such that $x_1$, $x_2$, and $x_3$ are conditionally
independent given $h$.
When the matrix of conditional means $A_t := [ \E[x_t | h = 1] | \E[x_t | h
= 2 ] | \dotsb | \E[x_t | h = k] ] \in \R^{d_t \times k}$ for each view $t
\in \{1,2,3\}$ has rank $k$, then an efficient technique similar to that
described in \refthm{spherical} will recover the parameters.
Therefore, the problem is reduced to partitioning the coordinates so that
the resulting matrices $A_t$ have rank $k$.

In the case where each component covariance is spherical ($\Sig_i =
\sigma_i^2 I$), we may simply apply a random rotation to $x$ before
(arbitrarily) splitting into the three views.
Let $\tl{x} := \Theta x$ for a random orthogonal matrix $\Theta \in \R^{d
\times d}$, and partition the coordinates so that $\tl{x} = (\tl{x}_1,
\tl{x}_2, \tl{x}_3)$ with $\tl{x}_t \in \R^{d_t}$ and $d_t \geq k$.
By the rotational invariance of the multivariate Gaussian distribution, the
distribution of $\tl{x}$ is still a mixture of spherical Gaussians, and
moreover, the matrix of conditional means $\tl{A}_t := [ \E[\tl{x}_t | h =
1] | \E[\tl{x}_t | h = 2 ] | \dotsb | \E[\tl{x}_t | h = k] ] \in \R^{d_t
\times k}$ for each view $\tl{x}_t$ has rank $k$ with probability $1$.
To see this, observe that a random rotation in $\R^d$ followed by a
restriction to $d_t$ coordinates is simply a random projection from $\R^d$
to $\R^{d_t}$, and that a random projection of a linear subspace of
dimension $k$ (in particular, the range of $A$) to $\R^{d_t}$ is almost
surely injective as long as $d_t \geq k$.
Therefore it is sufficient to require $d \geq 3k$ so that it is possible to
split $\tl{x}$ into three views, each of dimension $d_t \geq k$.
To guarantee that the $k$-th largest singular value of each $\tl{A}_t$ is
bounded below in terms of the $k$-th largest singular value of $A$ (with
high probability), we may require $d$ to be somewhat larger: $O(k \log k)$
certainly works (see \refapp{incoherence}), and we conjecture $c \cdot k$
for some $c > 3$ is in fact sufficient.

\paragraph{Spectral decomposition approaches for ICA.}

The Gaussian mixture model shares some similarities to a standard
model for independent component analysis
(ICA)~\citep{Comon94,CC96,HO00,Comon:book}.
Here, let $h\in \R^k$ be a random vector with independent entries, and let
$z \in \R^k$ be multivariate Gaussian random vector.
We think of $h$ as an unobserved signal and $z$ as noise.
The observed random vector is
\begin{align*}
x & := \Means h + z
\end{align*}
for some $\Means \in \R^{k \times k}$, where $h$ and $z$ are assumed to be
independent.
(For simplicity, we only consider square $\Means$, although it is easy to
generalize to $\Means \in \R^{d \times k}$ for $d \geq k$.)

In contrast to this ICA model, the spherical Gaussian mixture model is one
where $h$ would take values in $\{e_1,e_2,\dotsc,e_k\}$, and the covariance
of $z$ (given $h$) is spherical.

For ICA, a spectral decomposition approach related to the one described
in \refthm{spherical} can be used to estimate the columns of $\Means$ (up to
scale), without knowing the noise covariance $\E[zz^\t]$.
Such an estimator can be obtained from \refthm{ica} using techniques
commonplace in the ICA literature; its proof is given in \refapp{ica} for
completeness.

\begin{theorem} \label{thm:ica}
In the ICA model described above, assume $\E[h_i] = 0$, $\E[h_i^2] = 1$,
and $\kappa_i := \E[h_i^4] - 3 \neq 0$ (\emph{i.e.}, the excess kurtosis is
non-zero), and that $\Means$ is non-singular.
Define $f \colon \R^k \to \R$ by
\begin{align*}
f(\eta) & := {12}^{-1} \bigl( \mm_4(\eta) - 3 \mm_2(\eta)^2 \bigr)
\end{align*}
where $\mm_p(\eta) := \E[ (\eta^\t x)^p ]$.
Suppose $\phi \in \R^k$ and $\psi \in \R^k$ are such that $\frac{(\phi^\t
\mean_1)^2}{(\psi^\t \mean_1)^2}, \frac{(\phi^\t \mean_2)^2}{(\psi^\t
\mean_2)^2}, \dotsc, \frac{(\phi^\t \mean_k)^2}{(\psi^\t \mean_k)^2} \in
\R$ are distinct.
Then the matrix
\begin{align*}
M_{\ICA}(\phi,\psi)
& := \bigl( \grad^2f(\phi) \bigr) \bigl( \grad^2f(\psi) \bigr)^{-1}
\end{align*}
is diagonalizable; the eigenvalues are $\frac{(\phi^\t \mean_1)^2}{(\psi^\t
\mean_1)^2}, \frac{(\phi^\t \mean_2)^2}{(\psi^\t \mean_2)^2}, \dotsc,
\frac{(\phi^\t \mean_k)^2}{(\psi^\t \mean_k)^2}$ and each have geometric
multiplicity one, and the corresponding eigenvectors are $\mean_1, \mean_2,
\dotsc, \mean_k$ (up to scaling and permutation).
\end{theorem}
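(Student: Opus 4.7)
The plan is to identify $12 f(\eta)$ with the fourth cumulant of the scalar random variable $\eta^\t x$, and then read off the structure of $\grad^2 f$ in the basis given by the columns of $\Means$. First I would write
\[
\eta^\t x = \sum_{i=1}^k (\mean_i^\t \eta)\, h_i + \eta^\t z,
\]
exhibiting $\eta^\t x$ as a sum of $k+1$ mutually independent random variables (the $h_i$ are independent by hypothesis, and $z$ is independent of $h$). Additivity of the fourth cumulant across independent summands, together with the homogeneity $\kappa_4(cY)=c^4\kappa_4(Y)$, gives $\kappa_4(\eta^\t x)=\sum_{i=1}^k (\mean_i^\t \eta)^4 \kappa_i + \kappa_4(\eta^\t z)$, and the Gaussian term vanishes because all cumulants of order $\ge 3$ of a univariate Gaussian are zero. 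Since $\E[\eta^\t x]=0$, the fourth cumulant is exactly $\mm_4(\eta)-3\mm_2(\eta)^2$, yielding the closed form
\[
f(\eta) \;=\; \frac{1}{12}\sum_{i=1}^k \kappa_i\,(\mean_i^\t \eta)^4.
\]

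I would then differentiate twice term-by-term. Since $\grad^2 (\mean_i^\t \eta)^4 = 12\,(\mean_i^\t \eta)^2\,\mean_i \mean_i^\t$, one obtains
\[
\grad^2 f(\eta) \;=\; \sum_{i=1}^k \kappa_i\,(\mean_i^\t \eta)^2\,\mean_i \mean_i^\t \;=\; \Means\, D(\eta)\, \Means^\t,
\]
where $D(\eta):=\diag\bigl(\kappa_1(\mean_1^\t \eta)^2,\dotsc,\kappa_k(\mean_k^\t \eta)^2\bigr)$. The distinctness hypothesis on the ratios $(\phi^\t \mean_i)^2/(\psi^\t \mean_i)^2$ forces each $\psi^\t \mean_i$ to be non-zero; combined with non-singularity of $\Means$ and the assumption $\kappa_i\ne 0$, this makes $\grad^2 f(\psi)$ invertible. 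Hence
\[
M_{\ICA}(\phi,\psi) \;=\; \Means D(\phi)\Means^\t\, \Means^{-\t} D(\psi)^{-1} \Means^{-1} \;=\; \Means\, \Lambda\, \Means^{-1},
\]
where $\Lambda:=D(\phi)D(\psi)^{-1}$ is diagonal with entries $(\phi^\t \mean_i)^2/(\psi^\t \mean_i)^2$ (the $\kappa_i$ factors cancel). The distinctness hypothesis turns this conjugation into a genuine eigendecomposition with simple spectrum, yielding the claimed eigenvalues, each of geometric multiplicity one, and identifying the eigenvectors with the columns of $\Means$ (up to scaling and permutation).

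The only genuinely delicate step is the cumulant identity producing the closed form for $f$; once this is in hand, the rest of the argument is the same linear-algebra manipulation used to prove \refthm{spherical}, with $M_2$ and $M_3(\eta)$ replaced by the Hessians $\grad^2 f(\psi)$ and $\grad^2 f(\phi)$ of the (normalized) fourth cumulant of $\eta^\t x$. In particular, the vanishing of the Gaussian fourth cumulant is what lets us dispense with any knowledge of $\E[zz^\t]$, which is the key contrast with the GMM construction, where $M_2$ and $M_3$ had to be explicitly deconvolved using the noise variance $\bar\sigma^2$.
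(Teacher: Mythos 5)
Your proof is correct and reaches exactly the same closed form $f(\eta) = \frac{1}{12}\sum_{i=1}^k \kappa_i\,(\eta^\t\mean_i)^4$, the same Hessian factorization $\grad^2 f(\eta) = \Means\, D(\eta)\, \Means^\t$, and the same final conjugation $M_{\ICA}(\phi,\psi) = \Means\, D_2(\phi)D_2(\psi)^{-1}\,\Means^{-1}$ as the paper. The only real difference is how you obtain the closed form for $f$. The paper asserts (without derivation) the moment identity $\mm_4(\eta) = \E[(\eta^\t\Means h)^4] - 3\E[(\eta^\t\Means h)^2]^2 + 3\mm_2(\eta)^2$, i.e.\ that the fourth cumulants of $\eta^\t x$ and $\eta^\t\Means h$ coincide, and then expands $\E[(\eta^\t\Means h)^4]$ term by term to extract $\sum_i \kappa_i(\eta^\t\mean_i)^4 + 3\E[(\eta^\t\Means h)^2]^2$. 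You instead invoke the cumulant calculus directly: additivity of $\kappa_4$ across all $k+1$ independent summands $(\mean_i^\t\eta)h_i$ and $\eta^\t z$, degree-four homogeneity $\kappa_4(cY)=c^4\kappa_4(Y)$, and the vanishing of Gaussian cumulants of order $\geq 3$. This buys a one-line derivation that makes transparent why the noise covariance $\E[zz^\t]$ never enters, which the paper's expansion leaves implicit. The rest of your argument — using the distinctness of the ratios to conclude $\psi^\t\mean_i\neq 0$ (hence invertibility of $\grad^2 f(\psi)$), simplifying $M_{\ICA}$ to a conjugation of a diagonal matrix with distinct entries, and reading off the simple spectrum and eigenvectors — is precisely the paper's linear-algebra closing. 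One small point to keep explicit if you write this up: the identification $\kappa_4(\eta^\t x) = \mm_4(\eta) - 3\mm_2(\eta)^2$ uses $\E[\eta^\t x]=0$, which relies on $\E[z]=0$; the paper leaves this implicit as well, so you should state it as part of the model.
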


Again, choosing $\phi$ and $\psi$ as random unit vectors ensures the
distinctness assumption is satisfied almost surely, and a finite sample
analysis can be given using standard matrix perturbation
techniques~\citep{AHK12}.
A number of related deterministic algorithms based on algebraic techniques
are discussed in the text of \citet{Comon:book}.
Recent work of \citet{AroraICA} provides a finite sample complexity
analysis for an efficient estimator based on local search.

\paragraph{Non-degeneracy.}

The non-degeneracy assumption (\refcond{rank}) is quite natural, and its
has the virtue of permitting tractable and consistent estimators.
Although previous work has typically tied it with additional assumptions,
this work shows that they are largely unnecessary.

One drawback of \refcond{rank} is that it prevents the straightforward
application of these techniques to certain problem domains (\emph{e.g.},
automatic speech recognition (ASR), where the number of mixture components
is typically enormous, but the dimension of observations is relatively
small; alternatively, the span of the means has dimension $< k$).
To compensate, one may require multiple views, which are granted by a
number of models, including hidden Markov models used in ASR
\citep{HKZ12,AHK12}, and combining these views in a tensor product fashion
\citep{AMR09}.
This increases the complexity of the estimator, but that may be inevitable
as estimation for certain non-singular models is conjectured to be
computationally intractable \citep{MR06}.

\subsection*{Acknowledgements}
We thank Dean Foster and Anima Anandkumar for helpful insights.
We also thank Rong Ge and Sanjeev Arora for discussions regarding their
recent work on ICA.

\bibliographystyle{plainnat}
\bibliography{mog}

\appendix

\section{Connection to independent component analysis}
\label{section:ica}

\begin{proof}[Proof of \refthm{ica}]
It can be shown that
\begin{align*}
\mm_2(\eta) & = \E[ (\eta^\t \Means h)^2 ] + \E[ (\eta^\t z)^2 ] ,
& \mm_4(\eta) & = \E[ (\eta^\t \Means h)^4 ] - 3 \E[ (\eta^\t \Means h)^2
]^2 + 3\mm_2(\eta)^2 .
\end{align*}
By the assumptions,
\begin{align*}
\E[(\eta^\t \Means h)^4]
& = \sum_{i=1}^k (\eta^\t \mean_i)^4 \E[ h_i^4 ] + 3 \sum_{i \neq j}
(\eta^\t \mean_i)^2 (\eta^\t \mean_j)^2 \\
& = \sum_{i=1}^k \kappa_i (\eta^\t \mean_i)^4 + 3 \sum_{i,j} (\eta^\t
\mean_i)^2 (\eta^\t \mean_j)^2 \\
& = \sum_{i=1}^k \kappa_i (\eta^\t \mean_i)^4 + 3 \E[ (\eta^\t \Means h)^2 ]^2 ,
\end{align*}
and therefore
\begin{align*}
f(\eta)
& = {12}^{-1} \bigl(
\E[ (\eta^\t \Means h)^4 ] - 3 \E[ (\eta^\t \Means h)^2 ]^2
\bigr)
= {12}^{-1} \sum_{i=1}^k \kappa_i (\eta^\t \mean_i)^4 .
\end{align*}
The Hessian of $f$ is given by
\begin{align*}
\grad^2f(\eta)
& = \sum_{i=1}^k \kappa_i (\eta^\t \mean_i)^2 \mean_i\mean_i^\t .
\end{align*}
Define the diagonal matrices
\begin{align*}
K
& := \diag(\kappa_1,\kappa_2,\dotsc,\kappa_k) ,
& D_2(\eta)
& := \diag((\eta^\t \mean_1)^2, (\eta^\t \mean_2)^2, \dotsc,
(\eta^\t \mean_k)^2)
\end{align*}
and observe that
\begin{align*}
\grad^2f(\eta) & = \Means K D_2(\eta) \Means^\t .
\end{align*}
By assumption, the diagonal entries of $D_2(\phi) D_2(\psi)^{-1}$ are
distinct, and therefore
\begin{align*}
M_{\ICA}(\phi,\psi)
& = \bigl( \grad^2f(\phi) \bigr) \bigl( \grad^2f(\psi) \bigr)^{-1}
= \Means D_2(\phi) D_2(\psi)^{-1} \Means^{-1}
\end{align*}
is diagonalizable, and every eigenvalue has geometric multiplicity one.
\end{proof}

\section{Incoherence and random rotations}
\label{section:incoherence}

The multi-view technique of~\citet{AHK12} can be used to estimate mixtures
of product distributions, which include, as special cases, mixtures of
Gaussians with axis-aligned covariances $\Sig_i = \diag(\sigma_{1,i}^2,
\sigma_{2,i}^2, \dotsc, \sigma_{d,i}^2)$.
Spherical covariances $\Sig_i = \sigma_i^2 I$ are, of course, also
axis-aligned.
The idea is to randomly partition the coordinates $[d]$ into three groups,
inducing multiple ``views'' $x = (x_1,x_2,x_3)$ with each $x_t \in
\R^{d_t}$ for some $d_t \geq k$ such that $x_1$, $x_2$, and $x_3$ are
conditionally independent given $h$.
When the matrix of conditional means $A_t := [ \E[x_t | h = 1] | \E[x_t | h
= 2 ] | \dotsb | \E[x_t | h = k] ] \in \R^{d_t \times k}$ for each view $t
\in \{1,2,3\}$ has rank $k$, then an efficient technique similar to that
described in \refthm{spherical} will recover the parameters~\citep[for
details, see][]{AHK12,Tensor}.

\citet{AHK12} show that if $A$ has rank $k$ and also satisfies a mild
incoherence condition, then a random partitioning guarantees that each
$A_t$ has rank $k$, and lower-bounds the $k$-th largest singular value of
each $A_t$ by that of $A$.
The condition is similar to the spreading condition of \citet{CR08}.

Define $\coherence(A) := \max_{i \in [d]} \{ e_i^\t \Pi_A e_i \}$ to be the
largest diagonal entry of the ortho-projector $\Pi_A$ to the range of $A$.
When $A$ has rank $k$, we have $\coherence(A) \in [k/d,1]$; it is maximized
when $\range(A) = \Span\{e_1,e_2,\dotsc,e_k\}$ and minimized when the range
is spanned by a subset of the Hadamard basis of cardinality $k$.
Roughly speaking, if the matrix of conditional means has low coherence,
then its full-rank property is witnessed by many partitions of $[d]$; this
is made formal in the following lemma.
\begin{lemma} \label{lem:product}
Assume $A$ has rank $k$ and that $\coherence(A) \leq (\veps^2/6) /
\ln(3k/\delta)$ for some $\veps,\delta \in (0,1)$.
With probability at least $1-\delta$, a random partitioning of the
dimensions $[d]$ into three groups (for each $i \in [d]$, independently
pick $t \in \{1,2,3\}$ uniformly at random and put $i$ in group $t$)
has the following property.
For each $t \in \{1,2,3\}$, the matrix $A_t$ obtained by selecting the rows
of $A$ in group $t$ has full column rank, and the $k$-th largest singular
value of $A_t$ is at least $\sqrt{(1-\veps)/3}$ times that of $A$.
\end{lemma}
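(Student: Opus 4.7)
The plan is to reduce the claim to a lower bound on the least singular value of a random row-subset of an orthonormal basis for the range of $A$, and then invoke a matrix Chernoff inequality. Write the thin SVD $A = USV^\t$ with $U \in \R^{d\times k}$ having orthonormal columns, $S \in \R^{k \times k}$ diagonal and positive, and $V \in \R^{k \times k}$ orthogonal, and set $u_i := U^\t e_i \in \R^k$. Then $\Pi_A = UU^\t$, so $\coherence(A) = \max_i \|u_i\|_2^2$, and $\sum_{i=1}^d u_i u_i^\t = U^\t U = I_k$.

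For a random partition, let $I_t \subseteq [d]$ be the (random) set of rows assigned to group $t$, so that $A_t = U_{I_t} S V^\t$, where $U_{I_t}$ denotes the row-restriction of $U$ to $I_t$. For any unit $y \in \R^k$, $\|A_t^\t A_t y\|$-type computations give
\begin{equation*}
\sigma_k[A_t]^2
\;\geq\; \lambda_\min\bigl(U_{I_t}^\t U_{I_t}\bigr)\,\sigma_k[A]^2
\;=\; \lambda_\min\Bigl(\textstyle\sum_{i \in I_t} u_i u_i^\t\Bigr)\,\sigma_k[A]^2,
\end{equation*}
since $V$ is orthogonal and $S$ is diagonal with smallest entry $\sigma_k[A]$. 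Thus it suffices to show that, with probability $\geq 1 - \delta/3$, $\lambda_\min(\sum_{i \in I_t} u_i u_i^\t) \geq (1-\veps)/3$; a union bound over $t \in \{1,2,3\}$ then yields the lemma, and the full-column-rank claim follows because $\sigma_k[A] > 0$.

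For the concentration step, write $X_i := \mathds{1}[i \in I_t]\,u_i u_i^\t$. The $X_i$ are independent PSD matrices with $\lambda_\max(X_i) \leq \|u_i\|_2^2 \leq \coherence(A)$, and
\begin{equation*}
\E\Bigl[\textstyle\sum_{i=1}^d X_i\Bigr]
\;=\; \tfrac{1}{3}\sum_{i=1}^d u_i u_i^\t
\;=\; \tfrac{1}{3} I_k,
\end{equation*}
so $\mu_\min = 1/3$. The matrix Chernoff inequality of Tropp then gives
\begin{equation*}
\Pr\Bigl[\lambda_\min\bigl(\textstyle\sum_i X_i\bigr) \leq \tfrac{1-\veps}{3}\Bigr]
\;\leq\; k\,\exp\!\bigl(-\veps^2/(6\,\coherence(A))\bigr),
\end{equation*}
which is at most $\delta/3$ precisely when $\coherence(A) \leq (\veps^2/6)/\ln(3k/\delta)$, matching the hypothesis.

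There is no real obstacle here beyond bookkeeping; the substantive content is encapsulated in Tropp's matrix Chernoff bound applied to the rank-one PSD summands $u_i u_i^\t$. The one place to be mildly careful is keeping the reduction $\sigma_k[A_t]^2 \geq \lambda_\min(U_{I_t}^\t U_{I_t})\,\sigma_k[A]^2$ tight (so that the $1/3$ factor inside the square root is the only loss), and matching the constants in the tail bound so that $(1-\veps)/3$ rather than $1/3$ appears in the conclusion exactly as stated.
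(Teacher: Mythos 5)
The paper does not actually prove this lemma---it states it and cites \citet{AHK12}---so there is no in-paper proof to compare against. Your argument is correct and is the natural one: write $A = USV^\t$ so that $\coherence(A) = \max_i \|U^\t e_i\|_2^2$ and $\sum_i u_iu_i^\t = I_k$, observe that $\sigma_k[A_t]^2 \geq \lambda_{\min}(U_{I_t}^\t U_{I_t})\,\sigma_k[A]^2$, and then apply Tropp's matrix Chernoff lower-tail bound to the independent rank-one PSD summands $\mathds{1}[i \in I_t]\,u_iu_i^\t$ with uniform spectral ceiling $\coherence(A)$ and $\mu_{\min} = 1/3$, finishing with a union bound over the three groups. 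The constants in the tail bound $k\exp(-\veps^2/(6\coherence(A)))$ match the stated hypothesis $\coherence(A) \leq (\veps^2/6)/\ln(3k/\delta)$ exactly, and the full-rank claim follows since $\sigma_k[A] > 0$.
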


For a mixture of spherical Gaussians, one can randomly rotate $x$ before
applying the random coordinate partitioning.
This is because if $\Theta \in \R^{d \times d}$ is an orthogonal matrix,
then the distribution of $\tl{x} := \Theta x$ is also a mixture of
spherical Gaussians.
Its matrix of conditional means is given by $\tl{A} := \Theta A$.
The following lemma implies that multiplying a tall matrix $A$ by a random
rotation $\Theta$ causes the product to have low coherence.
\begin{lemma}[\citealp{ridge}] \label{lem:rotation}
Let $A \in \R^{d \times k}$ be a fixed matrix with rank $k$, and let
$\Theta \in \R^{d \times d}$ be chosen uniformly at random among all
orthogonal $d \times d$ matrices.
For any $\eta \in (0,1)$, with probability at least $1 - \eta$, the matrix
$\tl{A} := \Theta A$ satisfies
\[
\coherence(\tl{A}) \leq
\frac{k + \sqrt{2k \ln(d/\eta)} + 2\ln(d/\eta)}
{d \bigl( 1 - 1/(4d) - 1/(360d^3) \bigr)^2}
.
\]
\end{lemma}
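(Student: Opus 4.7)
The plan is to reduce bounding $\coherence(\tl{A})$ to a tail bound on $\|\Pi_A u\|^2$ for a single uniformly random direction $u \in S^{d-1}$, and then combine a chi-square tail inequality for the numerator with a sharp Gamma function inequality for the denominator. Because $\Theta$ is orthogonal, $\range(\tl{A}) = \Theta \cdot \range(A)$, so $\Pi_{\tl{A}} = \Theta \Pi_A \Theta^\t$, and consequently
\[
e_i^\t \Pi_{\tl{A}} e_i \;=\; \bigl\|\Pi_A (\Theta^\t e_i)\bigr\|^2 .
\]
Each $u_i := \Theta^\t e_i$ is marginally uniform on the sphere $S^{d-1}$ (the $u_i$ are correlated across $i$, but this is irrelevant for the final union bound). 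So the task reduces to a tail bound on a single $\|\Pi_A u\|^2$ together with a union bound over $i \in [d]$.

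For such a $u$, I would use the classical representation $u \stackrel{d}{=} g/\|g\|$ with $g \sim \N(0, I_d)$, noting that the radius $\|g\|$ is independent of the direction $u$. Then $\|\Pi_A u\|^2 = \|\Pi_A g\|^2 / \|g\|^2$, with $\|\Pi_A g\|^2 \sim \chi^2_k$ (by rotational invariance of $g$) and $\|g\|^2 \sim \chi^2_d$. For the numerator, an appropriate Bernstein-type chi-square upper-tail inequality gives $\|\Pi_A g\|^2 \leq k + \sqrt{2k s} + 2s$ with probability at least $1 - e^{-s}$; choosing $s = \ln(d/\eta)$ and union-bounding over $i \in [d]$ yields the numerator in the stated bound with total failure probability at most $\eta$.

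The subtle step is the denominator. Because the bound in the lemma is deterministic, one cannot use the naive probabilistic lower bound $\|g\|^2 \gtrsim d - O(\sqrt{d \ln(d/\eta)})$, which would both introduce a second failure event and give the wrong form of the denominator. Instead, I would exploit the independence of $u$ and $\|g\|$: since $\|\Pi_A u\|^2$ depends on $g$ only through $u$, one can (with care) replace the random $\|g\|^2$ appearing in the ratio by the deterministic quantity $\E[\|g\|]^2 = 2\bigl(\Gamma((d+1)/2)/\Gamma(d/2)\bigr)^2$, and a sharp Wendel/Stirling-type Gamma inequality
\[
\frac{\Gamma((d+1)/2)}{\Gamma(d/2)} \;\geq\; \sqrt{d/2}\;\Bigl(1 - \tfrac{1}{4d} - \tfrac{1}{360 d^3}\Bigr)
\]
then produces exactly the denominator $d(1 - 1/(4d) - 1/(360 d^3))^2$. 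The main obstacle, as I see it, is justifying this substitution rigorously --- moving from the intuitive ``replace $\|g\|^2$ with its deterministic surrogate'' to a precise pointwise inequality on the random ratio --- since a direct probabilistic approach to the radius would inevitably pick up an extra $\eta$-dependent term in the denominator and fail to match the lemma's clean form.
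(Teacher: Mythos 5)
The paper does not prove this lemma; it is imported verbatim from the reference \texttt{ridge}, so there is no in-paper proof to compare against. Your overall outline is the right one — reduce each diagonal entry of $\Pi_{\tl A}$ to $\|\Pi_A u\|^2$ for a uniform $u$ on $S^{d-1}$, realize $u = g/\|g\|$ with $g \sim \N(0,I_d)$, apply a $\chi^2_k$ upper tail to $\|\Pi_A g\|^2$, a Gamma-ratio lower bound to $\E[\|g\|]^2$, and union-bound over $i \in [d]$. But the step you flag as "the main obstacle" is not a loose end; it is a genuine gap, and as stated the substitution is simply false. For nonnegative independent $X,Y$ it is \emph{not} true that $\Pr[X > t] \le \Pr[XY > t\,\E[Y]]$ (take $X$ two-valued and $Y$ any nondegenerate distribution: the right side can be strictly smaller), so one cannot "replace $\|g\|^2$ by $\E[\|g\|]^2$" pointwise or even distributionally for free, and the probabilistic fallback you mention would indeed corrupt the denominator.

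The clean way to close the gap is Jensen's inequality at the level of the moment generating function. Since $\|\Pi_A u\|$ is a function of $u$ alone, it is independent of the radius $\rho := \|g\|$, and $\|\Pi_A u\|\,\rho = \|\Pi_A g\|$. For any convex increasing $\phi$ and conditionally on $u$, the map $r \mapsto \phi(\|\Pi_A u\|\, r)$ is convex, so $\phi\bigl(\|\Pi_A u\|\,\E[\rho]\bigr) \le \E\bigl[\phi(\|\Pi_A u\|\,\rho)\mid u\bigr]$, and taking expectation over $u$ gives
\[
\E\bigl[\phi\bigl(\|\Pi_A u\|\,\E[\|g\|]\bigr)\bigr]
\;\le\; \E\bigl[\phi\bigl(\|\Pi_A g\|\bigr)\bigr].
\]
Choosing $\phi(y) = e^{\lambda y^2}$ with $0 < \lambda < 1/2$ shows that the MGF of $\|\Pi_A u\|^2\,\E[\|g\|]^2$ is dominated by that of $\|\Pi_A g\|^2 \sim \chi^2_k$, so the Chernoff bound behind \reflem{chi2} transfers verbatim: $\Pr\bigl[\|\Pi_A u\|^2\,\E[\|g\|]^2 > k + 2\sqrt{k s} + 2s\bigr] \le e^{-s}$. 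Combined with your Wendel/Stirling lower bound $\E[\|g\|]^2 \ge d\bigl(1 - 1/(4d) - 1/(360d^3)\bigr)^2$ and $s = \ln(d/\eta)$ plus the union bound over $i \in [d]$, this yields the lemma. (One small discrepancy: the middle term in the Laurent--Massart inequality, as stated in \reflem{chi2} of this paper, is $2\sqrt{k\ln(d/\eta)}$, not $\sqrt{2k\ln(d/\eta)}$ as the lemma statement reads; the latter would be too strong in the regime $s \ll k$, so you should use the former.)
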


Take $\eta$ from Lemma~\ref{lem:rotation} and $\veps,\delta$ from
Lemma~\ref{lem:product} to be constants.
Then the incoherence condition of Lemma~\ref{lem:product} is satsified
provided that $d \geq c \cdot (k \log k)$ for some positive constant $c$.

\section{Learning algorithm and finite sample analysis}
\label{section:finite}

In this section, we state and analyze a learning algorithm based on the
estimator from \refthm{spherical}, which assumed availability of exact
moments of $x$.
The proposed algorithm only uses a finite sample to estimate moments, and
also explicitly deals with the eigenvalue separation condition assumed in
\refthm{spherical} via internal randomization.

\subsection{Notation}

For a matrix $X \in \R^{m \times m}$, we use $\sig_t[X]$ to denote the
$t$-th largest singular value of a matrix $X$, and $\|X\|_2$ to denote its
spectral norm (so $\|X\|_2 = \sig_1[X]$).

For a third-order tensor $Y \in \R^{m \times m \times m}$ and $U,V,W \in
\R^{m \times n}$, we use the notation $Y[U,V,W] \in \R^{n \times n \times
n}$ to denote the third-order tensor given by
\begin{align*}
Y[U,V,W]_{j_1,j_2,j_3}
& = \sum_{1 \leq i_1,i_2,i_3 \leq m}
U_{i_1,j_1} V_{i_2,j_2} W_{i_3,j_3} Y_{i_1,i_2,i_3}
, \quad \forall j_1, j_2, j_3 \in [n] .
\end{align*}
Note that this is the analogue of $U^\t X V \in \R^{n \times n}$ for a
matrix $X \in \R^{m \times m}$ and $U, V \in \R^{m \times n}$.
For $Y \in \R^{m \times m \times m}$, we use $\|Y\|_2$ to denote its
operator (or supremum) norm $\|Y\|_2 := \sup \{ |Y[u,v,w]| : u,v,w \in
\R^m, \|u\|_2 = \|v\|_2 = \|w\|_2 = 1\}$.

\subsection{Algorithm}

The proposed algorithm, called \alg, is described in
\reffig{alg}.
The algorithm essentially implements the decomposition strategy in
\refthm{spherical} using plug-in moments.
To simplify the analysis, we split our sample (say, initially of size $2n$)
in two: we use the first half for empirical moments ($\h\mean$ and
$\wh\Moment_2$) used in constructing $\h\sigma^2$, $\wh{M}_2$, $\wh{W}$,
and $\wh{B}$; and we use the second half for empirical moments
($\wh{W}^\t\underline{\h\mean}$ and
$\underline{\wh\Moment_3}[\wh{W},\wh{W},\wh{W}]$ used in constructing
$\wh{M}_3[\wh{W},\wh{W},\wh{W}]$.
Observe that this ensures $\wh{M}_3$ is independent of $\wh{W}$.

Let $\{ (x_i,h_i) : i \in [n] \}$ be $n$ i.i.d.~copies of $(x,h)$, and
write $\dataset := \{x_1,x_2,\dotsc,x_n\}$.
Let $\underline\dataset$ be an independent copy of $\dataset$.
Furthermore, define the following moments and empirical moments:
\begin{align*}
\mean & := \E[x] ,
& \Moment_2 & := \E[xx^\t] ,
& \Moment_3 & := \E[x \otimes x \otimes x] , \\
\h\mean & := \frac1{|\dataset|} \sum_{x \in \dataset} x ,
& \wh\Moment_2 & := \frac1{|\dataset|} \sum_{x \in \dataset} xx^\t ,
& \underline{\wh\Moment_3} & := \frac1{|\underline\dataset|} \sum_{x \in
\underline\dataset} x \otimes x \otimes x ,
& \underline{\h\mean} & := \frac1{|\underline\dataset|} \sum_{x \in
\underline\dataset} x .
\end{align*}
So $\dataset$ represents the first half of the sample, and
$\underline\dataset$ represents the second half of the sample.

\begin{figure}
\framebox[\textwidth]{\begin{minipage}{0.95\textwidth}
\alg
\begin{enumerate}
\item Using the first half of the sample, compute empirical mean $\h\mean$
and empirical second-order moments $\wh\Moment_2$.

\item Let $\h\sigma^2$ be the $k$-th largest eigenvalue of the empirical
covariance matrix $\wh\Moment_2 - \h\mean\h\mean^\t$.

\item Let $\wh{M}_2$ be the best rank-$k$ approximation to $\wh\Moment_2 -
\h\sigma^2 I$
\[ \wh{M}_2 := \arg\min_{X \in \R^{d \times d} : \rank(X) \leq k}
\|(\wh\Moment_2 - \h\sigma^2 I) - X\|_2 \]
which can be obtained via the singular value decomposition.

\item Let $\wh{U} \in \R^{d \times k}$ be the matrix of left orthonormal
singular vectors of $\wh{M}_2$.

\item Let $\wh{W} := \wh{U} (\wh{U}^\t \wh{M}_2 \wh{U})^{\dag1/2}$, where
$X^\dag$ denotes the Moore-Penrose pseudoinverse of a matrix $X$.

Also define $\wh{B} := \wh{U} (\wh{U}^\t \wh{M}_2 \wh{U})^{1/2}$.

\item Using the second half of the sample, compute whitened empirical
averages $\wh{W}^\t \underline{\h\mean}$ and third-order moments
$\underline{\wh\Moment_3}[\wh{W},\wh{W},\wh{W}]$.

\item Let $\wh{M}_3[\wh{W},\wh{W},\wh{W}]
:= \underline{\wh\Moment_3}[\wh{W},\wh{W},\wh{W}]
- \h\sigma^2 \sum_{i=1}^d \bigl(
(\wh{W}^\t \underline{\h\mean}) \otimes (\wh{W}^\t e_i) \otimes (\wh{W}^\t
e_i) + (\wh{W}^\t e_i) \otimes (\wh{W}^\t \underline{\h\mean}) \otimes
(\wh{W}^\t e_i) + (\wh{W}^\t e_i) \otimes (\wh{W}^\t e_i) \otimes
(\wh{W}^\t \underline{\h\mean}) \bigr)$.

\item Repeat the following steps $t$ times (where $t :=
\lceil\log_2(1/\delta)\rceil$
for confidence $1-\delta$):
\begin{enumerate}
\item Choose $\theta \in \R^k$ uniformly at random from the unit sphere in
$\R^k$.

\item Let $\{ (\h{v}_i,\h\lambda_i) : i \in [k] \}$ be the
eigenvector/eigenvalue pairs of $\wh{M}_3[\wh{W},\wh{W},\wh{W}\theta]$.

\end{enumerate}
Retain the results for which $\min \bigl( \{ |\h\lambda_i - \h\lambda_j| :
i \neq j \} \cup \{ |\h\lambda_i| : i \in [k] \} \bigr)$ is largest.

\item Return the parameter estimates $\h\sigma^2$,
\begin{align*}
\h\mean_i & := \frac{\h\lambda_i}{\theta^\t\h{v}_i}
\wh{B} \h{v}_i , \quad i \in [k] , \\
\h{w} & := [ \h\mean_1 | \h\mean_2 | \dotsb | \h\mean_k ]^{\dag} \h\mean .
\end{align*}

\end{enumerate}
\end{minipage}}
\caption{Algorithm for learning mixtures of Gaussians with common spherical
covariance.}
\label{fig:alg}
\end{figure}

\subsection{Structure of the moments}

We first recall the basic structure of the moments $\mean$, $\Moment_2$,
and $\Moment_3$ as established in \refthm{spherical}; for simplicity, we
restrict to the special case where $\sigma_1^2 = \sigma_2^2 = \dotsb =
\sigma_k^2 = \sigma^2$.

\begin{lemma}[Structure of moments] \label{lem:structure}
\begin{align*}
\mean & = \sum_{i=1}^k w_i \mean_i , \\
\Moment_2 & = \sum_{i=1}^k w_i \mean_i \mean_i^\t  + \sigma^2 I , \\
\Moment_3 & = \sum_{i=1}^k w_i \mean_i \otimes \mean_i \otimes \mean_i
+ \sigma^2 \sum_{j=1}^d
\Bigl( \mean \otimes e_j \otimes e_j
+ e_j \otimes \mean \otimes e_j
+ e_j \otimes e_j \otimes \mean \Bigr)
.
\end{align*}
\end{lemma}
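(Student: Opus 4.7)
The first two identities follow from direct computation using the tower property of expectation together with $z \mid h \sim \N(0,\sigma^2 I)$. For $\mean$, write $x = \mean_h + z$ and use $\E[z] = 0$ to get $\mean = \E[\mean_h] = \sum_i w_i \mean_i$. For $\Moment_2$, expand
\[
\E[xx^\t] = \E[\mean_h\mean_h^\t] + \E[\mean_h z^\t] + \E[z\mean_h^\t] + \E[zz^\t];
\]
the two cross terms vanish since $\E[z \mid h] = 0$, the first term equals $\sum_i w_i \mean_i\mean_i^\t$, and the last is $\E[\sigma_h^2 I] = \sigma^2 I$ under the common-covariance assumption.

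For the third-order identity, the cleanest route is to invoke \refthm{moments} directly rather than redo the $\E[x \otimes x \otimes x]$ expansion. In the common-covariance case, $\bar\sigma^2 = \sigma^2$, and the formula $M_1 = \sum_i w_i \sigma_i^2 \mean_i$ from \refthm{moments} collapses to $M_1 = \sigma^2 \mean$. The definition of $M_3$ in \refthm{moments} therefore simplifies (as already noted in the remark following that theorem) to
\[
M_3 = \Moment_3 - \sigma^2 \sum_{j=1}^d \bigl( \mean \otimes e_j \otimes e_j + e_j \otimes \mean \otimes e_j + e_j \otimes e_j \otimes \mean \bigr).
\]
Since \refthm{moments} also asserts $M_3 = \sum_i w_i \mean_i \otimes \mean_i \otimes \mean_i$, rearranging this identity gives exactly the claimed expression for $\Moment_3$.

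There is essentially no obstacle here: the lemma is a restatement of the common-covariance special case of \refthm{moments} phrased in terms of raw moments $\mean, \Moment_2, \Moment_3$ rather than the adjusted quantities $M_1, M_2, M_3$. The only bookkeeping is checking $M_1 = \sigma^2 \mean$ in this case, which is immediate, and observing that the simpler form of $M_3$ in the remark does not require selecting an eigenvector $v$ of the covariance matrix.
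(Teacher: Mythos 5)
Your proposal is correct and follows the same route as the paper: the paper states this lemma without proof, pointing back to the structure established in Theorem~\ref{thm:moments} (and the remark following it, which already records the simplification $M_1 = \sigma^2\mean$ and the reduced form of $M_3$ in the common-covariance case). Your explicit verification of $\mean$ and $\Moment_2$ and your rearrangement of the $M_3$ identity into the $\Moment_3$ identity are exactly the bookkeeping the paper leaves implicit.
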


\subsection{Concentration behavior of empirical quantities}

In this subsection, we prove concentration properties of empirical
quantities based on $\dataset$; clearly the same properties hold for
$\underline\dataset$.

Let $\dataset_i := \{ x_j \in \dataset : h_j = i \}$ and $\h{w}_i :=
|\dataset_i|/|\dataset|$ for $i \in [k]$.
Also, define the following (empirical) conditional moments:
\begin{align*}
\mean_i & := \E[x | h = i] ,
& \Moment_{2,i} & := \E[xx^\t | h = i] ,
& \Moment_{3,i} & := \E[x \otimes x \otimes x | h = i] , \\
\h\mean_i & := \frac1{|\dataset_i|} \sum_{x \in \dataset_i} x ,
& \wh\Moment_{2,i} & := \frac1{|\dataset_i|} \sum_{x \in \dataset_i} xx^\t
,
& \wh\Moment_{3,i} & := \frac1{|\dataset_i|} \sum_{x \in \dataset_i} x
\otimes x \otimes x .
\end{align*}

\begin{lemma}[Concentration of proportions] \label{lem:w-conc}
Pick any $\delta \in (0,1/2)$.
With probability at least $1-2\delta$,
\begin{align*}
| \h{w}_i - w_i |
& \leq
\sqrt{\frac{2w_i(1-w_i)\ln(2k/\delta)}{n}} + \frac{2\ln(2k/\delta)}{3n}
, \quad \forall i \in [k] ; \\
\biggl( \sum_{i=1}^k (\h{w}_i - w_i)^2 \biggr)^{1/2}
& \leq \frac{1 + \sqrt{\ln(1/\delta)}}{\sqrt{n}} .
\end{align*}
\end{lemma}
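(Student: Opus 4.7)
The plan is to prove the two bounds separately, each holding with probability at least $1-\delta$, and then union-bound the failure events to obtain the stated $1-2\delta$ confidence.

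For the first inequality, I would observe that each empirical proportion $\h{w}_i$ is the sample mean of $n$ i.i.d.\ Bernoulli$(w_i)$ random variables, namely $\mathbf{1}[h_j = i]$ for $j \in [n]$. These are bounded in $[0,1]$ with variance $w_i(1-w_i)$, so Bernstein's inequality gives, for every $i \in [k]$ and every $t > 0$,
\[
\Pr\!\left[\,|\h{w}_i - w_i| > \sqrt{\tfrac{2 w_i(1-w_i)\,t}{n}} + \tfrac{2 t}{3 n}\,\right] \le 2 e^{-t}.
\]
Choosing $t := \ln(2k/\delta)$ and taking a union bound over $i \in [k]$ yields the first claim with probability at least $1 - \delta$.

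For the second inequality, I would apply McDiarmid's bounded differences inequality to the function $f(h_1,\dotsc,h_n) := \|\h{w} - w\|_2$, viewed as a function of the hidden labels (on which $\h{w}$ depends alone). Flipping a single label $h_j$ from value $a$ to value $b$ changes $\h{w}_a$ by $-1/n$ and $\h{w}_b$ by $+1/n$ and leaves all other coordinates fixed, so $f$ changes by at most $\sqrt{2}/n$. McDiarmid then gives $f \le \E[f] + \sqrt{\ln(1/\delta)/n}$ with probability at least $1 - \delta$. The expectation is controlled by Jensen's inequality:
\[
\E[f] \;\le\; \sqrt{\E\|\h{w} - w\|_2^{\,2}} \;=\; \sqrt{\tfrac{1}{n}\sum_{i=1}^k w_i(1-w_i)} \;\le\; \tfrac{1}{\sqrt{n}},
\]
using $\sum_i w_i(1-w_i) = 1 - \|w\|_2^2 \le 1$. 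Combining gives $\|\h{w} - w\|_2 \le (1+\sqrt{\ln(1/\delta)})/\sqrt{n}$, as claimed.

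The argument is entirely standard concentration; there is no real obstacle. The only points requiring a little care are (i) matching the constants in Bernstein to get exactly the form $\sqrt{2w_i(1-w_i)\ln(2k/\delta)/n} + 2\ln(2k/\delta)/(3n)$, and (ii) verifying the bounded-differences constant $\sqrt{2}/n$ so that McDiarmid produces the clean $\sqrt{\ln(1/\delta)/n}$ deviation term. The same argument applies verbatim to the independent copy $\underline{\dataset}$, which is what will be used elsewhere in the finite-sample analysis.
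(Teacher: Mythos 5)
Your proof is correct and follows exactly the paper's approach: Bernstein's inequality plus a union bound for the per-coordinate tail, and McDiarmid's bounded-differences inequality (with Jensen to bound $\E\|\h{w}-w\|_2 \le 1/\sqrt{n}$) for the $\ell_2$ bound. The paper simply cites these two tools (and a reference for the McDiarmid step) without spelling out the details you supply.
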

\begin{proof}
The first inequality follows from Bernstein's inequality and a union bound.
The second inequality follows from a simple application of McDiarmid's
inequality \citep[see][Proposition 19]{HKZ12}.
\end{proof}

\begin{lemma}[Concentration of per-component empirical moments]
\label{lem:conditional-moments-conc}
Pick any $\delta \in (0,1)$ and any matrix $R \in \R^{d \times r}$ of rank
$r$.
\begin{enumerate}
\item First-order moments: with probability at least $1-\delta$,
\begin{equation*}
\|R^\t(\h\mean_i - \mean_i)\|_2
\leq \sigma \|R\|_2 \sqrt{\frac{r + 2\sqrt{r\ln(k/\delta)} +
2\ln(k/\delta)}{\h{w}_i n}}
, \quad \forall i \in [k] .
\end{equation*}
\item Second-order moments: with probability at least $1-\delta$,
\begin{multline*}
\|R^\t(\wh\Moment_{2,i} - \Moment_{2,i})R\|_2
\leq
\sigma^2 \|R\|_2^2
\Biggl( \sqrt{\frac{128 (r \ln9 + \ln(2k/\delta))}{\h{w}_i n}}
+ \frac{4 (r \ln9 + \ln(2k/\delta))}{\h{w}_i n} \Biggr)
\\
+ 2\sigma \|R^\t\mean_i\|_2 \|R\|_2
\sqrt{\frac{r + 2\sqrt{r\ln(2k/\delta)} + 2\ln(2k/\delta)}{\h{w}_i n}}
, \quad \forall i \in [k] .
\end{multline*}
\item Third-order moments: with probability at least $1-\delta$,
\begin{multline*}
\|(\wh\Moment_{3,i} - \Moment_{3,i})[R,R,R]\|_2
\leq
\sigma^3 \|R\|_2^3
\sqrt{\frac{108e^3 \lceil r\ln13 + \ln(3k/\delta)\rceil^3}{\h{w}_in}}
\\
+ 3\sigma^2 \|R^\t\mean_i\|_2 \|R\|_2^2
\Biggl( \sqrt{\frac{128 (r \ln9 + \ln(3k/\delta))}{\h{w}_i n}}
+ \frac{4 (r \ln9 + \ln(3k/\delta))}{\h{w}_i n} \Biggr)
\\
+ 3\sigma\|R^\t\mean_i\|_2^2 \|R\|_2
\sqrt{\frac{r + 2\sqrt{r\ln(3k/\delta)} + 2\ln(3k/\delta)}{\h{w}_i n}}
, \quad \forall i \in [k] .
\end{multline*}
\end{enumerate}
\end{lemma}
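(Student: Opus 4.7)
The plan is to exploit the conditional structure of the data: given $h = i$, the samples $x_j \in \dataset_i$ are i.i.d.\ copies of $\mean_i + z$ with $z \sim \N(0,\sigma^2 I)$, so $\h\mean_i - \mean_i = \bar z$ and $\wh\Moment_{2,i},\wh\Moment_{3,i}$ are polynomial functions of $\bar z$ and centred sample tensors in $z$. Let $m_i := |\dataset_i| = \h{w}_i n$. Each of the three claims will follow from a standard Gaussian concentration argument applied to a specific piece of the natural $x = \mean_i + z$ expansion, combined with an $\epsilon$-net where the argument is quadratic or cubic in $z$, and a union bound over $i \in [k]$.

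For the first-order claim, $R^\t(\h\mean_i - \mean_i) = R^\t \bar z \sim \N(0, (\sigma^2/m_i)\, R^\t R)$, so $\|R^\t\bar z\|_2^2$ is a weighted $\chi^2$. The Laurent--Massart tail bound, together with $\operatorname{tr}(R^\t R) \leq r\|R\|_2^2$ and $\|R^\t R\|_2 = \|R\|_2^2$, gives the stated bound with $t = \ln(k/\delta)$ and a union bound over $i$. For the second-order claim, expand $xx^\t = \mean_i\mean_i^\t + \mean_i z^\t + z\mean_i^\t + zz^\t$ and centre to get $\wh\Moment_{2,i} - \Moment_{2,i} = \mean_i \bar z^\t + \bar z \mean_i^\t + (\widehat{zz^\t}_i - \sigma^2 I)$. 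After conjugating by $R$, the cross term has spectral norm at most $2\|R^\t\mean_i\|_2\|R^\t\bar z\|_2$, giving the second additive piece of the target. For the centred sample covariance $R^\t(\widehat{zz^\t}_i - \sigma^2 I) R$, take a $1/4$-net $N$ of the unit sphere in $\R^r$ with $|N| \leq 9^r$; for each $u \in N$, $\frac{1}{m_i} \sum_j (u^\t R^\t z_j)^2$ is a scaled $\chi^2_{m_i}$ with scale at most $\sigma^2\|R\|_2^2$, so Laurent--Massart at confidence $1 - 2 \cdot 9^r \cdot e^{-t}$ followed by the net-to-norm inflation produces the $r\ln 9 + \ln(2k/\delta)$ log factor.

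For the third-order claim, expand $x^{\otimes 3} = (\mean_i + z)^{\otimes 3}$. Odd Gaussian moments vanish, so the fluctuation $\wh\Moment_{3,i} - \Moment_{3,i}$ decomposes into three symmetry classes: (a) the three symmetric copies of $\mean_i \otimes \mean_i \otimes \bar z$, whose $[R,R,R]$-norm is at most $3\|R^\t\mean_i\|_2^2\|R^\t\bar z\|_2$ and is controlled by the first-order bound (yielding the $\sigma\|R^\t\mean_i\|_2^2$ piece); (b) the three symmetric copies of $\mean_i \otimes (\widehat{zz^\t}_i - \sigma^2 I)$, whose $[R,R,R]$-norm equals $\|R^\t\mean_i\|_2 \cdot \|R^\t(\widehat{zz^\t}_i - \sigma^2 I) R\|_2$ and hence is controlled by the second-order step (yielding the mixed $\sigma^2\|R^\t\mean_i\|_2$ piece); (c) the centred third-order noise tensor $\widehat{z^{\otimes 3}}_i$. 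A union bound over the three classes with budgets $\delta/3$ each, together with a union over $i$, will give the claim.

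The main obstacle is (c): bounding $\|\widehat{z^{\otimes 3}}_i[R,R,R]\|_2$ requires a genuine three-linear concentration. The approach is to take a $1/4$-net $N$ of the unit sphere in $\R^r$ (cardinality at most $9^r$, tripled to $13^r$ after accounting for the three separate directions and symmetrization overhead) and bound, for each triple $(u,v,w) \in N^3$, the cubic Gaussian chaos $\frac{1}{m_i} \sum_j (u^\t R^\t z_j)(v^\t R^\t z_j)(w^\t R^\t z_j)$ via a tail bound for degree-3 polynomials of i.i.d.\ Gaussians (a Hanson--Wright / Latała-style estimate). Such a bound yields a rate of order $\sigma^3\|R\|_2^3 \sqrt{t^3/m_i}$ at confidence $1 - e^{-t}$, so setting $t = \lceil r\ln 13 + \ln(3k/\delta)\rceil$ absorbs the net cardinality and the $3/2$-power of $t$ matches the target expression. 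Modulo this single ingredient, everything else is careful bookkeeping, writing the sample-size denominators in terms of $\h{w}_i n$ and collecting the three additive contributions.
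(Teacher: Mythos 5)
Your decomposition matches the paper's exactly: condition on $h=i$, expand $x = \mu_i + z$, peel off cross terms with $\bar z$ (controlled by the first-order bound), the centered quadratic term (controlled by a net plus $\chi^2$ argument, giving the $r\ln 9$ factor with the constants $128$ and $4$), and the centered cubic noise tensor, all followed by union bounds. The one place you genuinely diverge is the cubic noise tensor. You invoke a degree-$3$ Gaussian chaos tail (Hanson--Wright/Lata\l{}a) applied to $\tfrac1{m_i}\sum_j (u^\t R^\t z_j)(v^\t R^\t z_j)(w^\t R^\t z_j)$ over a net of triples. The paper avoids multilinear chaos entirely: for a symmetric rank-one sum $Y = \sum_j \tl{y}_j^{\otimes 3}$, it observes that $\sup_{\|u\|=\|v\|=\|w\|=1}|Y[u,v,w]| = \sup_{\|u\|=1}|Y[u,u,u]|$, so one only needs to control scalar sums of cubes $\tfrac1{m_i}\sum_j (q^\t U^\t z_j)^3$ for each net point $q$, which is handled with a simple $p$-th moment plus Markov bound (\reflem{normal-cubed}). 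This is more elementary and matched to the actual structure of the sum. Both routes work; yours buys generality at the cost of heavier machinery. Also, a small bookkeeping point: the constant $13$ in the paper's $13^r$ does not come from ``tripling'' a $9^r$ net, but from taking a finer net with $\eps_0 = 1/6$ (so $(1+2/\eps_0)^r = 13^r$) because the cubic self-improvement inequality $\|Y\|_2 \leq 3\eps_0\|Y\|_2 + \eps$ requires $\eps_0 < 1/3$, and $\eps_0 = 1/6$ makes the inflation factor $1/(1-3\eps_0)$ equal to $2$ (so $2\sqrt{27e^3\cdot} = \sqrt{108e^3\cdot}$). Your stated reason for that constant is not the actual one, though it does not affect the validity of your plan.
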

\begin{proof}
We separately consider first-, second-, and third-order moments.
Throughout, we let the thin SVD of $R$ be given by $R = USV^\t$, where $U
\in \R^{d \times r}$ has orthonormal columns, and $\|VS\|_2 = \|R\|_2$.

\empar{First-order moments.}
Observe that $(\h{w}_in/\sigma^2) \|U^\t(\h\mean_i - \mean_i)\|_2^2$ is
distributed as the sum of $r$ independent $\chi^2$ random variables, each
with one degree of freedom.
Thus, \reflem{chi2} and union bounds imply
\begin{gather*}
\Pr\Biggl[ \exists i \in [k] \centerdot \|U^\t(\h\mean_i - \mean_i)\|_2^2
> \sigma^2 \biggl( \frac{r + 2\sqrt{r\ln(k/\delta)} +
2\ln(k/\delta)}{\h{w}_i n} \biggr)
\Biggr]
\leq \delta .
\end{gather*}

\empar{Second-order moments.}
Since $\Moment_{2,i} = \sigma^2 I + \mean_i \mean_i^\t$, it follows by the
triangle and Cauchy-Schwarz inequalities that
\begin{align*}
\|R^\t(\wh\Moment_{2,i} - \Moment_{2,i})R\|_2
& \leq \|R\|_2^2 \biggl\|
\frac1{\h{w}_in} \sum_{j \in [n] : x_j \in \dataset_i}
\Bigl( U^\t(x_j - \mean_i) (x_j - \mean_i)^\t U - \sigma^2 I \Bigr)
\biggr\|_2
\\
& \qquad
+ 2\|R^\t\mean_i\|_2 \|R^\t(\h\mean_i - \mean_i)\|_2 .
\end{align*}
A tail bound for the first term follows from \reflem{normal-matrix},
combined with a union bound:
\begin{multline*}
\Pr\Biggl[ \exists i \in [k] \centerdot
\biggl\|
\frac1{\h{w}_in} \sum_{j \in [n] : x_j \in \dataset_i}
\Bigl( U^\t(x_j - \mean_i) (x_j - \mean_i)^\t U - \sigma^2 I \Bigr)
\biggr\|_2
\\
>
\sigma^2 \Biggl( \sqrt{\frac{128 (r \ln9 + \ln(k/\delta))}{\h{w}_i n}}
+ \frac{4 (r \ln9 + \ln(k/\delta))}{\h{w}_i n}
\Biggr)
\Biggr]
\leq \delta
.
\end{multline*}
The second term is handled as above.

\empar{Third-order moments.}
It can be checked that
\begin{align*}
\Moment_{3,i}
& = \mean_i \otimes \mean_i \otimes \mean_i
+ \sigma^2 \sum_{\iota=1}^d \biggl( \mean_i \otimes e_{\iota} \otimes
e_{\iota}
+ e_{\iota} \otimes \mean_i \otimes e_{\iota}
+ e_{\iota} \otimes e_{\iota} \otimes \mean_i \biggr)
\end{align*}
(similar to \reflem{structure})
and
\begin{align*}
\lefteqn{
\wh\Moment_{3,i} - \Moment_{3,i}
} \\
& =
\frac{1}{\h{w}_in} \Biggl( \sum_{j \in [n] : x_j \in \dataset_i}
(x_j - \mean_i) \otimes (x_j - \mean_i) \otimes (x_j -
\mean_i)
\\
& \qquad
+ \sum_{j \in [n] : x_j \in \dataset_i}
\Bigl(
\mean_i \otimes (x_j - \mean_i) \otimes (x_j - \mean_i)
- \sigma^2 \sum_{\iota=1}^d \mean_i \otimes e_{\iota} \otimes e_{\iota}
\Bigr)
\\
& \qquad
+ \sum_{j \in [n] : x_j \in \dataset_i}
\Bigl(
(x_j - \mean_i) \otimes \mean_i \otimes (x_j - \mean_i)
- \sigma^2 \sum_{\iota=1}^d e_{\iota} \otimes \mean_i \otimes e_{\iota}
\Bigr)
\\
& \qquad
+ \sum_{j \in [n] : x_j \in \dataset_i}
\Bigl(
(x_j - \mean_i) \otimes (x_j - \mean_i) \otimes \mean_i
- \sigma^2 \sum_{\iota=1}^d e_{\iota} \otimes e_{\iota} \otimes \mean_i
\Bigr)
\\
& \qquad
+ \sum_{j \in [n] : x_j \in \dataset_i}
\mean_i \otimes \mean_i \otimes (x_j - \mean_i)
+ \sum_{j \in [n] : x_j \in \dataset_i}
\mean_i \otimes (x_j - \mean_i) \otimes \mean_i
+ \sum_{j \in [n] : x_j \in \dataset_i}
(x_j - \mean_i) \otimes \mean_i \otimes \mean_i
\Biggr)
.
\end{align*}
Therefore, by the triangle and Cauchy-Schwarz inequalities,
\begin{align*}
\|(\wh\Moment_{3,i} - \Moment_{3,i})[R,R,R]\|_2
& \leq \|R\|_2^3
\biggl\| \frac{1}{\h{w}_in} \sum_{j \in [n] : x_j \in \dataset_i}
U^\t(x_j - \mean_i) \otimes 
U^\t(x_j - \mean_i) \otimes 
U^\t(x_j - \mean_i)
\biggr\|_2
\\
& \quad
+ 3 \|R^\t\mean_i\|_2 \biggl\|
\frac{1}{\h{w}_in} \sum_{j \in [n] : x_j \in \dataset_i}
R^\t
\Bigl( (x_j - \mean_i) (x_j - \mean_i)^\t - \sigma^2 I \Bigr)
R
\biggr\|_2
\\
& \quad
+ 3 \|R^\t\mean_i\|_2^2 \|R^\t(\h\mean_i - \mean_i)\|_2 .
\end{align*}
A tail bound for the first term is given by \reflem{normal-tensor},
combined with a union bound:
\begin{multline*}
\Pr\Biggl[
\exists i \in [k] \centerdot
\biggl\| \frac1{\h{w}_in} \sum_{j \in [n] : x_j \in \dataset_i}
\frac1{\sigma^3}
U^\t(x_j - \mean_i)
\otimes U^\t(x_j - \mean_i)
\otimes U^\t(x_j - \mean_i)
\biggr\|_2 \\
> \sigma^3
\sqrt{\frac{108e^3 \lceil r\ln13 + \ln(k/\delta)\rceil^3}{\h{w}_in}}
\Biggr] \leq \delta .
\end{multline*}
The other terms are handled as per above.
\end{proof}

\begin{lemma}[Accuracy of empirical moments] \label{lem:moments}
Fix a matrix $R \in \R^{d \times r}$.
Define
$\Bound_{1,R} := \max_{i \in [k]} \|R^\t\mean_i\|_2$,
$\Bound_{2,R} := \max_{i \in [k]} \|R^\t\Moment_{2,i}R\|_2$,
$\Bound_{3,R} := \max_{i \in [k]} \|\Moment_{3,i}[R,R,R]\|_2$,
$\Err_{1,R} := \max_{i \in [k]} \|R^\t(\h\mean_i - \mean_i)\|_2$,
$\Err_{2,R} := \max_{i \in [k]} \|R^\t(\wh\Moment_{2,i} -
\Moment_{2,i})R\|_2$,
$\Err_{3,R} := \max_{i \in [k]} \|\wh\Moment_{3,i}[R,R,R] -
\Moment_{3,i}[R,R,R]\|_2$, and
$\Err_w := (\sum_{i=1}^k (\h{w}_i - w_i)^2)^{1/2}$.
Then
\begin{align*}
\|R^\t(\h\mean - \mean)\|_2
& \leq (1 + \sqrt{k}\Err_w) \Err_{1,R} + \sqrt{k} \Bound_{1,R} \Err_w ; \\
\|R^\t(\wh\Moment_2 - \Moment_2)R\|_2
& \leq (1 + \sqrt{k}\Err_w) \Err_{2,R} + \sqrt{k} \Bound_{2,R} \Err_w ; \\
\|(\wh\Moment_3 - \Moment_3)[R,R,R]\|_2
& \leq (1 + \sqrt{k}\Err_w) \Err_{3,R} + \sqrt{k} \Bound_{3,R} \Err_w .
\end{align*}
\end{lemma}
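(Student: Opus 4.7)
The plan is to reduce the claim for the total empirical moments to the per-component bounds $\Err_{1,R},\Err_{2,R},\Err_{3,R}$ and the proportion error $\Err_w$ by a single decomposition that splits ``sample noise within each component'' from ``error in the mixing weights.'' The key algebraic observation is that $\h\mean=\sum_{i=1}^k\h{w}_i\h\mean_i$ while $\mean=\sum_{i=1}^kw_i\mean_i$, and analogously for $\wh\Moment_2$ and $\wh\Moment_3$. The nonsymmetric decomposition
\[
\h{w}_i\h\mean_i - w_i\mean_i
\;=\; w_i(\h\mean_i-\mean_i)+(\h{w}_i-w_i)\h\mean_i
\]
(and its obvious analogues) has the advantage that the weights in the first sum are the true $w_i$ (so $\sum_i w_i=1$), while the second sum is exactly the fluctuation driven by $\Err_w$.

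First I would handle the first-order case. After applying $R^\t$ and taking norms, the triangle inequality yields
\[
\|R^\t(\h\mean-\mean)\|_2 \;\leq\; \sum_{i=1}^k w_i\|R^\t(\h\mean_i-\mean_i)\|_2 + \Big\|\sum_{i=1}^k(\h{w}_i-w_i)R^\t\h\mean_i\Big\|_2.
\]
The first piece is bounded by $\Err_{1,R}$ since $\sum_i w_i=1$. For the second, I would apply Cauchy--Schwarz in $\R^k$ to get $\Err_w\bigl(\sum_i\|R^\t\h\mean_i\|_2^2\bigr)^{1/2}\leq \sqrt{k}\,\Err_w\max_i\|R^\t\h\mean_i\|_2$, and then bound $\|R^\t\h\mean_i\|_2\leq\Bound_{1,R}+\Err_{1,R}$ by the triangle inequality. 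Combining gives exactly $(1+\sqrt{k}\Err_w)\Err_{1,R}+\sqrt{k}\Bound_{1,R}\Err_w$, as claimed.

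Next I would repeat the argument essentially verbatim for $\wh\Moment_2-\Moment_2$, using the decomposition $\h{w}_i\wh\Moment_{2,i}-w_i\Moment_{2,i}=w_i(\wh\Moment_{2,i}-\Moment_{2,i})+(\h{w}_i-w_i)\wh\Moment_{2,i}$ and measuring everything in the spectral norm of $R^\t(\cdot)R$. The same two-step bound (triangle inequality plus Cauchy--Schwarz in the weight index, followed by $\|R^\t\wh\Moment_{2,i}R\|_2\leq\Bound_{2,R}+\Err_{2,R}$) gives the second displayed inequality. For the third-order bound I would use the multilinear action $\cdot[R,R,R]$ in place of $R^\t(\cdot)R$; the operator norm $\|\cdot\|_2$ on tensors is still a norm, so the triangle inequality and Cauchy--Schwarz over the component index $i$ go through unchanged.

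There is no real obstacle here — the result is a stability lemma for a convex combination whose weights are themselves perturbed. The only point that requires a small amount of care is ensuring that the Cauchy--Schwarz step is applied to the empirical (not population) per-component moments, since that is what appears in the decomposition; this is why the final bound must absorb an extra $\Err_{\cdot,R}$ into the $(1+\sqrt{k}\Err_w)$ factor rather than leaving it as a bare $\Bound_{\cdot,R}$. No concentration is invoked in this step: the lemma is a purely deterministic relation between the global and per-component empirical errors, with all probabilistic content already packaged in \reflem{w-conc} and \reflem{conditional-moments-conc}.
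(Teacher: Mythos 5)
Your proof is correct and follows essentially the same route as the paper. The only cosmetic difference is that the paper expands $\h{w}_i\wh{T}_i - w_iT_i$ into a three-term split with the population quantity $T_i$ isolated in the cross term, while you use a two-term split $w_i(\wh{T}_i-T_i)+(\h{w}_i-w_i)\wh{T}_i$ and recover the same bound by splitting $\|R^\t\h\mean_i\|_2\leq\Bound_{1,R}+\Err_{1,R}$ at the end; expanding $\wh{T}_i=T_i+(\wh{T}_i-T_i)$ in your second term gives precisely the paper's decomposition, so the algebra is identical.
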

\begin{proof}
We just show the third claimed inequalitiy, as the others are similar.
Write as shorthand $\wh{T}_i := \wh\Moment_{3,i}[R,R,R]$ and
$T_i := \Moment_{3,i}[R,R,R]$.
Then
\begin{align*}
\biggl\| \sum_{i=1}^k \h{w}_i \wh{T}_i - \sum_{i=1}^k w_i T_i \biggr\|_2
& \leq 
\biggl\|\sum_{i=1}^k w_i (\wh{T}_i - T_i) \biggr\|_2
+ \biggl\|\sum_{i=1}^k (\h{w}_i - w_i) T_i \biggr\|_2
+ \biggl\|\sum_{i=1}^k (\h{w}_i - w_i) (\wh{T}_i - T_i)
\biggr\|_2
\\
& \leq
\sum_{i=1}^k w_i \|\wh{T}_i - T_i\|_2
+ \sum_{i=1}^k |\h{w}_i - w_i| \|T_i\|_2
+ \sum_{i=1}^k |\h{w}_i - w_i| \|\wh{T}_i - T_i\|_2
\\
& \leq
\max_{i \in [k]} \|\wh{T}_i - T_i\|_2
+ \sqrt{k} \|\h{w} - w\|_2 \max_{i \in [k]} \|T_i\|_2
+ \sqrt{k} \|\h{w} - w\|_2 \max_{i \in [k]} \|\wh{T}_i - T_i\|_2
\\
& = \Err_{3,R} + \sqrt{k} \Err_w \Bound_{3,R} + \sqrt{k} \Err_w \Err_{3,R}
\end{align*}
where the first and second steps use the triangle inequality,
and the second step uses H\"older's inequality.
\end{proof}

\subsection{Estimation of $\sigma^2$, $M_2$, and $M_3$}

The covariance matrix can be written as $\Moment_2 - \mean\mean^\t$, and
the empirical covariance matrix can be written as $\wh\Moment_2 -
\h\mean\h\mean^\t$.
Recall that the estimate of $\sigma^2$, denoted by $\h\sigma^2$, is given
by the $k$-th largest eigenvalue of the empirical covariance matrix
$\wh\Moment_2 - \h\mean\h\mean^\t$; and that the estimate of $M_2$, denoted
by $\wh{M}_2$, is the best rank-$k$ approximation to $\wh\Moment_2 -
\h\sigma^2 I$.
Of course, the singular values of a positive semi-definite matrix are the
same as its eigenvalues; in particular, $\h\sigma^2 = \sig_k[\wh\Moment_2
- \h\mean\h\mean^\t]$.

\begin{lemma}[Accuracy of $\h\sigma^2$ and $\wh{M}_2$]
\label{lem:M2}
~
\begin{enumerate}
\item $|\h\sigma^2 - \sigma^2|
\leq \|\wh\Moment_2 - \Moment_2\|_2 + 2\|\mean\|_2\|\h\mean - \mean\|_2 +
\|\h\mean - \mean\|_2^2$.

\item $\|\wh{M}_2 - M_2\|_2 \leq
4\|\wh\Moment_2 - \Moment_2\|_2 + 4\|\mean\|_2\|\h\mean - \mean\|_2 +
2\|\h\mean - \mean\|_2^2$.
\end{enumerate}
\end{lemma}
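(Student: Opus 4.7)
The plan is to establish the two inequalities by standard perturbation arguments, using Weyl's inequality and the optimality of the truncated singular value decomposition.

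For part 1, I would first observe that $\sigma^2$ is exactly the $k$-th largest eigenvalue of the population covariance matrix $\Moment_2 - \mean\mean^\t$: under the common-variance assumption, $\Moment_2 - \mean\mean^\t = \sum_i w_i(\mean_i - \mean)(\mean_i - \mean)^\t + \sigma^2 I$, and since $\sum_i w_i(\mean_i - \mean) = 0$ the first summand has rank at most $k-1$, forcing $\sigma^2$ to be the $k$-th largest eigenvalue. Since $\h\sigma^2$ is defined as $\sig_k[\wh\Moment_2 - \h\mean\h\mean^\t]$, Weyl's inequality gives
\[
|\h\sigma^2 - \sigma^2| \leq \|(\wh\Moment_2 - \h\mean\h\mean^\t) - (\Moment_2 - \mean\mean^\t)\|_2 \leq \|\wh\Moment_2 - \Moment_2\|_2 + \|\h\mean\h\mean^\t - \mean\mean^\t\|_2.
\]
I would then control the second term by writing $\h\mean\h\mean^\t - \mean\mean^\t = (\h\mean - \mean)\mean^\t + \h\mean(\h\mean - \mean)^\t$, applying the triangle inequality, and using $\|\h\mean\|_2 \leq \|\mean\|_2 + \|\h\mean - \mean\|_2$ to obtain the bound $2\|\mean\|_2 \|\h\mean - \mean\|_2 + \|\h\mean - \mean\|_2^2$.

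For part 2, the key observation is that $M_2 = \Moment_2 - \sigma^2 I$ has exactly rank $k$. Writing $\wh{A} := \wh\Moment_2 - \h\sigma^2 I$, the optimality of the best rank-$k$ approximation gives $\|\wh{M}_2 - \wh{A}\|_2 \leq \|M_2 - \wh{A}\|_2$, since $M_2$ is a feasible rank-$k$ approximant. By the triangle inequality,
\[
\|\wh{M}_2 - M_2\|_2 \leq \|\wh{M}_2 - \wh{A}\|_2 + \|\wh{A} - M_2\|_2 \leq 2\|\wh{A} - M_2\|_2 \leq 2\|\wh\Moment_2 - \Moment_2\|_2 + 2|\h\sigma^2 - \sigma^2|.
\]
Substituting the bound from part 1 for $|\h\sigma^2 - \sigma^2|$ yields the stated inequality with factors $4$, $4$, and $2$.

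Neither step presents a significant obstacle — both are routine applications of Weyl's inequality and the Eckart–Young theorem combined with the triangle inequality. The only small point to verify carefully is that $\sigma^2$ really coincides with $\sig_k$ of the population covariance (so Weyl applies cleanly), which follows from the rank calculation above; and that the cross-terms in $\h\mean\h\mean^\t - \mean\mean^\t$ are symmetrically split so that the coefficient of $\|\mean\|_2\|\h\mean-\mean\|_2$ comes out to exactly $2$.
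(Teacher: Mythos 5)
Your proof is correct and follows essentially the same route as the paper: part~1 is identical (Weyl's inequality applied to the empirical and population covariance matrices, together with the decomposition of $\h\mean\h\mean^\t - \mean\mean^\t$), and part~2 uses the same splitting $\|\wh{M}_2 - M_2\|_2 \leq \|\wh{M}_2 - \wh{A}\|_2 + \|\wh{A} - M_2\|_2$ with $\wh{A} = \wh\Moment_2 - \h\sigma^2 I$. The only (cosmetic) difference is that you bound $\|\wh{M}_2 - \wh{A}\|_2 \leq \|M_2 - \wh{A}\|_2$ directly from the definition of $\wh{M}_2$ as a minimizer over rank-$k$ matrices, whereas the paper passes through $\sig_{k+1}(\wh{A})$ via Eckart--Young and then Weyl's inequality; both deliver exactly the same intermediate inequality.
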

\begin{proof}
Using Weyl's inequality~\citep[Theorem 4.11, p.~204]{SS90}, we obtain
$|\sig_k[\wh\Moment_2 - \h\mean\h\mean^\t] - \sig_k[\Moment_2 -
\mean\mean^\t]| \leq \|(\wh\Moment_2 - \h\mean\h\mean^\t) - (\Moment_2 -
\mean\mean^\t)\|_2 \leq \|\wh\Moment_2 - \Moment_2\|_2 +
2\|\mean\|_2\|\h\mean - \mean\|_2 + \|\h\mean - \mean\|_2^2$.
The first claim then follows by observing that $\sig_k[\Moment_2 -
\mean\mean^\t] = \sigma^2$ as per \refthm{moments}.

For the second claim, observe that
$\sig_{k+1}(\Moment_2 - \sigma^2 I) = 0$ as $\Moment_2 - \sigma^2 I$ has
rank $k$.
Therefore $\sig_{k+1}(\wh\Moment_2 - \h\sigma^2 I) =
|\sig_{k+1}(\wh\Moment_2 - \h\sigma^2 I) - \sig_{k+1}(\Moment_2 -
\sigma^2 I)| \leq \|(\wh\Moment_2 -\h\sigma^2 I) - (\Moment_2 - \sigma^2
I)\|_2$, again using Weyl's inequality.
Since $\wh{M}_2$ is the best rank-$k$ approximation to $\wh\Moment_2 -
\h\sigma^2 I$, it follows that $\|\wh{M}_2 - (\wh\Moment_2 - \h\sigma^2
I)\|_2 \leq \sig_{k+1}(\wh\Moment_2 - \h\sigma^2 I) \leq \|(\wh\Moment_2
-\h\sigma^2 I) - (\Moment_2 - \sigma^2 I)\|_2$.
Therefore
$\|\wh{M}_2 - M_2\|_2 \leq \|(\wh\Moment_2 - \h\sigma^2 I) - (\Moment_2 -
\sigma^2 I)\|_2 + \|\wh{M}_2 - (\wh\Moment_2 - \h\sigma^2 I)\|_2 \leq
2\|(\wh\Moment_2 - \h\sigma^2 I) - (\Moment_2 - \sigma^2 I)\|_2 \leq
2\|\wh\Moment_2 - \Moment_2\|_2 + 2|\h\sigma^2 - \sigma^2| \leq
4\|\wh\Moment_2 - \Moment_2\|_2 + 4\|\mean\|_2\|\h\mean - \mean\|_2 +
2\|\h\mean - \mean\|_2^2$.
\end{proof}

Recall that the estimate of $M_3$, denoted by $\wh{M}_3$, is given by
$\underline{\wh\Moment_3} - \h\sigma^2 \sum_{i=1}^d (\underline{\h\mean}
\otimes e_i \otimes e_i + e_i \otimes \underline{\h\mean} \otimes e_i + e_i
\otimes e_i \otimes \underline{\h\mean})$.

\begin{lemma}[Accuracy of $\wh{M}_3$]
\label{lem:M3}
For any matrix $R \in \R^{d \times r}$,
\begin{align*}
\|\wh{M}_3[R,R,R] - M_3[R,R,R]\|_2
& \leq \|\underline{\wh\Moment_3}[R,R,R] - \Moment_3[R,R,R]\|_2
\\
& \qquad
+ 3 \|R\|_2^2 (\|R^\t(\underline{\h\mean} - \mean)\|_2 + \|R^\t\mean\|_2) \\
& \qquad\qquad
(\|\wh\Moment_2 - \Moment_2\|_2 + 2\|\mean\|_2\|\underline{\h\mean} -
\mean\|_2 + \|\underline{\h\mean} - \mean\|_2^2) \\
& \qquad
+ \sigma^2 \|R\|_2^2 \|R^\t(\underline{\h\mean} - \mean)\|_2 .
\end{align*}
\end{lemma}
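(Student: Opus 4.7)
The goal is to bound the discrepancy between $\wh{M}_3[R,R,R]$ and $M_3[R,R,R]$. Using the definitions of $M_3$ and $\wh{M}_3$, the difference decomposes into the raw third-moment error plus a correction error coming from the fact that we must subtract an estimate of the $\sigma^2$-dependent tensor rather than its true value. Concretely,
\begin{align*}
\wh{M}_3[R,R,R] - M_3[R,R,R]
& = (\underline{\wh\Moment_3}-\Moment_3)[R,R,R] \\
& \qquad - \sum_{i=1}^d \Bigl( R^\t(\h\sigma^2\underline{\h\mean}-\sigma^2\mean) \otimes R^\t e_i \otimes R^\t e_i + (\text{two symmetric terms}) \Bigr).
\end{align*}
By the triangle inequality this reduces the problem to bounding a single correction term (the other two being symmetric, yielding the factor of $3$).

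Next, for any vector $u\in\R^d$, the tensor $T := \sum_i (R^\t u) \otimes (R^\t e_i) \otimes (R^\t e_i)$ satisfies
\[
|T[a,b,c]| = |(R^\t u)^\t a| \cdot |b^\t R^\t R c| \leq \|R^\t u\|_2 \cdot \|R\|_2^2
\]
for unit $a,b,c$, so $\|T\|_2 \leq \|R^\t u\|_2 \cdot \|R\|_2^2$. Therefore, I have the clean bound
\[
\Bigl\| \sum_{i=1}^d R^\t(\h\sigma^2\underline{\h\mean}-\sigma^2\mean) \otimes R^\t e_i \otimes R^\t e_i \Bigr\|_2 \leq \|R\|_2^2 \cdot \|R^\t(\h\sigma^2\underline{\h\mean}-\sigma^2\mean)\|_2.
\]

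The key algebraic step is to split $\h\sigma^2\underline{\h\mean}-\sigma^2\mean$ in a way that matches the structure of the statement. I would write
\[
\h\sigma^2 \underline{\h\mean} - \sigma^2 \mean
= \sigma^2\bigl(\underline{\h\mean} - \mean\bigr) + \bigl(\h\sigma^2 - \sigma^2\bigr)\underline{\h\mean},
\]
so that $\|R^\t(\h\sigma^2\underline{\h\mean}-\sigma^2\mean)\|_2 \leq \sigma^2 \|R^\t(\underline{\h\mean}-\mean)\|_2 + |\h\sigma^2-\sigma^2|\cdot\|R^\t\underline{\h\mean}\|_2$, and then use the triangle inequality $\|R^\t\underline{\h\mean}\|_2 \leq \|R^\t(\underline{\h\mean}-\mean)\|_2 + \|R^\t\mean\|_2$. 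Plugging in \reflem{M2} to control $|\h\sigma^2-\sigma^2|$ yields exactly the three-factor product $(\|\wh\Moment_2-\Moment_2\|_2 + 2\|\mean\|_2\|\underline{\h\mean}-\mean\|_2 + \|\underline{\h\mean}-\mean\|_2^2)$ appearing in the lemma; the leftover $\sigma^2$-term in the decomposition produces the last summand.

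The proof is therefore a chain of triangle inequalities supported by one tensor-norm estimate and one invocation of \reflem{M2}; there is no real obstacle, only bookkeeping. The only subtle point is choosing the decomposition of $\h\sigma^2\underline{\h\mean}-\sigma^2\mean$ so that the pure $\sigma^2\|R^\t(\underline{\h\mean}-\mean)\|_2$ term is isolated from the product term that carries the $\|R^\t\mean\|_2$ factor; any other decomposition would produce a bound with the wrong multiplicative structure.
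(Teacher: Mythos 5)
Your proof is correct and follows essentially the same route as the paper: split off the correction tensor, decompose $\h\sigma^2\underline{\h\mean} - \sigma^2\mean$ so as to isolate the pure $\sigma^2(\underline{\h\mean}-\mean)$ part, bound each piece with the triangle inequality and the tensor estimate $\|\sum_i (R^\t u)\otimes(R^\t e_i)\otimes(R^\t e_i)\|_2 \le \|R^\t u\|_2\|R\|_2^2$, and invoke Lemma~\ref{lem:M2} for $|\h\sigma^2-\sigma^2|$. One small bookkeeping point, which affects the paper's own proof as well as yours: carrying through the factor of~3 from the three symmetric tensor terms actually gives $3\sigma^2\|R\|_2^2\|R^\t(\underline{\h\mean}-\mean)\|_2$ for the last summand rather than the $\sigma^2\|R\|_2^2\|R^\t(\underline{\h\mean}-\mean)\|_2$ appearing in the lemma statement.
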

\begin{proof}
Let $\wh{G} := \sum_{i=1}^d (\underline{\h\mean} \otimes e_i \otimes e_i +
e_i \otimes \underline{\h\mean} \otimes e_i + e_i \otimes e_i \otimes
\underline{\h\mean} $ and $G := \sum_{i=1}^d (\mean \otimes e_i \otimes e_i
+ e_i \otimes \mean \otimes e_i + e_i \otimes e_i \otimes \mean)$.
Then
\begin{align*}
\lefteqn{
\|(\wh{M}_3 - M_3)[R,R,R]\|_2
} \\
& = \|(\underline{\wh\Moment_3} - \Moment_3)[R,R,R]
- (\h\sigma^2 - \sigma^2) (\wh{G} - G)[R,R,R]
- (\h\sigma^2 - \sigma^2) G[R,R,R]
- \sigma^2 (\wh{G} - G)[R,R,R]\|_2 \\
& \leq \|(\underline{\wh\Moment_3} - \Moment_3)[R,R,R]\|_2
+ |\h\sigma^2 - \sigma^2| \|(\wh{G} - G)[R,R,R]\|_2
+ |\h\sigma^2 - \sigma^2| \|G[R,R,R]\|_2
\\
& \qquad
+ \sigma^2 \|(\wh{G} - G)[R,R,R]\|_2 .
\end{align*}
Observe that by the triangle inequality, $\|G[R,R,R]\|_2 \leq 3\|R\|_2^2
\|R^\t \mean\|_2$ and $\|(\wh{G} - G)[R,R,R]\|_2 \leq 3\|R\|_2^2
\|R^\t(\underline{\h\mean} - \mean)\|_2$.
Furthermore, by \reflem{M2}, we have $|\h\sigma^2 - \sigma^2| \leq
\|\wh\Moment_2 - \Moment_2\|_2 + 2\|\mean\|_2\|\underline{\h\mean} -
\mean\|_2 + \|\underline{\h\mean} - \mean\|_2^2$.
Therefore the claim follows.
\end{proof}

\subsection{Properties of projection and whitening operators}

Recall that $\wh{U} \in \R^{d \times k}$ is the matrix of left orthonormal
singular vectors of $\wh{M}_2$, and let $\wh{S} \in \R^{k \times k}$ be the
diagonal matrix of corresponding singular values.
Analogously define $U$ and $S$ relative to $M_2$.

Define $\Err_{M_2} := \|\wh{M}_2 - M_2\|_2 / \sig_k[M_2]$.

\begin{lemma}[Properties of projection operators] \label{lem:svd}
Assume $\Err_{M_2} \leq 1/3$.
Then
\begin{enumerate}
\item $(1 + \Err_{M_2}) S \succeq \wh{U}^\t \wh{M}_2 \wh{U} = \wh{S}
\succeq (1 - \Err_{M_2}) S \succ 0$.

\item $\sig_k[\wh{U}^\t U] \geq \sqrt{1 - (9/4)\Err_{M_2}^2} > 0$.

\item $\sig_k[\wh{U}^\t M_2 \wh{U}] \geq (1-(9/4)\Err_{M_2}^2)
\sig_k[M_2] > 0$.

\item $\|(I - \wh{U}\wh{U}^\t)UU^\t\|_2 \leq (3/2)\Err_{M_2}$.

\end{enumerate}
\end{lemma}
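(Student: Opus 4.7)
The plan is to establish Part 1 via Weyl's inequality, prove the sin-$\theta$ bound of Part 4 through a one-sided projection argument, and then obtain Parts 2 and 3 as direct consequences of Part 4.

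Since $\wh\Moment_2 - \h\sigma^2 I$ is symmetric, so is its best rank-$k$ spectral-norm approximation $\wh{M}_2$; hence the SVD of $\wh{M}_2$ coincides with its eigendecomposition up to signs, and $\wh{U}^\t\wh{M}_2\wh{U}=\wh{S}$. Weyl's inequality applied to $\wh{M}_2$ and $M_2$ yields $|\sig_i[\wh{M}_2]-\sig_i[M_2]|\leq\|\wh{M}_2-M_2\|_2=\Err_{M_2}\sig_k[M_2]$, and combining with $\sig_k[M_2]\leq\sig_i[M_2]$ for $i\in[k]$ gives $(1-\Err_{M_2})\sig_i[M_2]\leq\sig_i[\wh{M}_2]\leq(1+\Err_{M_2})\sig_i[M_2]$. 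Strict positivity follows from $\Err_{M_2}\leq 1/3$, completing Part 1.

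For Part 4, the key observation is $\range(\wh{M}_2)\subseteq\range(\wh{U})$, so $(I-\wh{U}\wh{U}^\t)\wh{M}_2=0$ and therefore
\[ \|(I-\wh{U}\wh{U}^\t)M_2\|_2 = \|(I-\wh{U}\wh{U}^\t)(M_2-\wh{M}_2)\|_2 \leq \|\wh{M}_2-M_2\|_2. \]
Write the eigendecomposition $M_2=USU^\t$ of the symmetric PSD rank-$k$ matrix $M_2$, so $M_2U=US$. Applying the standard bound $\|AS\|_2 \geq \sig_k[S]\|A\|_2$ (obtained by evaluating $A$ at $S^{-1}v$ for a unit vector $v$ with $\|Av\|_2=\|A\|_2$) with $A=(I-\wh{U}\wh{U}^\t)U$ yields $\sig_k[M_2]\|(I-\wh{U}\wh{U}^\t)U\|_2 \leq \|\wh{M}_2-M_2\|_2$, i.e., $\|(I-\wh{U}\wh{U}^\t)U\|_2 \leq \Err_{M_2}\leq(3/2)\Err_{M_2}$. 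Since $U$ has orthonormal columns, $\|(I-\wh{U}\wh{U}^\t)UU^\t\|_2=\|(I-\wh{U}\wh{U}^\t)U\|_2$, so Part 4 follows.

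Parts 2 and 3 are routine corollaries of Part 4. For Part 2, the Pythagorean identity applied to the unit vector $Ux$ (for unit $x\in\R^k$) with orthogonal projector $\wh{U}\wh{U}^\t$, together with $\|\wh{U}^\t y\|_2=\|\wh{U}\wh{U}^\t y\|_2$, gives $\|\wh{U}^\t Ux\|_2^2 = 1-\|(I-\wh{U}\wh{U}^\t)Ux\|_2^2$; minimizing over $x$ yields $\sig_k[\wh{U}^\t U]^2 = 1-\|(I-\wh{U}\wh{U}^\t)U\|_2^2 \geq 1-(9/4)\Err_{M_2}^2 > 0$. For Part 3, $\wh{U}^\t M_2\wh{U} = (\wh{U}^\t U)S(\wh{U}^\t U)^\t$ is PSD, and
\[ \sig_k[\wh{U}^\t M_2\wh{U}] = \sig_k[(\wh{U}^\t U)S^{1/2}]^2 \geq \sig_k[\wh{U}^\t U]^2\sig_k[M_2] \geq (1-(9/4)\Err_{M_2}^2)\sig_k[M_2], \]
using $\sig_k[AB]\geq\sig_k[A]\sig_k[B]$ for square matrices $A,B$. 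The main obstacle is Part 4, where the one-sided sin-$\theta$ argument crucially uses that $\wh{M}_2$ is symmetric with range exactly $\range(\wh{U})$; both are ensured by the hypothesis $\Err_{M_2}\leq 1/3$ in conjunction with Part 1.
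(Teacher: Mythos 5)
Your proof is correct, and the treatment of Part 1 via Weyl is identical to the paper's. Where you diverge is in establishing the subspace perturbation bound. The paper introduces $\wh{U}_\perp$, applies Wedin's $\sin\Theta$ theorem to bound $\|\wh{U}_\perp^\t U\|_2 \leq \|\wh{M}_2-M_2\|_2/\sig_k[\wh{M}_2]\leq \Err_{M_2}/(1-\Err_{M_2})\leq(3/2)\Err_{M_2}$, and then reads off Parts 2, 3, and 4 from that one quantity. You instead give a self-contained, one-sided argument: since $(I-\wh{U}\wh{U}^\t)\wh{M}_2=0$ and $M_2U=US$, one gets $\sig_k[M_2]\,\|(I-\wh{U}\wh{U}^\t)U\|_2 \leq \|(I-\wh{U}\wh{U}^\t)M_2\|_2 \leq \|\wh{M}_2-M_2\|_2$, which is elementary (no black-box perturbation theorem) and in fact yields the sharper constant $\Err_{M_2}$ rather than $(3/2)\Err_{M_2}$, because it divides by $\sig_k[M_2]$ instead of $\sig_k[\wh{M}_2]$. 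Both then derive Parts 2 and 3 the same way (Pythagoras, then $\sig_k[\wh{U}^\t M_2\wh{U}]\geq\sig_k[\wh{U}^\t U]^2\sig_k[M_2]$). One small remark: your closing sentence overstates the role of the hypothesis — the identity $(I-\wh{U}\wh{U}^\t)\wh{M}_2=0$ and the symmetry of $\wh{M}_2$ come for free from the construction of $\wh{U}$ as left singular vectors of the symmetric truncation, so Part 4 actually holds without assuming $\Err_{M_2}\leq1/3$; that assumption is only needed for Part 1 and to make the lower bounds in Parts 2 and 3 nontrivial. The parenthetical justification of $\|AS\|_2\geq\sig_k[S]\|A\|_2$ is a bit loosely worded (one should normalize $S^{-1}v$), but the inequality is standard and the overall argument is sound.
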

\begin{proof}
By the assumptions that $\Err_{M_2} \leq 1/3$ and $M_2$ is symmetric
positive definite, we have
\begin{equation*}
|\sig_t[\wh{M}_2] - \sig_t[M_2]| \leq
\|\wh{M}_2 - M_2\|_2 \leq \Err_{M_2} \sig_k[M_2]
, \quad \forall t \in [k]
\end{equation*}
by Weyl's inequality.
Therefore $\wh{M}_2$ is symmetric positive definite, and
\begin{equation*}
(1 + \Err_{M_2}) S \succeq
\wh{U}^\t \wh{M}_2 \wh{U} = \wh{S}
\succeq (1 - \Err_{M_2}) S ,
\end{equation*}
which proves the first claim.

Now let $\wh{U}_\perp \in \R^{d \times (d-k)}$ be a matrix with orthonormal
columns spanning the orthogonal complement of the range of $\wh{M}_2$.
By Wedin's theorem~\citep[Theorem 4.4, p.~262]{SS90} and the assumption
that $\Err_{M_2} \leq 1/3$,
\begin{equation*}
\|\wh{U}_\perp^\t U\|_2
\leq \frac{\|\wh{M}_2 - M_2\|_2}{\sig_k[\wh{M}_2]}
\leq \frac{\Err_{M_2}}{1-\Err_{M_2}} \leq \frac{3}{2}\Err_{M_2} \leq
\frac12 .
\end{equation*}
Therefore $\wh{U}^\t U$ is non-singular, and for any $v \in \R^k$,
$\|\wh{U}^\t Uv\|_2^2 = 1 - \|\wh{U}_\perp^\t Uv\|_2^2 \geq 1 -
(9/4) \Err_{M_2}^2$, which in turn implies the second claim.
The second claim then implies that $\sig_k[\wh{U}^\t M_2 \wh{U}] \geq
\sig_k[\wh{U}^\t U]^2 \sig_k[M_2] \geq (1-(9/4)\Err_{M_2}^2)
\sig_k[M_2]$, which gives the third claim.
For the final claim, observe that
$\|(I - \wh{U}\wh{U}^\t)UU^\t\|_2 = \|\wh{U}_\perp \wh{U}_\perp^\t
UU^\t\|_2 = \|\wh{U}_\perp^\t U\|_2 \leq (3/2)\Err_{M_2}$, using the fact
that $\wh{U}_\perp$ and $U$ have orthonormal columns, and the above
displayed inequality.
\end{proof}

Recall that $\wh{W} = \wh{U} (\wh{U}^\t \wh{M}_2 \wh{U})^{\dag1/2}$.

\begin{lemma}[Properties of whitening operators] \label{lem:whitening}
Define $W := \wh{W} (\wh{W}^\t M_2 \wh{W})^{\dag1/2}$.
Assume $\Err_{M_2} \leq 1/3$.
Then
\begin{enumerate}
\item $\wh{W}^\t M_2 \wh{W}$ is symmetric positive definite, $W^\t
M_2 W = I$, and $W^\t \Means\diag(w)^{1/2}$ is orthogonal.

\item $\|\wh{W}\|_2 \leq \frac1{\sqrt{(1-\Err_{M_2}) \sig_k[M_2]}}$.

\item $\|(\wh{W}^\t M_2 \wh{W})^{1/2} - I\|_2 \leq (3/2)\Err_{M_2}$, \\
$\|(\wh{W}^\t M_2 \wh{W})^{-1/2} - I\|_2 \leq (3/2)\Err_{M_2}$, \\
$\|\wh{W}^\t \Means\diag(w)^{1/2}\|_2 \leq \sqrt{1 + (3/2)\Err_{M_2}}$, \\
$\|(\wh{W} - W)^\t \Means\diag(w)^{1/2}\|_2 \leq (3/2)\sqrt{1 +
(3/2)\Err_{M_2}} \Err_{M_2}$.

\end{enumerate}
\end{lemma}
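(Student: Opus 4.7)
The centerpiece of the plan is the identity $\wh{W}^\t \wh{M}_2 \wh{W} = I$. Since $\wh{M}_2$ is symmetric positive semidefinite with rank at most $k$ and $\wh{U}$ spans its range, item~1 of \reflem{svd} gives $\wh{U}^\t \wh{M}_2 \wh{U} = \wh{S}$ (positive definite and diagonal), so $\wh{W} = \wh{U}\wh{S}^{-1/2}$ and hence $\wh{W}^\t \wh{M}_2 \wh{W} = I$. Substituting then yields
\[
\wh{W}^\t M_2 \wh{W} - I \;=\; \wh{W}^\t(M_2 - \wh{M}_2)\wh{W},
\]
so by submultiplicativity, Weyl's inequality $\sig_k[\wh{M}_2] \geq (1-\Err_{M_2})\sig_k[M_2]$, and $\|\wh{W}\|_2 = \sig_k[\wh{M}_2]^{-1/2}$,
\[
\|\wh{W}^\t M_2 \wh{W} - I\|_2 \;\leq\; \|\wh{W}\|_2^2\,\|\wh{M}_2 - M_2\|_2 \;\leq\; \frac{\Err_{M_2}}{1-\Err_{M_2}} \;\leq\; \tfrac32\Err_{M_2}.
\]
This simultaneously establishes item~2 and shows $\wh{W}^\t M_2\wh{W}$ is positive definite. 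The rest of item~1 follows by direct substitution into $W = \wh{W}(\wh{W}^\t M_2 \wh{W})^{-1/2}$: the two matrix square roots collapse to give $W^\t M_2 W = I$, and since $M_2 = \Means\diag(w)\Means^\t$, this says the $k \times k$ matrix $W^\t \Means\diag(w)^{1/2}$ has orthonormal rows and is therefore orthogonal.

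For item~3, set $A := \wh{W}^\t M_2 \wh{W}$, whose eigenvalues lie in $[1 - (3/2)\Err_{M_2},\, 1+(3/2)\Err_{M_2}] \subseteq [1/2,\,3/2]$ thanks to $\Err_{M_2} \leq 1/3$. The elementary scalar identities
\[
|\sqrt{\lambda} - 1| \;=\; \frac{|\lambda - 1|}{\sqrt{\lambda} + 1}, \qquad |\lambda^{-1/2} - 1| \;=\; \frac{|\lambda - 1|}{\lambda + \sqrt{\lambda}},
\]
together with $\sqrt{\lambda} + 1 \geq 1$ and $\lambda + \sqrt{\lambda} \geq 1$ on $\lambda \geq 1/2$, give $\|A^{1/2} - I\|_2 \leq \|A - I\|_2 \leq (3/2)\Err_{M_2}$ and, analogously, $\|A^{-1/2} - I\|_2 \leq (3/2)\Err_{M_2}$. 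For the third sub-bound, note $\|\wh{W}^\t\Means\diag(w)^{1/2}\|_2^2 = \|\wh{W}^\t M_2 \wh{W}\|_2 \leq 1 + \|A - I\|_2 \leq 1 + (3/2)\Err_{M_2}$. For the fourth, use $W = \wh{W} A^{-1/2}$ to write $(\wh{W} - W)^\t \Means\diag(w)^{1/2} = (I - A^{-1/2})\,\wh{W}^\t \Means\diag(w)^{1/2}$, and combine the previous two sub-bounds by submultiplicativity.

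The main technical subtlety I anticipate is the tight scalar inequality $|\lambda^{-1/2} - 1| \leq |\lambda - 1|$; the lazy alternative $\|A^{-1/2} - I\|_2 \leq \|A^{-1/2}\|_2\,\|I - A^{1/2}\|_2$ would introduce a spurious $\sqrt{2}$ factor from $\|A^{-1/2}\|_2 \leq \sqrt{2}$ and miss the stated constant $3/2$. Recognizing that $\lambda + \sqrt{\lambda} \geq 1$ is ensured by the hypothesis $\Err_{M_2} \leq 1/3$ is what keeps the constant clean. Everything else is routine: Weyl's inequality for singular-value perturbation, the fact that $\wh{M}_2$ is PSD on its range (so $\wh{U}^\t \wh{M}_2 \wh{U}$ is diagonal and positive), and submultiplicativity of the spectral norm.
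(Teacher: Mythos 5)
Your proposal is correct and follows essentially the same path as the paper: the key identity $\wh{W}^\t\wh{M}_2\wh{W}=I$ reduces $\wh{W}^\t M_2\wh{W}-I$ to $\wh{W}^\t(M_2-\wh{M}_2)\wh{W}$, the eigenvalue bound for the inverse square root uses the same tight scalar inequality (the paper states it as $|(1+x)^{-1/2}-1|\le|x|$ for $|x|\le 1/2$, which is your $|\lambda^{-1/2}-1|\le|\lambda-1|$ on $[1/2,3/2]$), and the last two sub-bounds come from the Gram-matrix identity $\|\wh{W}^\t\Means\diag(w)^{1/2}\|_2^2=\|\wh{W}^\t M_2\wh{W}\|_2$ and the factorization $(\wh{W}-W)^\t\Means\diag(w)^{1/2}=(I-A^{-1/2})\wh{W}^\t\Means\diag(w)^{1/2}$, exactly as in the paper.
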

\begin{proof}
By \reflem{svd} (first and third claims), the matrices $\wh{U}^\t \wh{M}_2
\wh{U}$ and $\wh{U}^\t M_2 \wh{U}$ are symmetric positive definite.
Therefore
\begin{align*}
\wh{W}^\t M_2 \wh{W}
& =
(\wh{U}^\t \wh{M}_2 \wh{U})^{-1/2}
(\wh{U}^\t M_2 \wh{U})
(\wh{U}^\t \wh{M}_2 \wh{U})^{-1/2}
\succ 0
,
\\
W
& = \wh{W} (\wh{W}^\t M_2 \wh{W})^{-1/2}
,
\\
W^\t M_2 W
& =
(\wh{W}^\t M_2 \wh{W})^{-1/2}
(\wh{W}^\t M_2 \wh{W})
(\wh{W}^\t M_2 \wh{W})^{-1/2}
= I
.
\end{align*}
Since $M_2 = \Means\diag(w)\Means^\t$, it follows from the third equation
above that $W^\t \Means\diag(w)^{1/2}$ is orthogonal.
Thus the first claim is established.

For the second claim, note that
\begin{align*}
\|\wh{W}\|_2
& \leq \|(\wh{U}^\t \wh{M}_2 \wh{U})^{-1/2}\|_2
= \sig_k[\wh{U}^\t \wh{M}_2 \wh{U}]^{-1/2}
\leq ((1-\Err_{M_2})\sig_k[M_2])^{-1/2}
\end{align*}
where the last inequality follows from \reflem{svd} (first claim).

To show the third claim, we first bound $\|\wh{W}^\t M_2 \wh{W} - I\|_2$ as
\begin{align*}
\|\wh{W}^\t M_2 \wh{W} - I\|_2
& = \|\wh{W}^\t (M_2 - \wh{M}_2) \wh{W}\|_2 \\
& \leq \|\wh{W}\|_2^2 \|M_2 - \wh{M}_2\|_2 \\
& \leq \frac{\Err_{M_2}}{1-\Err_{M_2}} \leq \frac32 \Err_{M_2} \leq \frac12
\end{align*}
where the second inequality follows from the first claim.
This implies that every eigenvalue of $\wh{W}^\t M_2 \wh{W}$ is contained
in the interval of radius $(3/2)\Err_{M_2}$ around $1$.
Because $|(1+x)^{-1/2}-1| \leq |x|$ for all $|x| \leq 1/2$, the same is
true of the eigenvalues of $(\wh{W}^\t M_2 \wh{W})^{-1/2}$:
\begin{align*}
\|(\wh{W}^\t M_2 \wh{W})^{-1/2} - I\|_2
& \leq \frac32 \Err_{M_2}
.
\end{align*}
Furthermore,
\begin{align*}
\|\wh{W}^\t \Means\diag(w)^{1/2}\|_2^2
& = \|\wh{W}^\t M_2 \wh{W}\|_2 \\
& = \|I + \wh{W}^\t M_2 \wh{W} - I\|_2 \\
& \leq 1 + \|\wh{W}^\t M_2 \wh{W} - I\|_2 \leq 1 + \frac32 \Err_{M_2}
,
\end{align*}
so
\begin{align*}
\|(\wh{W} - W)^\t \Means\diag(w)^{1/2}\|_2
& = \|(I - (\wh{W}^\t M_2 \wh{W})^{-1/2}) \wh{W}^\t
\Means\diag(w)^{1/2}\|_2 \\
& \leq \|I - (\wh{W}^\t M_2 \wh{W})^{-1/2}\|_2 \|\wh{W}^\t
\Means\diag(w)^{1/2}\|_2
\leq \frac32 \Err_{M_2} \sqrt{1 + \frac32 \Err_{M_2}}
.
\end{align*}
This establishes the third claim.
\end{proof}

Define $\wh{T} := \wh{M}_3[\wh{W},\wh{W},\wh{W}]$ and $T := M_3[W,W,W]$,
both symmetric tensors in $\R^{k \times k \times k}$.
Also, define $\wh{T}[u] := \wh{M}_3[\wh{W},\wh{W},\wh{W}u]$ and $T[u] :=
M_3[W,W,Wu]$, both symmetric matrices in $\R^{k \times k}$.

\begin{lemma}[Tensor structure] \label{lem:tensor-structure}
The tensor $T$ can be written as
\begin{align*}
T
& = \sum_{i=1}^k \frac1{\sqrt{w_i}}
(W^\t \Means\diag(w)^{1/2} e_i)
\otimes (W^\t \Means\diag(w)^{1/2} e_i)
\otimes (W^\t \Means\diag(w)^{1/2} e_i)
\end{align*}
where the vectors $\{ W^\t \Means\diag(w)^{1/2} e_i : i \in [k] \}$ are
orthonormal.
Furthermore, the eigenvectors of $T[u]$ are
$\{ W^\t \Means\diag(w)^{1/2} e_i : i \in [k] \}$
and the corresponding eigenvalues are $\{ u^\t W^\t \mean_i : i \in [k]
\}$.
\end{lemma}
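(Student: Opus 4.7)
The plan is to unfold the definition of $T = M_3[W,W,W]$ using the rank-one decomposition of $M_3$ from \refthm{moments}, and then read off the decomposition from what the whitening operator does to the factors. Everything reduces to substitution once one identifies the ``whitened mean vectors'' $\tl{\mean}_i := W^\t \Means\diag(w)^{1/2} e_i = \sqrt{w_i}\, W^\t \mean_i$.

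First, by \refthm{moments} (specialized to the common covariance case), $M_3 = \sum_{i=1}^k w_i\, \mean_i \otimes \mean_i \otimes \mean_i$. Applying the trilinear contraction with $W$ in all three modes gives
\[
T \;=\; M_3[W,W,W] \;=\; \sum_{i=1}^k w_i\, (W^\t \mean_i) \otimes (W^\t \mean_i) \otimes (W^\t \mean_i).
\]
Writing $W^\t \mean_i = \tl{\mean}_i/\sqrt{w_i}$ and pulling out the scalars, each summand becomes $w_i \cdot w_i^{-3/2}\, \tl{\mean}_i^{\otimes 3} = w_i^{-1/2}\, \tl{\mean}_i^{\otimes 3}$, which is precisely the claimed factorization.

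Next, the orthonormality of $\{\tl{\mean}_i : i \in [k]\}$ is immediate from \reflem{whitening} (first claim), which states that $W^\t \Means\diag(w)^{1/2}$ is an orthogonal matrix; its columns are exactly the $\tl{\mean}_i$.

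Finally, for the eigendecomposition of $T[u] := M_3[W,W,Wu]$: by definition of the multilinear contraction, $T[u]$ equals $T$ contracted along the third mode with $u \in \R^k$. Substituting the decomposition just derived,
\[
T[u] \;=\; \sum_{i=1}^k \frac{\dotp{\tl{\mean}_i, u}}{\sqrt{w_i}}\, \tl{\mean}_i \tl{\mean}_i^\t \;=\; \sum_{i=1}^k (u^\t W^\t \mean_i)\, \tl{\mean}_i \tl{\mean}_i^\t,
\]
where the second equality uses $\dotp{\tl{\mean}_i, u} = \sqrt{w_i}\, u^\t W^\t \mean_i$. Since $\{\tl{\mean}_i\}$ is an orthonormal basis of $\R^k$, this display is literally the spectral decomposition of the symmetric matrix $T[u]$, with eigenvectors $\tl{\mean}_i$ and eigenvalues $u^\t W^\t \mean_i$, as required. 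There is no real obstacle here: the lemma is essentially a bookkeeping statement that the whitening $W$ orthonormalizes the rank-one factors of $M_3$, and the only thing to check carefully is the scaling, which is why the $1/\sqrt{w_i}$ appears in the decomposition but drops out in the eigenvalues.
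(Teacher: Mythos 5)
Your proof is correct and takes the same route as the paper: substitute the rank-one form $M_3 = \sum_i w_i \mean_i^{\otimes 3}$ into the trilinear contraction, re-express in terms of $\tl\mean_i := W^\t \Means\diag(w)^{1/2}e_i$, invoke \reflem{whitening} for orthonormality, and read off the spectral decomposition of $T[u]$. The paper's proof is simply a terse version of this same bookkeeping argument.
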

\begin{proof}
The structure of $T$ follows from \reflem{structure}, and the orthogonality
of $\{ W^\t \Means\diag(w)^{1/2} e_i : i \in [k] \}$ follows from
\reflem{whitening} (first claim).
The eigendecomposition of $T[u]$ is then readily seen from the structure of
$T$.
\end{proof}

\begin{lemma}[Tensor accuracy] \label{lem:tensor}
Assume $\Err_{M_2} \leq 1/3$.
Then
\begin{align*}
\|\wh{T} - T\|_2
& \leq \|\wh{M}_3[\wh{W},\wh{W},\wh{W}] - M_3[\wh{W},\wh{W},\wh{W}]\|_2
+ \frac{6}{\sqrt{w_{\min}}} \Err_{M_2}
.
\end{align*}
\end{lemma}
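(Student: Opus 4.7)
The plan is to split via the triangle inequality
\begin{equation*}
\|\wh{T} - T\|_2
\leq \|(\wh{M}_3 - M_3)[\wh{W},\wh{W},\wh{W}]\|_2
+ \|M_3[\wh{W},\wh{W},\wh{W}] - M_3[W,W,W]\|_2 .
\end{equation*}
The first term on the right is already the first term in the claimed bound, so the remaining task is to show that the second, purely deterministic piece is at most $6\Err_{M_2}/\sqrt{w_{\min}}$. The reason for splitting the error this way is that the empirical concentration lemmas (\reflem{moments}) naturally control moments in projected/whitened form, so it is convenient to move $W$ to $\wh{W}$ before using them.

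First I would invoke \reflem{structure} to write $M_3 = \sum_{i=1}^k w_i\,\mean_i \otimes \mean_i \otimes \mean_i$, obtaining
\begin{equation*}
M_3[\wh{W},\wh{W},\wh{W}] - M_3[W,W,W]
= \sum_{i=1}^k w_i \bigl( (\wh{W}^\t \mean_i)^{\otimes 3} - (W^\t \mean_i)^{\otimes 3} \bigr) .
\end{equation*}
Normalize by setting $a_i := \sqrt{w_i}\, W^\t \mean_i$ and $\wh{a}_i := \sqrt{w_i}\, \wh{W}^\t \mean_i$. By the first claim of \reflem{whitening}, the vectors $\{a_i\}_{i=1}^k$ are the columns of the orthogonal matrix $W^\t \Means\diag(w)^{1/2}$, hence form an orthonormal basis of $\R^k$; so the display rewrites as $\sum_i w_i^{-1/2}(\wh{a}_i^{\otimes 3} - a_i^{\otimes 3})$, and each scalar coefficient is at most $1/\sqrt{w_{\min}}$.

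Next I would telescope each summand using
\begin{equation*}
\wh{a}_i^{\otimes 3} - a_i^{\otimes 3}
= e_i \otimes \wh{a}_i \otimes \wh{a}_i
+ a_i \otimes e_i \otimes \wh{a}_i
+ a_i \otimes a_i \otimes e_i ,
\qquad e_i := \wh{a}_i - a_i ,
\end{equation*}
and bound the operator norm of each of the three resulting tensorial sums via Cauchy--Schwarz after an appropriate matrix unfolding. Let $E$ (resp.\ $\wh{Y}$) be the $k \times k$ matrix whose $i$-th column is $e_i$ (resp.\ $\wh{a}_i$); writing, for instance, $P := \sum_i w_i^{-1/2} e_i \otimes \wh{a}_i \otimes \wh{a}_i$ and unfolding along mode $1$, I expect a bound of the form $\|P\|_2 \leq \|E\|_2 \|\wh{Y}\|_2^2/\sqrt{w_{\min}}$. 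The analogous bounds for the middle and last summands gain a factor of $1$ in place of $\|\wh{Y}\|_2$ for each slot occupied by an $a_i$, thanks to the exact orthonormality of $\{a_i\}$; this yields dominating factors $\|E\|_2\|\wh{Y}\|_2/\sqrt{w_{\min}}$ and $\|E\|_2/\sqrt{w_{\min}}$, respectively.

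To finish, I plug in the two norm estimates from \reflem{whitening}, namely $\|\wh{Y}\|_2 \leq \sqrt{1 + (3/2)\Err_{M_2}}$ and $\|E\|_2 \leq (3/2)\sqrt{1 + (3/2)\Err_{M_2}}\,\Err_{M_2}$, use the hypothesis $\Err_{M_2} \leq 1/3$ to collapse constants, and sum the three contributions to match the claimed $6\Err_{M_2}/\sqrt{w_{\min}}$. The main technical obstacle I anticipate is the tensor operator-norm step: a careless bound of each rank-one tensor separately blows up by a factor of $k$, so one must unfold each sum into a Khatri--Rao-type product and exploit the orthonormality of $\{a_i\}$ (and the near-orthonormality of $\{\wh{a}_i\}$) in different slots to keep the constants small enough to absorb into the clean factor $6$.
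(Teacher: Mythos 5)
Your proposal follows essentially the same route as the paper: split $\|\wh{T}-T\|_2$ by the triangle inequality into a statistical part and a deterministic whitening-error part, telescope the latter over the three tensor slots, and pay out against the bound $\|M_3[W,W,W]\|_2 \leq 1/\sqrt{w_{\min}}$ plus the norm estimates from \reflem{whitening}. Two remarks on presentation and one on arithmetic.

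First, you telescope at the level of the whitened vectors $a_i,\wh{a}_i$ rather than the whitening maps $W,\wh{W}$; these are the same computation. Writing $H := (\wh{W}^\t M_2 \wh{W})^{1/2}$ and $V := W^\t \Means\diag(w)^{1/2}$ (orthogonal by \reflem{whitening}), one has $\wh{W} = WH$, hence $E = (H-I)V$ and $\wh{Y} = HV$, so $\|E\|_2 = \|H-I\|_2$ and $\|\wh{Y}\|_2 = \|H\|_2$; your three per-slot factors are exactly the paper's $\|H-I\|_2\|H\|_2^2$, $\|H-I\|_2\|H\|_2$, $\|H-I\|_2$.

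Second, the Khatri--Rao unfolding is heavier machinery than needed. Since $\sum_i w_i^{-1/2}\, e_i \otimes \wh{a}_i \otimes \wh{a}_i = T[\,VE^\t,\, V\wh{Y}^\t,\, V\wh{Y}^\t\,]$ with $T := \sum_i w_i^{-1/2} a_i^{\otimes 3}$, the ordinary multilinear submultiplicativity $\|T[A,B,C]\|_2 \leq \|T\|_2\|A\|_2\|B\|_2\|C\|_2$ gives $\|T\|_2\|E\|_2\|\wh{Y}\|_2^2$ directly, with no unfolding or Hadamard-product estimate; this is also how the $k$-dependent blow-up you worry about is avoided.

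Third, a numerical caution: plugging in the last bound of \reflem{whitening}, $\|E\|_2 \leq (3/2)\sqrt{1+(3/2)\Err_{M_2}}\,\Err_{M_2}$, gives a total of roughly $6.8\,\Err_{M_2}/\sqrt{w_{\min}}$ at $\Err_{M_2}=1/3$, which overshoots the target $6$. Using the tighter $\|E\|_2 = \|H-I\|_2 \leq (3/2)\Err_{M_2}$ (available from the same lemma via orthogonality of $V$) together with $\|\wh{Y}\|_2 = \|H\|_2 \leq \sqrt{1+(3/2)\Err_{M_2}}$ gives $(3/2)\Err_{M_2}(1 + \sqrt{3/2} + 3/2) \approx 5.6\,\Err_{M_2}$, which does fit under $6$. (The paper's own displayed intermediate bound, which uses $1+(3/2)\Err_{M_2}$ where $\sqrt{1+(3/2)\Err_{M_2}}$ should appear, has the same slight overshoot if taken literally; the stated constant $6$ is recovered with the tighter square-root form.)
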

\begin{proof}
By \reflem{tensor-structure}, $T = \sum_{i=1}^k w_i^{-1/2} v_i \otimes v_i
\otimes v_i$ for some orthonormal vectors $\{ v_i : i \in [k] \}$, so
$\|T\|_2 \leq 1/\sqrt{w_{\min}}$.
By \reflem{whitening} (first and third claims),
$\wh{W} = W (\wh{W}^\t M_2 \wh{W})^{1/2}$
and
$\|(\wh{W}^\t M_2 \wh{W})^{1/2} - I\|_2 \leq (3/2) \Err_{M_2}$.
Therefore
\begin{align*}
\lefteqn{
\|M_3[\wh{W},\wh{W},\wh{W}] - M_3[W,W,W]\|_2
} \\
& \leq
\|M_3[\wh{W}-W,\wh{W},\wh{W}]\|_2
+ \|M_3[W,\wh{W}-W,\wh{W}]\|_2
+ \|M_3[W,W,\wh{W}-W]\|_2
\\
& \leq
\|M_3[W,W,W]\|_2
\biggl(
\|(\wh{W}^\t M_2 \wh{W})^{1/2} - I\|_2
\|(\wh{W}^\t M_2 \wh{W})^{1/2}\|_2^2
\\
& \qquad
+ \|(\wh{W}^\t M_2 \wh{W})^{1/2} - I\|_2
\|(\wh{W}^\t M_2 \wh{W})^{1/2}\|_2
+ \|(\wh{W}^\t M_2 \wh{W})^{1/2} - I\|_2
\biggr)
\\
& \leq
\|M_3[W,W,W]\|_2
\biggl(
(1 + (3/2) \Err_{M_2})^2 (3/2) \Err_{M_2}
+ (1 + (3/2) \Err_{M_2}) (3/2) \Err_{M_2}
+ (3/2) \Err_{M_2}
\biggr)
\\
& \leq 6 \|M_3[W,W,W]\|_2 \Err_{M_2}
\leq \frac{6}{\sqrt{w_{\min}}} \Err_{M_2} .
\end{align*}
Thus we can bound $\|\wh{T} - T\|_2$ using the triangle inequality
and the above bound:
\begin{align*}
\|\wh{T} - T\|_2
& = \|\wh{M}_3[\wh{W},\wh{W},\wh{W}] - M_3[W,W,W]\|_2 \\
& \leq \|\wh{M}_3[\wh{W},\wh{W},\wh{W}] - M_3[\wh{W},\wh{W},\wh{W}]\|_2
+ \|M_3[\wh{W},\wh{W},\wh{W}] - M_3[W,W,W]\|_2
\\
& \leq \|\wh{M}_3[\wh{W},\wh{W},\wh{W}] - M_3[\wh{W},\wh{W},\wh{W}]\|_2
+ \frac{6}{\sqrt{w_{\min}}} \Err_{M_2}
.
\qedhere
\end{align*}
\end{proof}

\subsection{Eigendecomposition analysis}

Define
\begin{equation} \label{eq:gamma}
\gamma := \frac1{2\sqrt{w_{\max}} \sqrt{ek}\binom{k+1}{2}}
\end{equation}
where $w_{\max} := \max_{i \in [k]} w_i$.
\begin{lemma}[Random separation] \label{lem:rand-sep}
Let $\theta \in \R^k$ be a random vector distributed uniformly over the
unit sphere in $\R^k$.
Let $Q := \{ e_i - e_j : \{i,j\} \in \binom{k}{2} \} \cup \{ e_i : i \in
[k] \}$.
Then
\begin{equation*}
\Pr\Bigl[ \min_{q \in Q} |\theta^\t W^\t Aq| > \gamma \Bigr]
\geq \frac12
\end{equation*}
where the probability is taken with respect to the distribution of
$\theta$.
\end{lemma}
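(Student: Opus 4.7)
The plan is to bound $\Pr[|\theta^\t W^\t Aq| \le \gamma]$ separately for each $q \in Q$ (there are $\binom{k}{2} + k = \binom{k+1}{2}$ such vectors) and then apply a union bound. The key observation, by the first claim of Lemma~\ref{lem:whitening}, is that $W^\t \Means\diag(w)^{1/2}$ is orthogonal, so I can pin down the norm of $W^\t Aq$ exactly. For $q = e_i$, we have $W^\t Ae_i = w_i^{-1/2}\,(W^\t \Means\diag(w)^{1/2})e_i$, so $\|W^\t Ae_i\|_2 = 1/\sqrt{w_i} \ge 1/\sqrt{w_{\max}}$; for $q = e_i - e_j$ with $i \ne j$, the same orthogonality gives $\|W^\t A(e_i-e_j)\|_2 = \sqrt{1/w_i + 1/w_j} \ge 1/\sqrt{w_{\max}}$. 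Hence every $u_q := W^\t Aq$ has Euclidean norm at least $1/\sqrt{w_{\max}}$.

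Next I would invoke the standard anti-concentration fact for projections of a uniformly random unit vector onto a fixed direction: if $\theta$ is uniform on the sphere in $\R^k$ and $u \in \R^k$ is fixed, then by rotational invariance the density of $\theta^\t u / \|u\|_2$ on $[-1,1]$ is proportional to $(1-t^2)^{(k-3)/2}$, which is bounded by $\sqrt{k/(2\pi)}\cdot (1-t^2)^{-1/2}\cdot (1-t^2)^{(k-2)/2} \le \sqrt{ek/(2\pi)}$ for all $t$ (using $(1-1/k)^{k-2} \le 1/e$ type estimates). Integrating gives
\[
\Pr\bigl[|\theta^\t u| \le \varepsilon\bigr]
\;\le\; \frac{\varepsilon}{\|u\|_2}\sqrt{ek}
\]
(with a benign absolute constant absorbed into the $\sqrt{e}$; the exact form found in, e.g., \citet{AHK12,HKZ12} is what one wants). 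Applying this with $u = u_q$ and $\varepsilon = \gamma$ yields, for each $q$,
\[
\Pr\bigl[|\theta^\t W^\t Aq| \le \gamma\bigr]
\;\le\; \gamma\sqrt{w_{\max}}\sqrt{ek}.
\]

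Finally, a union bound over the $|Q| = \binom{k+1}{2}$ choices of $q$ gives
\[
\Pr\Bigl[\min_{q \in Q}|\theta^\t W^\t Aq| \le \gamma\Bigr]
\;\le\; \binom{k+1}{2}\gamma\sqrt{w_{\max}}\sqrt{ek}
\;=\; \tfrac12,
\]
by the definition of $\gamma$ in~\eqref{eq:gamma}. Taking complements completes the proof. The main (minor) obstacle is establishing the anti-concentration bound with the correct constant; this is essentially a one-variable calculation on the marginal density of $\theta_1$, and one could alternately just cite the corresponding lemma from \citet{AHK12} to avoid reproducing it. Everything else is bookkeeping with the explicit values of $\|W^\t Aq\|_2$ supplied by the orthogonality of $W^\t\Means\diag(w)^{1/2}$.
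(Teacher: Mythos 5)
Your proposal is correct and follows essentially the same route as the paper: it uses the orthogonality of $W^\t \Means\diag(w)^{1/2}$ from Lemma~\ref{lem:whitening} to compute $\min_{q\in Q}\|W^\t Aq\|_2 = 1/\sqrt{w_{\max}}$, and then applies anti-concentration of a random unit vector against the finite set $Q$ with a union bound; the paper packages the anti-concentration-plus-union-bound step as a single invocation of Lemma~\ref{lem:anticonc} (cited from \citealp{DG03,AHK12}) with $\delta = 1/2$, whereas you unroll it explicitly, but the content is identical.
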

\begin{proof}
By \reflem{anticonc}, with probability at least $1/2$,
\[
\min_{q \in Q} |\theta^\t W^\t Aq|
> \frac{\min_{q \in Q} \|W^\t Aq\|_2}{2\sqrt{ek}\binom{k+1}{2}}
.
\]
Now fix any $i \neq j$.
By \reflem{whitening} (first claim), $W^\t \Means \diag(w)^{1/2}$ is
orthogonal, so $\|W^\t A(e_i - e_j)\|_2 = \|\diag(w)^{-1/2}(e_i - e_j)\|_2
= \|e_i/\sqrt{w_i} - e_j/\sqrt{w_j}\|_2 = \sqrt{1/w_i + 1/w_j}$.
Similarly, for any $i \in [k]$, $\|W^\t Ae_i\|_2 = \sqrt{1/w_i}$.
Therefore $\min_{q \in Q} \|W^\t Aq\|_2 = \min_{i \in [k]} \sqrt{1/w_i}$.
\end{proof}

Let $\Err_T := \|\wh{T} - T\|_2 / \gamma$.
Let $\theta_1, \theta_2, \dotsc, \theta_t$ be the random unit vectors in
$\R^k$ drawn by the algorithm.
Define $\wh{T}[\theta_{t'}] := \wh{M}_3[\wh{W},\wh{W},\wh{W}\theta_{t'}]$
and $T[\theta_{t'}] := M_3[W,W,W\theta_{t'}]$.
Also, let $\Delta(t') := \min \{ |\lambda_i - \lambda_j| : i \neq j \} \cup
\{ |\lambda_i| : i \in [k] \}$ for the eigenvalues $\{ \lambda_i : i \in
[k] \}$ of $T[\theta_{t'}]$, and let $\wh\Delta(t') := \min \{ |\h\lambda_i
- \h\lambda_j| : i \neq j \} \cup \{ |\lambda_i| : i \in [k] \}$ for the
eigenvalues $\{ \h\lambda_i : i \in [k] \}$ of $\wh{T}[\theta_{t'}]$.

\begin{lemma}[Eigenvalue gap] \label{lem:best-gap}
Pick any $\delta \in (0,1)$.
If $t \geq \log_2(1/\delta)$, then with probability at least $1-\delta$,
the trial $\h\tau := \arg\max_{t' \in [t]} \wh\Delta(t')$ satisfies
\begin{align*}
\wh\Delta(\h\tau) & \geq \gamma - 2\Err_T \gamma
.
\end{align*}
\end{lemma}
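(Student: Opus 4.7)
The plan is to combine the random separation guarantee of \reflem{rand-sep} with Weyl's inequality applied to the contracted symmetric tensor. By \reflem{tensor-structure}, the eigenvalues of $T[\theta_{t'}] = M_3[W,W,W\theta_{t'}]$ are precisely $\{ \theta_{t'}^\t W^\t \mean_i : i \in [k] \}$, so $\Delta(t') = \min_{q \in Q} |\theta_{t'}^\t W^\t \Means q|$ for the set $Q = \{ e_i - e_j : i \neq j \} \cup \{ e_i : i \in [k] \}$ appearing in \reflem{rand-sep}. Thus, conditional on the sample, each trial's $\theta_{t'}$ satisfies $\Delta(t') > \gamma$ with probability at least $1/2$, and since the $\theta_{t'}$ are drawn independently, the probability that every trial fails this event is at most $2^{-t} \leq \delta$. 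Hence with probability at least $1 - \delta$ there exists some $\tau^* \in [t]$ with $\Delta(\tau^*) > \gamma$.

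I next condition on such a $\tau^*$ and transfer the lower bound on $\Delta(\tau^*)$ to the empirical quantity $\wh\Delta(\tau^*)$. Both $\wh{T}[\theta_{\tau^*}]$ and $T[\theta_{\tau^*}]$ are symmetric $k \times k$ matrices (inherited from the symmetry of $\wh{M}_3$ and $M_3$), and the definition of the tensor operator norm together with $\|\theta_{\tau^*}\|_2 = 1$ gives $\|\wh{T}[\theta_{\tau^*}] - T[\theta_{\tau^*}]\|_2 \leq \|\wh{T} - T\|_2 = \Err_T \gamma$. Weyl's inequality then produces a permutation $\pi$ on $[k]$ such that $|\h\lambda_i - \lambda_{\pi(i)}| \leq \Err_T \gamma$ for every $i \in [k]$. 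The triangle inequality yields $|\h\lambda_i - \h\lambda_j| \geq |\lambda_{\pi(i)} - \lambda_{\pi(j)}| - 2 \Err_T \gamma \geq \gamma - 2 \Err_T \gamma$ for $i \neq j$, and similarly $|\h\lambda_i| \geq |\lambda_{\pi(i)}| - \Err_T \gamma \geq \gamma - \Err_T \gamma$, so taking the overall minimum yields $\wh\Delta(\tau^*) \geq \gamma - 2 \Err_T \gamma$.

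Finally, since $\h\tau$ is chosen to maximize $\wh\Delta(\cdot)$ over the $t$ trials, $\wh\Delta(\h\tau) \geq \wh\Delta(\tau^*) \geq \gamma - 2 \Err_T \gamma$, which is the claimed bound. I do not anticipate any serious obstacle: the argument is essentially probability amplification by independent repetition followed by a one-line matrix perturbation estimate. The only place deserving a bit of care is the transfer from the tensor perturbation $\|\wh{T} - T\|_2$ to the matrix perturbation of a single contracted slice, which is immediate from the definition of the tensor operator norm applied with the unit vector $\theta_{\tau^*}$ in the third mode.
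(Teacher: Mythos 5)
Your proof is correct and follows essentially the same route as the paper: amplify the single-trial separation guarantee of Lemma~\ref{lem:rand-sep} by independent repetition to find a good trial $\tau^*$, transfer its eigenvalue gaps to $\wh{T}[\theta_{\tau^*}]$ via Weyl's inequality using the contraction bound $\|\wh{T}[\theta_{\tau^*}]-T[\theta_{\tau^*}]\|_2\leq\|\wh{T}-T\|_2$, and then pass to the maximizing trial $\h\tau$. The only cosmetic difference is that you phrase the eigenvalue pairing as a permutation, whereas the paper sorts both spectra in non-increasing order and uses the identity pairing; the content is identical.
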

\begin{proof}
For each $t' \in [t]$, the eigenvalues of $\h\lambda_1, \h\lambda_2,
\dotsb, \h\lambda_k$ of $\wh{T}[\theta_{t'}]$ (arranged in non-increasing
order) satisfy
\begin{align}
|\h\lambda_i - \h\lambda_j|
& \geq |\lambda_i - \lambda_j| - 2 \|\wh{T}[\theta_{t'}] -
T[\theta_{t'}]\|_2
\geq |\lambda_i - \lambda_j| - 2 \Err_T \gamma
, \quad i \neq j
\label{eq:emp-gap}
\end{align}
where the second inequality follows from Weyl's inequality.
Similarly,
\begin{align}
|\h\lambda_i|
& \geq |\lambda_i| - |\h\lambda_i - \lambda_i|
\geq |\lambda_i| - \Err_T \gamma .
\label{eq:emp-smallest}
\end{align}
By \reflem{tensor-structure}, the eigenvalues of $T[v]$ are $v^\t W^\t
\Means e_i$ for $i \in [k]$.
Thus, by \reflem{rand-sep}, the probability that some $\tau \in [t]$ has
$\Delta(\tau) > \gamma$ is at least $1-\delta$.
In this event, \eqref{eq:emp-gap} implies that trial $\tau$ has
$\wh\Delta(\tau) \geq \Delta(\tau) - 2\Err_T \gamma$, and hence
$\wh\Delta(\h\tau) = \max_{t' \in [t]} \wh\Delta(t') \geq \gamma - 2\Err_T
\gamma$.
\end{proof}

We now just consider the trial $\h\tau$ retained by the algorithm.
Let $\{ (v_i,\lambda_i) : i \in [k] \}$ be the eigenvector/eigenvalue pairs
of $T[\theta_{\h\tau}]$, and let $\{ (\h{v}_i,\h\lambda_i) : i \in [k] \}$
be the eigenvector/eigenvalue pairs of $\wh{T}[\theta_{\h\tau}]$.
\begin{lemma}[Accuracy of eigendecomposition] \label{lem:eig}
Assume the $1-\delta$ probability event in \reflem{best-gap} holds, and
also assume that $\Err_T \leq 1/4$.
Then there exists a permutation $\pi$ on $[k]$ and signs $s_1, s_2, \dotsc,
s_k \in \{\pm1\}$ such that, for all $i \in [k]$, 
\begin{align*}
\|v_i - s_i\h{v}_{\pi(i)}\|_2 & \leq 4\sqrt{2} \Err_T \\
|\lambda_i - \h\lambda_{\pi(i)}| & \leq \Err_T \gamma .
\end{align*}
\end{lemma}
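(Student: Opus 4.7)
The plan is to combine Weyl's inequality with the Davis--Kahan $\sin\theta$ theorem, using the empirical eigenvalue-gap guarantee of \reflem{best-gap}. Both $T[\theta_{\h\tau}]$ and $\wh T[\theta_{\h\tau}]$ are real symmetric $k\times k$ matrices, and by \reflem{tensor-structure} the former has a simple spectrum. I would sort both spectra in decreasing order and define $\pi$ to be the resulting matching. Since $\|\wh T[\theta_{\h\tau}] - T[\theta_{\h\tau}]\|_2 \leq \Err_T\gamma$ by definition of $\Err_T$, Weyl's inequality immediately yields $|\lambda_i - \h\lambda_{\pi(i)}| \leq \Err_T\gamma$ for all $i \in [k]$, which is the second claimed inequality.

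For the eigenvector bound I would apply Davis--Kahan to each pair $(v_i,\h v_{\pi(i)})$. The key quantity is the separation of the true eigenvalue $\lambda_i$ from the \emph{remaining} eigenvalues $\h\lambda_j$, $j \neq \pi(i)$, of $\wh T[\theta_{\h\tau}]$. Combining the guarantee $\wh\Delta(\h\tau) \geq \gamma(1-2\Err_T)$ from \reflem{best-gap} with the Weyl bound above, the triangle inequality gives, for $j \neq \pi(i)$,
\[
|\lambda_i - \h\lambda_j|
\;\geq\; |\h\lambda_j - \h\lambda_{\pi(i)}| - |\h\lambda_{\pi(i)} - \lambda_i|
\;\geq\; \gamma(1 - 3\Err_T)
\;\geq\; \tfrac14\gamma,
\]
invoking the assumption $\Err_T \leq 1/4$. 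The Davis--Kahan $\sin\theta$ theorem then yields
\[
\sin\theta(v_i, \h v_{\pi(i)})
\;\leq\;\frac{\|\wh T[\theta_{\h\tau}] - T[\theta_{\h\tau}]\|_2}{\gamma(1 - 3\Err_T)}
\;\leq\;\frac{\Err_T}{1 - 3\Err_T}
\;\leq\;4\Err_T.
\]

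To finish, I would pick $s_i \in \{\pm 1\}$ so that $s_i\h v_{\pi(i)}^\t v_i \geq 0$, and then use the elementary identity $\|v_i - s_i\h v_{\pi(i)}\|_2^2 = 2(1 - |v_i^\t \h v_{\pi(i)}|) \leq 2\sin^2\theta(v_i,\h v_{\pi(i)})$ (since $1-\sqrt{1-x^2} \leq x^2$ for $x \in [0,1]$), yielding $\|v_i - s_i\h v_{\pi(i)}\|_2 \leq \sqrt 2\,\sin\theta(v_i,\h v_{\pi(i)}) \leq 4\sqrt 2\,\Err_T$, as required. The main obstacle is that \reflem{best-gap} only certifies the \emph{empirical} gap $\wh\Delta(\h\tau)$, not the true gap $\Delta(\h\tau)$; to apply Davis--Kahan one needs a lower bound on $|\lambda_i - \h\lambda_j|$ for $j \neq \pi(i)$, and the triangle-inequality bootstrap above---which loses a factor of $3$ in the gap denominator---is what ultimately produces the constant $4\sqrt 2$ rather than the naive $\sqrt 2$ one would get from a clean gap of size $\gamma$.
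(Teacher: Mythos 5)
Your proof is correct and follows essentially the same route as the paper's: Weyl's inequality for the eigenvalues, a triangle-inequality bootstrap converting the empirical gap $\wh\Delta(\h\tau)\geq\gamma(1-2\Err_T)$ and the Weyl bound into a lower bound $|\lambda_i - \h\lambda_j|\geq\gamma(1-3\Err_T)\geq\gamma/4$ for $j\neq\pi(i)$, then the Davis--Kahan $\sin\Theta$ theorem to obtain $\sin\theta\leq 4\Err_T$, and finally the identity $\|v_i - s_i\h v_{\pi(i)}\|_2^2 = 2(1-|v_i^\t\h v_{\pi(i)}|)\leq 2\sin^2\theta$. The only cosmetic difference is that your intermediate bound keeps $\gamma(1-3\Err_T)$ a step longer before rounding down to $\gamma/4$, yielding the same constant $4\sqrt2$.
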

\begin{proof}
To simplify notation, assume the eigenvalues of $\wh{T}[\theta]$ and
$T[\theta]$ are already sorted in non-increasing order.
Observe that for all $i \neq j$,
\begin{align*}
|\h\lambda_i - \lambda_j|
& = |\h\lambda_i - \h\lambda_j + \h\lambda_j - \lambda_j|
\\
& \geq |\h\lambda_i - \h\lambda_j| - |\h\lambda_j - \lambda_j|
\\
& \geq (\gamma - 2\Err_T \gamma) - \Err_T \gamma \\
& \geq \gamma/4
\end{align*}
where the second-to-last inequality follows by the assumption
$\wh\Delta(\hat\tau) \geq \gamma - 2\Err_T \gamma$ and by Weyl's
inequality.
Therefore, the interval of radius $\gamma/4$ surronding each eigenvalue
$\h\lambda_i$ of $\wh{T}[\theta_{\h\tau}]$ contains only one eigenvalue
$\lambda_i$ of $T[\theta_{\h\tau}]$.
By the Davis-Kahan $\sin(\Theta)$ theorem~\citep[Theorem 3.4,
p.~250]{SS90}, we have that
\begin{align*}
\sqrt{1 - (v_i^\t \h{v}_i)^2} & \leq 4\Err_T
.
\end{align*}
Therefore, for $s_i := \sign(v_i^\t\h{v}_i)$,
\begin{align*}
\|v_i - s_i \h{v}_i\|_2^2
& = 2\bigl( 1 - s_i v_i^\t \h{v}_i \bigr)
= 2\bigl( 1 - |v_i^\t \h{v}_i| \bigr)
\leq 2\bigl( 1 - \sqrt{1 - (4\Err_T)^2} \bigr)
\leq 32\Err_T^2 .
\end{align*}
The bound $|\lambda_i - \h\lambda_i| \leq \Err_T \gamma$ follows simply
from Weyl's inequality.
\end{proof}

\subsection{Overall error analysis}

Define
\begin{align*}
\kappa[M_2] & := \sig_1[M_2] / \sig_k[M_2]
, \\
\epsilon_0 & :=
\biggl(
5.5 \Err_{M_2}
+ 7 \Err_T
\biggr) / \sqrt{w_{\min}}
, \\
\epsilon_1 & :=
\biggl(
\frac{1.25\|M_2\|_2^{1/2}\epsilon_0 / \sqrt{w_{\min}}}{\sig_k[M_2]^{1/2}}
+ 2\Err_{M_2}
+ \gamma\sqrt{w_{\min}} \Err_T
\biggr) / \bigl( \gamma \sqrt{w_{\min}} \bigr)
\\
& \hphantom:=
\biggl(
\Bigl( 6.875 \kappa[M_2]^{1/2} + 2 \Bigr) \Err_{M_2}
+ \Bigl( 8.75 \kappa[M_2]^{1/2} + \gamma \sqrt{w_{\min}} \Bigr) \Err_T
\biggr) / \bigl( \gamma \sqrt{w_{\min}} \bigr)
.
\end{align*}

\begin{lemma}[Error bound] \label{lem:error-bound}
Assume the $1-\delta$ probability event of \reflem{best-gap} holds, and
also assume that $\Err_{M_2} \leq 1/3$, $\Err_T \leq 1/4$, and $\epsilon_1
\leq 1/3$.
Then there exists a permutation $\pi$ on $[k]$ such that
\begin{align*}
\|\h\mean_{\pi(i)} - \mean_i\|_2
& \leq
3\|\mean_i\|_2 \epsilon_1 + 2 \|M_2\|_2^{1/2} \epsilon_0
, \quad i \in [k] .
\end{align*}
\end{lemma}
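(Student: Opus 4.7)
The plan is to decompose the estimation error and bound each component using the perturbation results already established. I first note that by Lemma~\ref{lem:tensor-structure}, the exact eigenvector $v_i$ of $T[\theta_{\h\tau}]$ and its eigenvalue $\lambda_i$ satisfy $B v_i = s_i \sqrt{w_{\pi(i)}} \mean_{\pi(i)}$ and $\theta_{\h\tau}^\t v_i = s_i \sqrt{w_{\pi(i)}} \lambda_i$, where $B := U(U^\t M_2 U)^{1/2}$ is the exact analogue of $\wh B$; the first identity uses $W = U(U^\t M_2 U)^{-1/2}$ and $UU^\t\mean_{\pi(i)} = \mean_{\pi(i)}$ since $\mean_{\pi(i)}\in\range(U)$. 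Consequently $\mean_{\pi(i)} = (\lambda_i/\theta^\t v_i)\, B v_i$, which exactly matches the form of the algorithm's output. Because this formula is invariant under $v_i \mapsto -v_i$, I may absorb the signs $s_i$ from Lemma~\ref{lem:eig} and, after relabelling by $\pi$, directly compare $\h\mean_{\pi(i)} = (\h\lambda_{\pi(i)}/\theta^\t \h v_{\pi(i)})\wh B \h v_{\pi(i)}$ to $\mean_i = (\lambda_i/\theta^\t v_i) B v_i$.

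I would then split the error as
\begin{equation*}
\h\mean_{\pi(i)} - \mean_i
= \frac{\h\lambda_{\pi(i)}}{\theta^\t \h v_{\pi(i)}} \bigl( \wh B \h v_{\pi(i)} - B v_i \bigr)
+ \biggl( \frac{\h\lambda_{\pi(i)}}{\theta^\t \h v_{\pi(i)}} - \frac{\lambda_i}{\theta^\t v_i} \biggr) B v_i
\end{equation*}
and bound each piece. For the first piece, bound the vector difference by $\|\wh B - B\|_2 + \|B\|_2\|v_i - \h v_{\pi(i)}\|_2$; use Lemma~\ref{lem:eig} to control $\|v_i - \h v_{\pi(i)}\|_2 \leq 4\sqrt 2\,\Err_T$ and note $\|B\|_2 = \|M_2\|_2^{1/2}$ because $U$ has orthonormal columns. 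The bound $\|\wh B - B\|_2 \lesssim \|M_2\|_2^{1/2}\Err_{M_2}$ comes from combining Lemma~\ref{lem:svd} (in particular the subspace bound $\|(I-\wh U\wh U^\t)UU^\t\|_2 \leq (3/2)\Err_{M_2}$ and the singular value bound $\wh U^\t \wh M_2 \wh U \approx U^\t M_2 U$) with a matrix-square-root perturbation inequality. The scalar prefactor is bounded via
\begin{equation*}
\Bigl| \frac{\h\lambda_{\pi(i)}}{\theta^\t \h v_{\pi(i)}} \Bigr|
\leq \frac{1}{\sqrt{w_{\min}}} \cdot \frac{1}{1 - O(\Err_T)} ,
\end{equation*}
using the lower bound $|\theta^\t v_i| = \sqrt{w_{\pi(i)}}|\lambda_i| \geq \sqrt{w_{\min}}\,\gamma$ from Lemma~\ref{lem:rand-sep} together with the perturbations $|\theta^\t(v_i - \h v_{\pi(i)})| \leq 4\sqrt 2\,\Err_T$ and $|\lambda_i - \h\lambda_{\pi(i)}| \leq \Err_T\gamma$, and the assumption $\Err_T \leq 1/4$. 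Tracking constants gives the $2\|M_2\|_2^{1/2}\epsilon_0$ summand.

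For the second piece, I use $\|Bv_i\|_2 = \sqrt{w_{\pi(i)}}\|\mean_i\|_2 \leq \|\mean_i\|_2$ and the standard ratio perturbation bound
\begin{equation*}
\Bigl| \frac{\h\lambda_{\pi(i)}}{\theta^\t \h v_{\pi(i)}} - \frac{\lambda_i}{\theta^\t v_i} \Bigr|
\leq \frac{|\h\lambda_{\pi(i)} - \lambda_i|}{|\theta^\t \h v_{\pi(i)}|}
+ \frac{|\lambda_i| \, |\theta^\t(\h v_{\pi(i)} - v_i)|}{|\theta^\t v_i|\,|\theta^\t \h v_{\pi(i)}|} ,
\end{equation*}
then apply the same eigenvalue/eigenvector perturbation bounds and the lower bound on the denominator. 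The numerator factor $|\lambda_i|$ is bounded by $|\theta^\t W^\t \mean_i| \leq \|M_2\|_2^{1/2}\|\mean_i\|_2/\sqrt{w_{\min}}$ where needed, producing the mixed term $\kappa[M_2]^{1/2}\Err_T$ inside $\epsilon_1$; the pure $\Err_{M_2}$ term arises from a similar analysis where the perturbation of $\theta^\t v_i$ is traced back to $\|\wh W - W\|$ through Lemma~\ref{lem:whitening}. Together these give the $3\|\mean_i\|_2\epsilon_1$ summand, with the factor $3$ absorbing slack in the denominator approximation $|\theta^\t \h v_{\pi(i)}| \geq (2/3)|\theta^\t v_i|$ that follows from $\epsilon_1 \leq 1/3$.

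The main obstacle is the bound on $\|\wh B - B\|_2$: although Lemma~\ref{lem:svd} supplies all the ingredients, the matrix square root is not globally Lipschitz, so comparing $(\wh U^\t \wh M_2 \wh U)^{1/2}$ to $(U^\t M_2 U)^{1/2}$ requires either a contour-integral representation or an operator monotone estimate tailored to the spectral range of these symmetric positive definite matrices. Everything else is bookkeeping to propagate the explicit constants $5.5$, $7$, $6.875$, $8.75$ into $\epsilon_0$ and $\epsilon_1$ and conclude.
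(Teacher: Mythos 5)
There is a genuine gap, and it stems from a misreading of the auxiliary quantities. You take $W$ to be $U(U^\t M_2 U)^{-1/2}$ and set $B := U(U^\t M_2 U)^{1/2}$ so that $BW^\t = UU^\t$. But the paper defines $W := \wh W(\wh W^\t M_2 \wh W)^{-1/2}$, anchoring it to the \emph{empirical} $\wh W$ rather than the population $U$; the exact eigenvectors $v_i$ of $T[\theta] = M_3[W,W,W\theta]$ are $W^\t\Means\diag(w)^{1/2}e_{\pi(i)}$ for this $W$. With your $B$ one then gets $Bv_i = U(U^\t M_2 U)^{1/2}(\wh W^\t M_2 \wh W)^{-1/2}\wh W^\t \Means\diag(w)^{1/2}e_{\pi(i)}$, which is not $\sqrt{w_{\pi(i)}}\,\mean_{\pi(i)}$; the clean identity you build the decomposition around does not hold. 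The correct pairing is $B = M_2 W = M_2\wh W(\wh W^\t M_2 \wh W)^{-1/2}$, which is sample-dependent, and the whole analysis has to be carried out relative to the fixed $\wh W$.

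Once the quantities are set up correctly, the second issue you flag — bounding a difference of matrix square roots — becomes moot, and this is the more substantive thing to learn from the paper's argument. Rather than compare $(\wh U^\t \wh M_2 \wh U)^{1/2}$ to some analogous population square root via a contour-integral or operator-monotone bound (neither of which is among the paper's lemmas), the paper splits $\wh B\,s_i\h v_i - \sqrt{w_i}\mean_i$ into its component in $\range(\wh U)$ and its orthogonal complement, and for the in-range part rewrites
$\wh U^\t\Means\diag(w)^{1/2}V^\t = \wh U^\t M_2 \wh W(\wh W^\t M_2 \wh W)^{-1/2}
= \bigl((\wh U^\t\wh M_2\wh U)^{1/2} + \wh U^\t(M_2-\wh M_2)\wh U(\wh U^\t\wh M_2\wh U)^{-1/2}\bigr)(\wh W^\t M_2\wh W)^{-1/2}$.
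The leading term is exactly $(\wh U^\t\wh M_2\wh U)^{1/2}$ up to the factor $(\wh W^\t M_2\wh W)^{-1/2}$, whose distance to $I$ is already controlled by Lemma~\ref{lem:whitening}; the correction term is directly $\|M_2-\wh M_2\|_2$ times bounded factors from Lemma~\ref{lem:svd}. Every piece is covered by the lemmas already in hand, and no square-root perturbation estimate is needed. Your remaining outline (splitting the scalar prefactor, lower-bounding $|\theta^\t v_i|$ via $\gamma$ and Lemma~\ref{lem:rand-sep}, using $\epsilon_1\le 1/3$ to control the denominator) mirrors the paper's finishing steps, but the two issues above are what would need fixing before the constant-tracking could be carried out.
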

\begin{proof}
To simplify notation, we assume throughout that the permutation $\pi$ from
\reflem{eig} is the identity permutation.
Let $V := [v_1 | v_2 | \dotsb | v_k]$.
We first bound $\wh{B} s_i \h{v}_i - \sqrt{w_i} \mean_i$.
This quantity can be split into two parts: the part in the range of
$\wh{U}$, and the rest.
The part in the range of $\wh{U}$ is bounded as
\begin{align*}
\|\wh{B} s_i \h{v}_i - \wh{U}\wh{U}^\t \sqrt{w_i} \mean_i\|_2
& = \|(\wh{U}^\t \wh{M}_2 \wh{U})^{1/2} s_i \h{v}_i
- \wh{U}^\t \Means \diag(w)^{1/2} e_i\|_2
\\
& = \|(\wh{U}^\t \wh{M}_2 \wh{U})^{1/2} (s_i \h{v}_i - v_i)
+ ((\wh{U}^\t \wh{M}_2 \wh{U})^{1/2}
- \wh{U}^\t \Means \diag(w)^{1/2} V^\t) v_i\|_2
\\
& \leq
\|(\wh{U}^\t \wh{M}_2 \wh{U})^{1/2}\|_2 \|s_i \h{v}_i - v_i\|_2
+ \|(\wh{U}^\t \wh{M}_2 \wh{U})^{1/2}
- \wh{U}^\t \Means \diag(w)^{1/2} V^\t\|_2
\\
& \leq
\|\wh{M}_2\|_2^{1/2} \|s_i \h{v}_i - v_i\|_2
+ \|(\wh{U}^\t \wh{M}_2 \wh{U})^{1/2}
- \wh{U}^\t \Means \diag(w)^{1/2} V^\t\|_2
\\
& \leq
\Bigl( (1 + \Err_{M_2}) \|M_2\|_2 \Bigr)^{1/2} 4\sqrt{2}\Err_T
+ \|(\wh{U}^\t \wh{M}_2 \wh{U})^{1/2}
- \wh{U}^\t \Means \diag(w)^{1/2} V^\t\|_2
\end{align*}
where the third inequality follows from \reflem{svd} and \reflem{eig}.
To bound the second term in the last step, recall that $W = \wh{W}
(\wh{W}^\t M_2 \wh{W})^{-1/2}$ (using \reflem{whitening} to guarantee the
positive definiteness of $\wh{W}^\t M_2 \wh{W}$), so we may write
$\wh{U}^\t \Means V^\t$ as
\begin{align*}
\wh{U}^\t \Means \diag(w)^{1/2} V^\t
& = \wh{U}^\t \Means \diag(w)^{1/2} (W^\t \Means \diag(w)^{1/2})^\t \\
& = \wh{U}^\t M_2 \wh{W} (\wh{W}^\t M_2 \wh{W})^{-1/2}
\\
& =
\Bigl( (\wh{U}^\t M_2 \wh{U})^{1/2}
+ \wh{U}^\t (M_2 - \wh{M}_2) \wh{U} (\wh{U}^\t \wh{M}_2 \wh{U})^{-1/2}
\Bigr)
(\wh{W}^\t M_2 \wh{W})^{-1/2}
.
\end{align*}
Therefore
\begin{align*}
\lefteqn{
\|(\wh{U}^\t \wh{M}_2 \wh{U})^{1/2}
- \wh{U}^\t \Means \diag(w)^{1/2} V^\t\|_2
} \\
& \leq
\|(\wh{U}^\t \wh{M}_2 \wh{U})^{1/2} \Bigl(
I - (\wh{W}^\t M_2 \wh{W})^{-1/2} \Bigr)\|_2
+ 
\|\wh{U}^\t (M_2 - \wh{M}_2) \wh{U}\|_2
\|(\wh{U}^\t \wh{M}_2 \wh{U})^{-1/2}\|_2
\|(\wh{W}^\t M_2 \wh{W})^{-1/2}\|_2
\\
& \leq \|\wh{M}_2\|_2^{1/2} 
\|I - (\wh{W}^\t M_2 \wh{W})^{-1/2}\|_2
+ \frac1{\sig_k[\wh{U}^\t \wh{M}_2 \wh{U}]^{1/2}}
\|M_2 - \wh{M}_2\|_2
\Bigl( 1 + \|I - (\wh{W}^\t M_2 \wh{W})^{-1/2}\|_2 \Bigr)
\\
& \leq (1+\Err_{M_2})^{1/2} \|M_2\|_2^{1/2} (3/2) \Err_{M_2}
+ \frac{(1 + (3/2)\Err_{M_2}) \Err_{M_2} \sig_k[M_2]}
{(1-\Err_{M_2})^{1/2} \sig_k[M_2]^{1/2}}
\\
& = (1+\Err_{M_2})^{1/2} \|M_2\|_2^{1/2} (3/2) \Err_{M_2}
+ \frac{(1 + (3/2)\Err_{M_2})}{(1-\Err_{M_2})^{1/2}}
\sig_k[M_2]^{1/2}\Err_{M_2}
\end{align*}
where the last inequality uses \reflem{svd} and \reflem{whitening}.
Thus
\begin{align*}
\|\wh{B} s_i \h{v}_i - \wh{U}\wh{U}^\t \sqrt{w_i} \mean_i\|_2
& \leq
(1 + \Err_{M_2})^{1/2} \|M_2\|_2^{1/2} 4\sqrt{2}\Err_T
\\
& \qquad
+ (1+\Err_{M_2})^{1/2} \|M_2\|_2^{1/2} (3/2) \Err_{M_2}
+ \frac{(1 + (3/2)\Err_{M_2})}{(1-\Err_{M_2})^{1/2}}
\sig_k[M_2]^{1/2}\Err_{M_2}
.
\end{align*}
Now consider the part not in the range of $\wh{U}$.
This is simply bounded as
\begin{align*}
\|(I - \wh{U} \wh{U}^\t) \sqrt{w_i} UU^\t \mean_i\|_2
& \leq
\|I - \wh{U} \wh{U}^\t UU^\t\|_2 \sqrt{w_i} \|\mean_i\|_2
\\
& \leq (3/2) \Err_{M_2} \sqrt{w_i} \|\mean_i\|_2
\end{align*}
using \reflem{svd}.
Therefore, overall, we have
\begin{align*}
\|\wh{B} \h{v}_i - s_i \sqrt{w_i} \mean_i\|_2
& = \|\wh{B} s_i \h{v}_i - \sqrt{w_i} \mean_i\|_2
\\
& \leq
(1 + \Err_{M_2})^{1/2} \|M_2\|_2^{1/2} 4\sqrt{2}\Err_T
+ (1+\Err_{M_2})^{1/2} \|M_2\|_2^{1/2} (3/2) \Err_{M_2}
\\
& \qquad
+ \frac{(1 + (3/2)\Err_{M_2})}{(1-\Err_{M_2})^{1/2}}
\sig_k[M_2]^{1/2}\Err_{M_2}
+ (3/2) \Err_{M_2} \sqrt{w_i} \|\mean_i\|_2 \leq
\|M_2\|_2^{1/2} \epsilon_0 \sqrt{w_{\min}} .
\end{align*}

Since the actual estimate of $\mean_i$ is $(\h\lambda_i / \theta^\t
\h{v}_i) \wh{B} \h{v}_i$, we need to show that $\theta^\t \h{v}_i$ is
approximately $s_i \sqrt{w_i} \h\lambda_i$.
Indeed,
\begin{align*}
|\theta^\t \h{v}_i - s_i \sqrt{w_i} \h\lambda_i|
& = |\theta^\t \wh{W}^\t (\h{B} \h{v}_i - s_i\sqrt{w_i} \mean_i)
+ s_i\sqrt{w_i} \theta^\t (\wh{W}  - W)^\t \mean_i
+ s_i\sqrt{w_i} (\lambda_i - \h\lambda_i)| \\
& \leq \|\wh{W}\theta\|_2 \|\wh{B}\h{v}_i - s_i\sqrt{w_i}\mean_i\|_2
+ \|(\wh{W} - W)^\t \Means \diag(w)^{1/2} e_i\|_2
+ \sqrt{w_i} |\lambda_i - \h\lambda_i| \\
& \leq \frac{\|M_2\|_2^{1/2} \epsilon_0 \sqrt{w_{\min}}}
{(1-\Err_{M_2})^{1/2} \sig_k[M_2]^{1/2}}
+ (3/2) \sqrt{1 + (3/2)\Err_{M_2}} \Err_{M_2}
+ \sqrt{w_i} \Err_T \gamma \leq \epsilon_1 \gamma\sqrt{w_{\min}}
\end{align*}
where the last inequality uses \reflem{whitening} and \reflem{eig}.
Therefore
\begin{align*}
\lefteqn{
\sqrt{w_i}
\|(\h\lambda_i / \theta^\t \h{v}_i) \wh{B} \h{v}_i - \mean_i\|_2
} \\
& =
\|(\h\lambda_i / \theta^\t \h{v}_i) s_i\sqrt{w_i} \wh{B} \h{v}_i -
s_i\sqrt{w_i} \mean_i\|_2
\\
& \leq
|(\h\lambda_i / \theta^\t \h{v}_i) s_i\sqrt{w_i} - 1| \|\wh{B} \h{v}_i\|_2
+ \|\wh{B} \h{v}_i - s_i\sqrt{w_i} \mean_i\|_2
\\
& \leq
|(\h\lambda_i s_i\sqrt{w_i} / \theta^\t \h{v}_i) - 1|
(\sqrt{w_i} \|\mean_i\|_2 + \|M_2\|_2^{1/2} \epsilon_0 \sqrt{w_{\min}})
+ \|M_2\|_2^{1/2} \epsilon_0 \sqrt{w_{\min}}
\\
& \leq
\frac{|\h\lambda_i s_i\sqrt{w_i} - \theta^\t\h{v}_i|}
{|\h\lambda_i\sqrt{w_i}| - |\theta^\t \h{v}_i - \h\lambda_is_i\sqrt{w_i}|}
(\sqrt{w_i} \|\mean_i\|_2 + \|M_2\|_2^{1/2} \epsilon_0 \sqrt{w_{\min}})
+ \|M_2\|_2^{1/2} \epsilon_0 \sqrt{w_{\min}}
\\
& \leq
\frac{\epsilon_1 \gamma\sqrt{w_{\min}}}
{|\lambda_i\sqrt{w_i}| - |\h\lambda_i - \lambda_i|\sqrt{w_i} - \epsilon_1
\gamma\sqrt{w_{\min}}}
(\sqrt{w_i} \|\mean_i\|_2 + \|M_2\|_2^{1/2} \epsilon_0 \sqrt{w_{\min}})
+ \|M_2\|_2^{1/2} \epsilon_0 \sqrt{w_{\min}}
\\
& \leq
\frac{\epsilon_1 \gamma\sqrt{w_{\min}}}
{\gamma\sqrt{w_i}(1 - \Err_T) - \epsilon_1 \gamma\sqrt{w_{\min}}}
(\sqrt{w_i} \|\mean_i\|_2 + \|M_2\|_2^{1/2} \epsilon_0 \sqrt{w_{\min}})
+ \|M_2\|_2^{1/2} \epsilon_0 \sqrt{w_{\min}}
\\
& =
\frac{\epsilon_1 \gamma}
{\gamma(1 - \Err_T) - \epsilon_1 \gamma}
(\sqrt{w_i} \|\mean_i\|_2 + \|M_2\|_2^{1/2} \epsilon_0 \sqrt{w_{\min}})
+ \|M_2\|_2^{1/2} \epsilon_0 \sqrt{w_{\min}}
\end{align*}
where the fourth inequality uses \reflem{eig}.
We thus conclude that
\begin{align*}
\|\h\mean_i - \mean_i\|_2
= \|(\h\lambda_i / \theta^\t \h{v}_i) \wh{B} \h{v}_i - \mean_i\|_2
& \leq
\frac{\epsilon_1 \gamma}
{\gamma(1 - \Err_T) - \epsilon_1 \gamma}
(\|\mean_i\|_2 + \|M_2\|_2^{1/2} \epsilon_0)
+ \|M_2\|_2^{1/2} \epsilon_0
\\
& \leq
3\epsilon_1 (\|\mean_i\|_2 + \|M_2\|_2^{1/2} \epsilon_0)
+ \|M_2\|_2^{1/2} \epsilon_0
\\
& \leq 3\|\mean_i\|_2 \epsilon_1 + 2\|M_2\|_2^{1/2} \epsilon_0
.
\qedhere
\end{align*}
\end{proof}

We can now prove \refthm{finite}, stated below with the explicit polynomial
sample complexity bound (up to constants).

\begin{thm2}{finite}[Finite sample bound]
There exists a constant $C > 0$ such that the following holds.
Let $b_{\max} := \max_{i \in [k]} \|\mean_i\|_2$.
Pick any $\veps, \delta \in (0,1)$.
Suppose the sample size $2n$ satisfies
\begin{align*}
n & \geq
C \cdot \frac{d + \log(k/\delta)}{w_{\min}}
\cdot
\Biggl(
\biggl[
\frac{\kappa[M_2]^{1/2} (\sigma^2 + b_{\max}^2)}
{\gamma^2w_{\min}\sig_k[M_2]\veps}
\biggr]^2
+
\biggl[
\frac{\kappa[M_2]^{1/2} (\sigma^2 + b_{\max}^2)}
{\gamma^2w_{\min}\sig_k[M_2]\veps}
\biggr]
\Biggr)
\\
& \qquad
+ C \cdot \frac{(k + \log(k/\delta))^3}{w_{\min}}
\cdot \biggl[
\frac{\kappa[M_2]^{1/2} \sigma^3}
{\gamma^2\sqrt{w_{\min}} \sig_k[M_2]^{3/2} \veps}
\biggr]^2
\\
& \qquad
+ C \cdot \frac{k + \log(k/\delta)}{w_{\min}}
\cdot
\Biggl(
\biggl[
\frac{\kappa[M_2]^{1/2} \sigma^2}
{\gamma^2 w_{\min} \sig_k[M_2] \veps}
\biggr]^2
+ \biggl[
\frac{\kappa[M_2]^{1/2} \sigma^2}
{\gamma^2 w_{\min} \sig_k[M_2] \veps}
\biggr]
+ \biggl[
\frac{\kappa[M_2]^{1/2} \sigma}
{\gamma^2 w_{\min}^{3/2} \sig_k[M_2]^{1/2} \veps}
\biggr]^2
\\
& \qquad\qquad
+ \biggl[
\frac{\kappa[M_2]^{1/2}}{\gamma^2\sqrt{w_{\min}}\veps}
\cdot \frac{\sigma^2}{\sig_k[M_2]^{1/2}}
\cdot \max\Bigl\{ 1, \sigma^2 / \sig_k[M_2] \Bigr\}
\biggr]^2
\Biggr)
\\
& \qquad
+ C \cdot \frac{k\log(1/\delta)}{w_{\min}}
\Biggl(
 \biggl[
\frac{\kappa[M_2]^{1/2}}{\gamma^2\sqrt{w_{\min}}\veps}
\cdot \max\Bigl\{ 1, \sigma^2 / \sig_k[M_2] \Bigr\}
\biggr]^2
+ \biggl[
\frac{\kappa[M_2]^{1/2}}
{\gamma^2 w_{\min}^2 \veps}
\biggr]^2
+ \biggl[
\frac{\kappa[M_2]^{1/2} \sigma^2}
{\gamma^2 w_{\min} \sig_k[M_2] \veps}
\biggr]^2
\Biggr)
\end{align*}
where
\begin{equation*}
\gamma = \frac1{2\sqrt{w_{\max}} \sqrt{ek}\binom{k+1}{2}}
\end{equation*}
(as defined in \reflem{rand-sep}).
Then with probability at least $1-3\delta$ over the random sample and the
internal randomness of the algorithm, there exists a permutation $\pi$ on
$[k]$ such that
\begin{align*}
\|\h\mean_{\pi(i)} - \mean_i\|_2
& \leq \Bigl( \|\mean_i\|_2 + \|M_2\|_2^{1/2} \Bigr) \veps
\end{align*}
for all $i \in [k]$.
\end{thm2}
\begin{proof}
Throughout, $C_1, c_1, C_2, c_2, \dotsc$ will represent absolute positive
constants.
First, observe that the sample size bound
\begin{align*}
n & \geq C \cdot k\log(1/\delta)
\end{align*}
and \reflem{w-conc} ensure that $\Err_w \leq 1$ (where $\Err_w$ is defined
in \reflem{moments}).
Therefore, from \reflem{conditional-moments-conc} and \reflem{moments}
(together with union bounds), with probability at least $1-\delta$,
\begin{align*}
\|\h\mean - \mean\|_2
& \leq
C_1 \sigma \sqrt{\frac{d + \log(k/\delta)}{w_{\min}n}}
+ C_1 b_{\max} \sqrt{\frac{k \log(1/\delta)}{n}}
,
\\
\|\wh\Moment_2 - \Moment_2\|_2
& \leq
C_1 \Biggl(
\sigma^2 \sqrt{\frac{d + \log(k/\delta)}{w_{\min}n}}
+ \sigma^2 \frac{d + \log(k/\delta)}{w_{\min}n}
+ \sigma b_{\max} \sqrt{\frac{d + \log(k/\delta)}{w_{\min}n}}
\Biggr)
\\
& \qquad
+ C_1 \Bigl( b_{\max}^2 + \sigma^2 \Bigr)
\sqrt{\frac{k\log(1/\delta)}{n}}
\\
& \leq
C_1 \Biggl(
1.7\Bigl( \sigma^2 + b_{\max}^2 \Bigr)
\sqrt{\frac{d + \log(k/\delta)}{w_{\min}n}}
+ \sigma^2 \frac{d + \log(k/\delta)}{w_{\min}n}
\Biggr)
.
\end{align*}
Therefore, by \reflem{M2},
\begin{align*}
\max\{|\h\sigma^2 - \sigma^2|, \
\|\wh{M}_2 - M_2\|_2\}
& \leq
4C_1 \Biggl(
1.7\Bigl( \sigma^2 + b_{\max}^2 \Bigr)
\sqrt{\frac{d + \log(k/\delta)}{w_{\min}n}}
+ \sigma^2 \frac{d + \log(k/\delta)}{w_{\min}n}
\Biggr)
\\
& \qquad
+ 4C_1 \|\mean\|_2 \Biggl(
\sigma \sqrt{\frac{d + \log(k/\delta)}{w_{\min}n}}
+ b_{\max} \sqrt{\frac{k \log(1/\delta)}{n}}
\Biggr)
\\
& \qquad
+ 2C_1^2
\Biggl(
\sigma \sqrt{\frac{d + \log(k/\delta)}{w_{\min}n}}
+ b_{\max} \sqrt{\frac{k \log(1/\delta)}{n}}
\Biggr)^2
\\
& \leq
C_2 (\sigma^2 + b_{\max}^2)
\Biggl(
\sqrt{\frac{d + \log(k/\delta)}{w_{\min}n}}
+ \frac{d + \log(k/\delta)}{w_{\min}n}
\Biggr)
.
\end{align*}
The sample size bound
\begin{equation*}
n \geq
C \cdot \frac{d + \log(k/\delta)}{w_{\min}}
\cdot
\Biggl(
\biggl[
\frac{\kappa[M_2]^{1/2} (\sigma^2 + b_{\max}^2)}
{\gamma^2w_{\min}\sig_k[M_2]\veps}
\biggr]^2
+
\biggl[
\frac{\kappa[M_2]^{1/2} (\sigma^2 + b_{\max}^2)}
{\gamma^2w_{\min}\sig_k[M_2]\veps}
\biggr]
\Biggr)
,
\end{equation*}
ensures that
\begin{equation} \label{eq:bound2}
\max\Bigl\{ \frac{|\h\sigma^2 - \sigma^2|}{\sig_k[M_2]}, \Err_{M_2}
\Bigr\}
\leq c_1 \frac{\gamma^2w_{\min}}{\kappa[M_2]^{1/2}}
\veps
\leq 1/3
.
\end{equation}
Now condition on the above event and take $\wh{W}$ as given.
By \reflem{whitening},
\begin{align*}
\|\wh{W}\|_2 & \leq \sqrt{1.5 / \sig_k[M_2]} , \\
\max_{i \in [k]}
\|\wh{W}^\t\mean_i\|_2 & \leq \|\wh{W}^\t \Means\diag(w)^{1/2}\|_2 /
\sqrt{w_{\min}} \leq \sqrt{1.5 / w_{\min}} , \\
\max_{i \in [k]}
\|\wh{W}^\t\Moment_{2,i}\wh{W}\|_2
& \leq 1.5 / \sig_k[M_2] + 1.5 / w_{\min} , \\
\max_{i \in [k]}
\|\Moment_{3,i}[\wh{W},\wh{W},\wh{W}]\|_2
& \leq (1.5 / w_{\min})^{3/2} + 3 \sigma^2 \sqrt{1.5 / w_{\min}} 1.5 /
\sig_k[M_2] .
\end{align*}
Therefore, \reflem{conditional-moments-conc} and \reflem{moments} imply
that with probability at least $1-\delta$,
\begin{align*}
\|\wh{W}^\t (\underline{\h\mean} - \mean)\|_2
& \leq C_3
\frac{\sigma}{\sig_k[M_2]^{1/2}}
\sqrt{\frac{k + \log(k/\delta)}{w_{\min}n}}
+ C_3 \frac1{\sqrt{w_{\min}}} \sqrt{\frac{k \log(1/\delta)}{n}}
,
\\
\|\underline{\wh\Moment_3}[\wh{W},\wh{W},\wh{W}]
- \Moment_3[\wh{W},\wh{W},\wh{W}]\|_2
& \leq C_3
\frac{\sigma^3}{\sig_k[M_2]^{3/2}} \sqrt{\frac{(k +
\log(k/\delta))^3}{w_{\min}n}}
\\
& \qquad
+ C_3 \frac{\sigma^2}{\sqrt{w_{\min}}\sig_k[M_2]}
\Biggl[
\sqrt{\frac{k + \log(k/\delta)}{w_{\min}n}}
+ \frac{k + \log(k/\delta)}{w_{\min}n}
\Biggr]
\\
& \qquad
+ C_3 \frac{\sigma}{w_{\min}\sig_k[M_2]^{1/2}}
\sqrt{\frac{k + \log(k/\delta)}{w_{\min}n}}
\\
& \qquad
+ C_3 \Biggl( \frac1{w_{\min}^{3/2}} +
\frac{\sigma^2}{\sqrt{w_{\min}}\sig_k[M_2]} \Biggr)
\sqrt{\frac{k \log(1/\delta)}{n}}
.
\end{align*}
Therefore, by \reflem{M3} and \reflem{tensor},
\begin{align}
\|\wh{T}-T\|_2
& \leq \|\underline{\wh\Moment_3}[\wh{W},\wh{W},\wh{W}]
- \Moment_3[\wh{W},\wh{W},\wh{W}]\|_2
\nonumber \\
& \qquad
+ \frac{4.5}{\sig_k[M_2]} \Bigl( \|\wh{W}^\t(\underline{\h\mean} -
\mean)\|_2 + \sqrt{1.5/w_{\min}} \Bigr) |\h\sigma^2 - \sigma|
+ \frac{1.5\sigma^2}{\sig_k[M_2]}
\|\wh{W}^\t(\underline{\h\mean} - \mean)\|_2
.
\label{eq:T}
\end{align}
The sample size bound
\begin{align*}
n & \geq C \cdot \frac{k + \log(k/\delta)}{w_{\min}}
\cdot \biggl[
\frac{\kappa[M_2]^{1/2}}{\gamma^2\sqrt{w_{\min}}\veps}
\cdot \frac{\sigma^2}{\sig_k[M_2]^{1/2}}
\cdot \max\Bigl\{ 1, \sigma^2 / \sig_k[M_2] \Bigr\}
\biggr]^2 \\
& \qquad + C \cdot \frac{k\log(1/\delta)}{w_{\min}}
\cdot \biggl[
\frac{\kappa[M_2]^{1/2}}{\gamma^2\sqrt{w_{\min}}\veps}
\cdot \max\Bigl\{ 1, \sigma^2 / \sig_k[M_2] \Bigr\}
\biggr]^2
\end{align*}
ensures
\begin{equation} \label{eq:bound1}
\max\Bigl\{ 1, \sigma^2 / \sig_k[M_2]^{1/2} \Bigr\}
\frac{\|\wh{W}^\t (\underline{\h\mean} - \mean)\|_2}{\gamma} \leq c_2
\frac{\gamma\sqrt{w_{\min}}}{\kappa[M_2]^{1/2}} \veps \leq 1
.
\end{equation}
Furthermore, the sample size bound
\begin{align*}
n & \geq C \cdot \frac{(k + \log(k/\delta))^3}{w_{\min}}
\cdot \biggl[
\frac{\kappa[M_2]^{1/2} \sigma^3}
{\gamma^2\sqrt{w_{\min}} \sig_k[M_2]^{3/2} \veps}
\biggr]^2
\\
& \qquad
+ C \cdot \frac{k + \log(k/\delta)}{w_{\min}}
\cdot \Biggl(
\biggl[
\frac{\kappa[M_2]^{1/2} \sigma^2}
{\gamma^2 w_{\min} \sig_k[M_2] \veps}
\biggr]^2
+ \biggl[
\frac{\kappa[M_2]^{1/2} \sigma^2}
{\gamma^2 w_{\min} \sig_k[M_2] \veps}
\biggr]
\Biggr)
\\
& \qquad
+ C \cdot \frac{k + \log(k/\delta)}{w_{\min}}
\cdot
\biggl[
\frac{\kappa[M_2]^{1/2} \sigma}
{\gamma^2 w_{\min}^{3/2} \sig_k[M_2]^{1/2} \veps}
\biggr]^2
\\
& \qquad
+ C \cdot k\log(1/\delta)
\cdot
\Biggl(
\biggl[
\frac{\kappa[M_2]^{1/2}}
{\gamma^2 w_{\min}^2 \veps}
\biggr]^2
+ \biggl[
\frac{\kappa[M_2]^{1/2} \sigma^2}
{\gamma^2 w_{\min} \sig_k[M_2] \veps}
\biggr]^2
\Biggr)
\end{align*}
ensures
\begin{equation} \label{eq:bound3}
\frac{\|\underline{\wh\Moment_3}[\wh{W},\wh{W},\wh{W}]
- \Moment_3[\wh{W},\wh{W},\wh{W}]\|_2}{\gamma}
\leq c_2
\frac{\gamma\sqrt{w_{\min}}}{\kappa[M_2]^{1/2}} \veps .
\end{equation}
Using the inequalities \eqref{eq:bound1}, \eqref{eq:bound3}, and
\eqref{eq:bound2} with \eqref{eq:T} gives
\begin{equation} \label{eq:boundT}
\Err_T = \frac{\|\wh{T} - T\|_2}{\gamma}
\leq 
c_3 \frac{\gamma\sqrt{w_{\min}}}{\kappa[M_2]^{1/2}} \veps .
\end{equation}
Since $t = \log_2(1/\delta)$, the inequality from \reflem{best-gap} holds
with probability at least $1-\delta$ over the internal randomness of the
algorithm.
Therefore, using \eqref{eq:bound2} and \eqref{eq:boundT} with
\reflem{error-bound} gives in this event:
\begin{align*}
\|\h\mean_{\pi(i)} - \mean_i\|_2
& \leq
C_4 \cdot \|\mean_i\|_2
\cdot \Bigl( \Err_{M_2} + \Err_T \Bigr)
\cdot \frac{\kappa[M_2]^{1/2}}{\gamma \sqrt{w_{\min}}}
\\
& \qquad
+ C_4 \cdot \|M_2\|_2^{1/2}
\cdot \Bigl( \Err_{M_2} + \Err_T \Bigr)
\cdot \frac{1}{\sqrt{w_{\min}}}
\\
& \leq \Bigl( \|\mean_i\|_2 + \|M_2\|_2^{1/2} \Bigr) \veps
\end{align*}
for all $i \in [k]$, for some permutation $\pi$ on $[k]$.
\end{proof}

\section{Probability tail inequalities}

We recall and derive some probability tail inequalities used in the
analysis.

\begin{lemma}[\citealp{DG03,AHK12}] \label{lem:anticonc}
Pick any $\delta \in (0,1)$, matrix $X \in \R^{p \times p}$, and finite
subset $Q \subseteq \R^p$.
If $\theta \in \R^p$ be a random vector distributed uniformly over
the unit sphere in $\R^p$, then
\begin{equation*}
\Pr\biggl[ \min_{q \in Q} |\theta^\t Xq| > \frac{\min_{q \in Q} \|Xq\|_2
\cdot \delta}{\sqrt{ep} |Q|} \biggr] \geq 1-\delta .
\end{equation*}
\end{lemma}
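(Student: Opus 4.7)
The plan is to reduce the anticoncentration bound over $Q$ to a one-dimensional anticoncentration statement for a single inner product, and then combine by a union bound. The first step is to establish that for any fixed nonzero $v \in \R^p$ and any $\veps > 0$,
\[
\Pr\bigl[ |\theta^\t v| \leq \veps \|v\|_2 \bigr] \;\leq\; \veps\sqrt{ep} .
\]
By rotational invariance of the uniform measure on the unit sphere, $\theta^\t v/\|v\|_2$ is distributed as the first coordinate $\theta_1$ of $\theta$, whose density on $[-1,1]$ is $f(t) = \Gamma(p/2)/\bigl(\sqrt\pi\,\Gamma((p-1)/2)\bigr)\cdot (1-t^2)^{(p-3)/2}$. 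Applying Wendel's inequality $\Gamma(x+1/2)/\Gamma(x) \leq \sqrt{x}$ at $x = (p-1)/2$ yields $f(0) \leq \sqrt{(p-1)/(2\pi)} \leq \sqrt{ep}/2$. For $p \geq 3$ the density $f$ attains its maximum at $t = 0$, so $\Pr[|\theta_1| \leq \veps] \leq 2\veps\,f(0) \leq \veps\sqrt{ep}$; the small cases $p \in \{1,2\}$ can be verified directly from the explicit density.

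The second step is to apply this single-vector bound to each $v = Xq$ for $q \in Q$ with a carefully chosen $\veps$, then take a union bound. Set $r := \min_{q \in Q} \|Xq\|_2$, which may be assumed positive (otherwise the stated bound is vacuous). For each $q \in Q$, let $\veps_q := \delta r/(\sqrt{ep}\,|Q|\,\|Xq\|_2)$; the single-vector bound gives
\[
\Pr\Bigl[ |\theta^\t Xq| \leq \frac{\delta r}{\sqrt{ep}\,|Q|} \Bigr]
\;=\; \Pr\bigl[ |\theta^\t Xq| \leq \veps_q \|Xq\|_2 \bigr]
\;\leq\; \veps_q \sqrt{ep}
\;=\; \frac{\delta r}{|Q|\,\|Xq\|_2}
\;\leq\; \frac{\delta}{|Q|} .
\]
Summing over $q \in Q$ and passing to complements gives exactly the inequality in the statement.

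The only nontrivial technical step is the density bound $f(0) \leq \sqrt{ep}/2$, which is a standard consequence of Wendel's inequality; everything else is rotational invariance together with a union bound, so I do not expect any serious obstacle.
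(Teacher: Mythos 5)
Your proof is correct. The paper states this lemma purely as a citation (to Dasgupta--Gupta and Anandkumar--Hsu--Kakade) and gives no proof of its own, so there is no in-paper argument to compare against; the route you take---rotational invariance to reduce to a one-dimensional anticoncentration bound for the first coordinate of a uniform spherical vector, a density bound at the origin via Wendel's inequality, and a union bound over $Q$ with the per-element radius rescaled by $\min_{q'\in Q}\|Xq'\|_2/\|Xq\|_2$---is the standard one underlying the cited references, and your handling of the edge cases $p\in\{1,2\}$ and $\min_q\|Xq\|_2=0$ is appropriate.
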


\begin{lemma}[\citealp{LauMas00}]
\label{lem:chi2}
Let $z_1^2, z_2^2, \dotsc, z_m^2$ be i.i.d.~$\chi^2$ random variables, each
with one degree of freedom.
Then for any $\delta \in (0,1)$,
\begin{equation*}
\Pr\biggl[ \sum_{i=1}^m z_i^2
> m + 2\sqrt{m\ln(1/\delta)} + 2\ln(1/\delta)
\biggr]
\leq \delta .
\end{equation*}
\end{lemma}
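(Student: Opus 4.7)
The plan is to apply the classical Chernoff/Cram\'er method to $S := \sum_{i=1}^m z_i^2$, using the standard moment generating function of the $\chi^2_1$ distribution, and then to make a specific choice of the Chernoff parameter so that the resulting bound takes the exact form $m + 2\sqrt{m\ln(1/\delta)} + 2\ln(1/\delta)$.

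First I would record the cumulant generating function. Since each $z_i$ is standard normal, $\E[e^{t z_i^2}] = (1-2t)^{-1/2}$ for $0 \le t < 1/2$, so by independence $\log \E[e^{t(S-m)}] = -\tfrac{m}{2}\log(1-2t) - mt$. The key auxiliary estimate is the cumulant bound
\[
-\tfrac{m}{2}\log(1-2t) - mt \ \le\ \frac{mt^2}{1-2t}, \quad t \in [0,1/2),
\]
which I would obtain from the Taylor expansion $-\log(1-u) - u = \sum_{k\ge 2} u^k/k \le \sum_{k\ge 2} u^k/2 = \frac{u^2}{2(1-u)}$ applied with $u = 2t$.

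Next, Markov's inequality in exponential form gives, for every $t \in (0,1/2)$ and $\tau > 0$,
\[
\Pr\!\left[ S > m + \tau \right] \ \le\ \exp\!\Bigl( -t\tau + \tfrac{mt^2}{1-2t} \Bigr) .
\]
Setting $L := \ln(1/\delta)$ and $\tau := 2\sqrt{mL} + 2L$, I would choose $t$ so that the exponent equals exactly $-L$. Multiplying the desired inequality $\tfrac{mt^2}{1-2t} - t\tau \le -L$ through by the positive factor $(1-2t)$ transforms it into the quadratic condition
\[
(m + 2\tau)\,t^2 - (\tau + 2L)\,t + L \ \le\ 0 .
\]
A direct calculation shows the discriminant simplifies to $(\tau - 2L)^2 - 4mL$, which vanishes precisely at $\tau = 2L + 2\sqrt{mL}$; the double root is $t^\star = \sqrt{L}/(\sqrt{m} + 2\sqrt{L}) \in (0,1/2)$. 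Evaluating the Chernoff bound at $t = t^\star$ thus yields $\Pr[S > m+\tau] \le e^{-L} = \delta$, which is the claim.

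The only mildly non-routine step is the algebraic one: guessing (or producing by optimization) the clean choice of $t^\star$ that makes the discriminant collapse and thereby delivers the sharp constants $2$ in front of both $\sqrt{mL}$ and $L$. The MGF computation, the Taylor cumulant estimate, and the Chernoff machinery are all entirely standard.
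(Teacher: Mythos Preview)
Your argument is correct. The paper does not actually prove this lemma; it is simply quoted from \citet{LauMas00}, and your Chernoff/Cram\'er computation with the cumulant estimate $-\tfrac12\log(1-2t)-t \le t^2/(1-2t)$ is precisely the argument given there, including the choice of $t^\star$ that yields the constants $2$ in front of both $\sqrt{m\ln(1/\delta)}$ and $\ln(1/\delta)$.
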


\begin{lemma}[\citealp{LPRTJ05,HKZ12matrix}]
\label{lem:normal-matrix}
Let $y_1, y_2, \dotsc, y_m$ be i.i.d.~$N(0,I)$ random vectors in $\R^p$.
Then for any $\eps_0 \in (1,1/2)$ and $\delta \in (0,1)$,
\begin{equation*}
\Pr\Biggl[ \biggl\| \frac1m \sum_{i=1}^m y_i y_i^\t - I
\biggr\|_2 > \frac1{1-2\eps_0}
\Biggl( \sqrt{\frac{32 \ln((1+2/\eps_0)^p/\delta)}{m}}
+ \frac{2 \ln((1+2/\eps_0)^p/\delta)}{m} \Biggr)
\Biggr] \leq \delta .
\end{equation*}
\end{lemma}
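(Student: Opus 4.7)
The plan is to combine the standard $\eps_0$-net discretization of the unit sphere with the scalar chi-square tail bound of \reflem{chi2}. Set $A := \frac{1}{m}\sum_{i=1}^m y_i y_i^\t - I$; since $A$ is symmetric, $\|A\|_2 = \sup_{u \in \R^p,\ \|u\|_2 = 1} |u^\t A u|$. Fix an $\eps_0$-covering $\mathcal{N}_{\eps_0}$ of the unit sphere in $\R^p$ of cardinality at most $(1 + 2/\eps_0)^p$ (the usual volume argument). A short computation---pick a net point $v$ within $\eps_0$ of a unit $u$ and expand $u^\t A u - v^\t A v = (u-v)^\t A u + v^\t A(u - v)$, then bound the right side by $2\eps_0 \|A\|_2$---yields the covering inequality
\[
\|A\|_2 \leq \frac{1}{1 - 2\eps_0} \, \max_{v \in \mathcal{N}_{\eps_0}} |v^\t A v|,
\]
which is the origin of the prefactor $1/(1-2\eps_0)$ in the stated bound.

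Next I would establish pointwise concentration of $v^\t A v$. For any fixed unit vector $v$, the scalars $v^\t y_1, \dotsc, v^\t y_m$ are i.i.d.\ $N(0,1)$, so $m \cdot v^\t A v + m = \sum_{i=1}^m (v^\t y_i)^2$ is a sum of $m$ independent $\chi^2_1$ random variables. Applying \reflem{chi2} at confidence $\delta / |\mathcal{N}_{\eps_0}|$ bounds
\[
v^\t A v \leq 2\sqrt{\ln(|\mathcal{N}_{\eps_0}|/\delta)/m} + 2\ln(|\mathcal{N}_{\eps_0}|/\delta)/m
\]
with probability at least $1 - \delta/|\mathcal{N}_{\eps_0}|$. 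The companion one-sided lower tail for $\chi^2$ (from the same Laurent--Massart source as \reflem{chi2}) controls $-v^\t A v$ by an even smaller quantity of order $\sqrt{\ln(|\mathcal{N}_{\eps_0}|/\delta)/m}$, so the same expression controls $|v^\t A v|$.

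The last step is a union bound over the at most $(1 + 2/\eps_0)^p$ points of $\mathcal{N}_{\eps_0}$: simultaneously for every $v \in \mathcal{N}_{\eps_0}$, the bound on $|v^\t A v|$ holds with probability at least $1 - \delta$, and substitution into the covering inequality---with $\ln(|\mathcal{N}_{\eps_0}|/\delta)$ replaced by $\ln((1 + 2/\eps_0)^p / \delta)$---yields the claimed inequality. The main technical wrinkle is the need for \emph{two-sided} control of $v^\t A v$: \reflem{chi2} as stated only supplies the upper tail, so one must separately invoke the matching lower-tail $\chi^2$ inequality to rule out large negative deviations of $v^\t A v$; the (mild) slack in constants, including the factor $32$ under the square root, is most easily accommodated by combining the two tails and carrying out the standard discretization bookkeeping. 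Beyond that, the argument is the textbook covering-plus-union-bound route to operator-norm concentration for sample covariances of sub-Gaussian vectors.
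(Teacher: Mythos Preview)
The paper does not actually supply its own proof of this lemma: it is stated with citations to \citet{LPRTJ05} and \citet{HKZ12matrix} and left unproved. Your covering-plus-union-bound argument is precisely the approach of those references, and it is also the template the paper follows explicitly when it does prove the third-order analogue (\reflem{normal-tensor}, which opens with ``We follow the covering approach of \citet{LPRTJ05}''). So your proposal is correct and aligned with the intended argument; your remark about needing the lower $\chi^2$ tail in addition to the upper tail of \reflem{chi2} is exactly the right caveat, and the slack absorbing the two-sided bound and the net-size replacement is what produces the constant $32$.
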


\begin{lemma}[Sums of cubes of normal random variables]
\label{lem:normal-cubed}
Let $z_1, z_2, \dotsc, z_m$ be i.i.d.~$N(0,1)$ random variables.
Then for any $\delta \in (0,1)$,
\begin{equation*}
\Pr\biggl[ \biggl| \sum_{i=1}^m z_i^3 \biggr|
> \sqrt{27e^3 m \lceil\ln(1/\delta)\rceil^3}
\biggr] \leq \delta .
\end{equation*}
\end{lemma}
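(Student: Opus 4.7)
The plan is to bound $\Pr[|S|>t]$ where $S:=\sum_{i=1}^m z_i^3$ by a standard moment-method argument: apply Markov's inequality to $\E[|S|^p]$ with an integer moment order $p:=\lceil\ln(1/\delta)\rceil$, then bound the moment using Gaussian hypercontractivity. Note that by the symmetry $z_i\stackrel{d}{=}-z_i$, the variable $S$ is symmetric about zero, so only one-sided tails need be handled.

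To bound $\|S\|_{L^p}$, I would decompose $z^3=H_3(z)+3z$ via the third Hermite polynomial $H_3(z)=z^3-3z$, yielding $S=S_3+3S_1$, where $S_3:=\sum_i H_3(z_i)$ lies in the third Wiener chaos and $S_1:=\sum_i z_i\sim\N(0,m)$ in the first. Nelson's hypercontractivity gives, for any element $P$ of the $d$-th Wiener chaos and $p\ge 2$, the bound $\|P\|_{L^p}\le(p-1)^{d/2}\|P\|_{L^2}$. Using $\E[H_3(z)^2]=3!=6$ gives $\|S_3\|_{L^2}=\sqrt{6m}$, and trivially $\|3S_1\|_{L^2}=3\sqrt{m}$, so
\[
\|S\|_{L^p}\ \le\ (p-1)^{3/2}\sqrt{6m}\ +\ 3(p-1)^{1/2}\sqrt{m}\ \le\ (\sqrt{6}+3)(p-1)^{3/2}\sqrt{m}.
\]

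Plugging into Markov with $p=\lceil\ln(1/\delta)\rceil$ and $t=\sqrt{27e^3m}\,\lceil\ln(1/\delta)\rceil^{3/2}$, the ratio $(\|S\|_{L^p}/t)^p$ reduces to $\bigl((\sqrt{6}+3)/\sqrt{27e^3}\bigr)^p$ (the $p^{3/2}$ and $\sqrt{m}$ factors cancel), and this is at most $e^{-p}\le\delta$ because $\sqrt{6}+3\approx 5.45$ while $\sqrt{27e^3}/e=\sqrt{27e}\approx 8.57$. The edge case $p=1$ (i.e., $\delta\ge e^{-1}$, where hypercontractivity cannot be applied with $p\ge 2$) is handled directly by Chebyshev's inequality using $\Var(S)=m\,\E[z^6]=15m$, since $\sqrt{15}\le\sqrt{27e^3}$.

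The main obstacle is getting the constant $\sqrt{27e^3}$ correct: more elementary approaches such as symmetrization plus Khintchine--Kahane, or Rosenthal's inequality with the generic constant $O(p/\ln p)$, yield rates like $\sqrt{m}\,p^2$ rather than the needed $\sqrt{m}\,p^{3/2}$ because they do not exploit the degree-$3$ polynomial structure as tightly as Gaussian hypercontractivity does. The Hermite decomposition is what lets the $(p-1)^{3/2}$ factor (rather than $p^{5/2}$) come out cleanly, and the arithmetic check $(\sqrt{6}+3)/\sqrt{27e}<1$ is what pins down the constant in the statement.
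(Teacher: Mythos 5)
Your proof is correct, and it takes a genuinely different route from the paper's. The paper's proof is fully elementary and self-contained: it expands $\E\bigl[(\sum_i z_i^3)^p\bigr]$ as a multinomial sum, observes that only terms in which each index appears an even number of times survive, and then crudely bounds the resulting sum of products of double factorials by $(3p)^{3p/2}\binom{p/2+m-1}{p/2}\le(27ep^3m)^{p/2}$. Your proof instead invokes Gaussian hypercontractivity on the Hermite decomposition $z^3=H_3(z)+3z$, getting $\|S\|_{L^p}\le(\sqrt6+3)(p-1)^{3/2}\sqrt m$, then finishes with the same $p:=\lceil\ln(1/\delta)\rceil$ optimization; your explicit edge case at $p=1$ (handled by Chebyshev with $\operatorname{Var}(S)=15m$) is a nice touch, and your arithmetic $(\sqrt6+3)/\sqrt{27e}<1$ checks out, so the stated constant is indeed achieved. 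Both arguments deliver the essential $\sqrt m\,p^{3/2}$ moment growth, and both give the same final bound. What the paper's route buys is self-containedness (nothing beyond elementary combinatorics of Gaussian moments); what your route buys is brevity and conceptual clarity once hypercontractivity is granted. One small remark: since $S$ itself is a degree-$3$ Gaussian polynomial, hypercontractivity can be applied to $S$ directly, giving $\|S\|_{L^p}\le(p-1)^{3/2}\sqrt{15m}$ and an even cleaner constant without the chaos-by-chaos decomposition — the Hermite split is not what buys you the $p^{3/2}$ rate, the degree-$3$ structure is, and hypercontractivity already sees that whether or not you decompose.
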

\begin{proof}
We use Markov's inequality via the $p$-th moment to derive the tail
inequality.
Pick some even integer $p \geq 2$, and observe that
\begin{align*}
\E\biggl[ \biggl( \sum_{i=1}^m z_i^3 \biggr)^p \biggr]
& = \sum_{i_1, i_2, \dotsc, i_p \in [m]} \E\biggl[ \prod_{j=1}^p z_{i_j}^3
\biggr] .
\end{align*}
By the independence of the $z_i$'s, a term in the summation is zero if any
index $i \in [m]$ is selected an odd number of times (\emph{i.e.}, $|\{ j
\in [p] : i_j = i \}|$ is odd, for any $i \in [m]$).
Therefore the summation can be written as
\begin{align*}
\sum_{p_1 + \dotsb + p_m = p/2} \E\biggl[ \prod_{i=1}^m z_i^{6p_i} \biggr]
& = \sum_{p_1 + \dotsb + p_m = p/2} \prod_{i=1}^m \E\bigl[ z_i^{6p_i} \bigr]
= \sum_{p_1 + \dotsb + p_m = p/2} \prod_{i=1}^m (6p_i - 1)!!
\end{align*}
where the summations are over non-negative integers $p_1, p_2, \dotsc, p_m$
that sum to $p/2$, and $n!!$ is the product of all odd integers between $1$
and $n$; the last step uses the well-known fact that $\E[z^k] = (k-1)!!$
for a standard normal random variable $z$.
As $p_i \leq p/2$ for each $i \in [m]$, the product can be crudely bounded
by $(3p)^{3p/2}$, and hence the sum is bounded by
\begin{align*}
(3p)^{3p/2} \binom{p/2 + m - 1}{p/2}
& \leq (27p^3)^{p/2}
\biggl( \frac{(p/2 + m - 1)e}{p/2} \biggr)^{p/2}
\leq (27ep^3m)^{p/2} .
\end{align*}
By Markov's inequality, for any $t > 0$,
\begin{align*}
\Pr\biggl[ \biggl| \sum_{i=1}^m z_i^3 \biggr|
> t \biggr]
& \leq t^{-p} \E\biggl[ \biggl( \sum_{i=1}^m z_i^3 \biggr)^p \biggr]
\leq \biggl( \frac{\sqrt{27ep^3m}}{t} \biggr)^p .
\end{align*}
The bound is at most $\delta$ for $t := e\sqrt{27e
m\lceil\ln(1/\delta)\rceil^3}$ and $p := \lceil \ln(1/\delta) \rceil$.
\end{proof}

\begin{lemma}[Third-order tensor of normal random vectors]
\label{lem:normal-tensor}
Let $y_1, y_2, \dotsc, y_m$ be i.i.d.~$N(0,I)$ random vectors in $\R^p$.
Then for any $\eps_0 \in (1,1/3)$ and $\delta \in (0,1)$,
\begin{equation*}
\Pr\Biggl[ \biggl\| \frac1m \sum_{i=1}^m y_i \otimes y_i \otimes y_i
\biggr\|_2 > \frac1{1-3\eps_0}
\sqrt{\frac{27e^3\lceil\ln((1+2/\eps_0)^p/\delta)\rceil^3}{m}}
\Biggr] \leq \delta .
\end{equation*}
\end{lemma}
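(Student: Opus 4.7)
The plan is a standard $\eps_0$-net argument, but reduced to the diagonal of the sphere by exploiting symmetry. Let $T := \frac{1}{m} \sum_{i=1}^m y_i \otimes y_i \otimes y_i$; this tensor is symmetric, and by Banach's classical theorem for symmetric multilinear forms on a real Hilbert space, $\|T\|_2 = \sup_{u \in S^{p-1}} |T[u,u,u]|$. This is the key reason only a single $\eps_0$-net of cardinality $(1+2/\eps_0)^p$ will appear, rather than a triple net of cardinality $(1+2/\eps_0)^{3p}$.

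I would next let $N \subseteq S^{p-1}$ be a minimal $\eps_0$-net, so $|N| \leq (1+2/\eps_0)^p$ by the standard volumetric bound. For any unit $u$ pick $u' \in N$ with $\|u-u'\|_2 \leq \eps_0$, and expand via trilinearity
\begin{equation*}
T[u,u,u] - T[u',u',u'] = T[u-u',u,u] + T[u',u-u',u] + T[u',u',u-u'],
\end{equation*}
each term bounded in absolute value by $\eps_0 \|T\|_2$. Taking the supremum over $u$ on the left and invoking Banach's theorem gives $\|T\|_2 \leq \max_{u' \in N} |T[u',u',u']| + 3\eps_0 \|T\|_2$, i.e.\ $\|T\|_2 \leq \frac{1}{1-3\eps_0} \max_{u' \in N} |T[u',u',u']|$.

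For each fixed $u' \in N$, rotational invariance of $N(0,I)$ gives $y_i^\t u' \sim N(0,1)$ i.i.d., so $m \cdot T[u',u',u'] = \sum_{i=1}^m (y_i^\t u')^3$ is exactly a sum of cubes of $m$ independent standard normals. Applying \reflem{normal-cubed} at confidence level $\delta / |N|$ and union-bounding over $u' \in N$ then yields, with probability at least $1-\delta$,
\begin{equation*}
\max_{u' \in N} |T[u',u',u']| \leq \sqrt{\frac{27 e^3 \lceil \ln(|N|/\delta) \rceil^3}{m}} \leq \sqrt{\frac{27 e^3 \lceil \ln((1+2/\eps_0)^p/\delta) \rceil^3}{m}} ,
\end{equation*}
which when combined with the covering bound gives the stated tail.

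The one subtle step is the reduction to the diagonal via Banach's theorem: without it, one would be forced to cover all three vector arguments independently, which not only triples the effective dimension in the covering number, but also requires a concentration inequality for $\frac{1}{m}\sum_i (y_i^\t u)(y_i^\t v)(y_i^\t w)$ at general $u,v,w$, a sum of products of three correlated Gaussians rather than a clean sum of cubes. Using the symmetric-tensor identity sidesteps both obstacles and reproduces the stated constants exactly (note $1/(1-3\eps_0) = 2$ and $\ln(1+2/\eps_0) = \ln 13$ at $\eps_0 = 1/6$, matching the $\sqrt{108 e^3 \lceil r\ln 13 + \ln(k/\delta) \rceil^3 / (\h w_i n)}$ used in \reflem{conditional-moments-conc}).
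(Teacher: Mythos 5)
Your proof is correct and follows essentially the same covering argument as the paper. The only real difference is in how the reduction to the diagonal of the sphere is justified: you cite Banach's theorem for symmetric multilinear forms as a black box, while the paper re-derives the identity $\|Y\|_2 = \sup_u |Y[u,u,u]|$ for this specific tensor by observing that $Y = \sum_j \tilde y_j^{\otimes 3}$, so for fixed $w$ the slice $Y[\cdot,\cdot,w] = \tilde{Y}\diag(w^\t\tilde{Y})\tilde{Y}^\t$ is a symmetric matrix (hence $\sup_{u,v,w}|Y[u,v,w]| = \sup_{u,w}|Y[u,u,w]|$), and then appeals to ``symmetry'' for the final slot. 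Your invocation of Banach's theorem is arguably cleaner and more self-contained than the paper's somewhat terse final step, and you correctly note that this symmetry reduction is what keeps the net size at $(1+2/\eps_0)^p$ rather than $(1+2/\eps_0)^{3p}$ and keeps the scalar concentration problem a pure sum of cubes so that \reflem{normal-cubed} applies directly; the expansion of $T[u,u,u]-T[u',u',u']$ into three trilinear remainder terms each bounded by $\eps_0\|T\|_2$, and the resulting $1/(1-3\eps_0)$ factor, are identical to the paper's. The numerical cross-check at $\eps_0 = 1/6$ is also a nice touch confirming the constants match the downstream usage.
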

\begin{proof}
We follow the covering approach of \citet{LPRTJ05}.
Let $Q \subseteq \{x \in \R^p : \|x\|_2 = 1\}$ be an $\eps_0$-cover of the
unit sphere in $\R^p$ of cardinality at most $(1+2/\eps_0)^p$, which can be
shown to exist by a standard volume argument \citep{Pisier89}.
Let $Y := m^{-1} \sum_{i=1}^m y_i \otimes y_i \otimes y_i$ and $\eps :=
(27e^3\lceil\ln(|Q|/\delta)\rceil^3/m)^{1/2}$.
Since $y_i^\t q$ is distributed as $N(0,1)$ for any $q \in Q$, it follows
from \reflem{normal-cubed} and a union bound that $\Pr[ \exists q \in Q
\centerdot |Y[q,q,q]| > \eps ] \leq \delta$.
Henceforth we assume $\forall q \in Q \centerdot |Y[q,q,q] \leq \eps$.
Now pick any unit vector $u$ such that $|Y[u,u,u]|$ is maximized
(\emph{i.e.}, $\|Y\|_2 = |Y[u,u,u]|$), choose $q \in Q$ such that $\|q -
u\|_2 \leq \eps_0$, and set $\Delta := u - q$ and $\bar\Delta := \Delta /
\|\Delta\|_2$.
Then
\begin{align*}
\|Y\|_2
& = |Y[u,u,u]| \\
& = |Y[\Delta,u,u] + Y[q,\Delta,u] + Y[q,q,\Delta] + Y[q,q,q]| \\
& \leq \eps_0 (Y[\bar\Delta,u,u] + Y[q,\bar\Delta,u] +
Y[q,q,\bar\Delta]) + \eps
\end{align*}
by the triangle inequality and facts $\|\Delta\|_2 \leq \eps_0$ and
$|Y[q,q,q]| \leq \eps$.
Since $Y$ has the form $Y = \sum_{j=1}^m \tl{y}_j \otimes \tl{y}_j \otimes
\tl{y}_j$ for vectors $\tl{y}_j := m^{-1/3} y_j \in \R^m$, it follows that
\begin{align*}
\sup_{\|u\|_2 = \|v\|_2 = \|w\|_2 = 1} |Y[u,v,w]|
& = \sup_{\|u\|_2 = \|v\|_2 = \|w\|_2 = 1}
\biggl|\sum_{j=1}^r (u^\t \tl{y}_j) (v^\t \tl{y}_j) (w^\t \tl{y}_j) \biggr|
\\
& = \sup_{\|u\|_2 = \|v\|_2 = \|w\|_2 = 1}
\Bigl| u^\t \tl{Y} \diag(w^\t \tl{Y}) \tl{Y}^\t v \Bigr| \\
& = \sup_{\|w\|_2 = 1} \|\tl{Y} \diag(w^\t \tl{Y}) \tl{Y}^\t \|_2 \\
& = \sup_{\|u\|_2 = \|w\|_2 = 1} |Y[u,u,w]|
\end{align*}
where $\tl{Y} = [\tl{y}_1 | \tl{y}_2 | \dotsb | \tl{y}_m] \in \R^{p \times
m}$---\emph{i.e.}, we can take the unit vectors $u$ and $v$ achieving
$|Y[u,v,w]| = \|Y\|_2$ to be the same.
By symmetry, $\sup_{\|u\|_2 = 1} |Y[u,u,u]| = \|Y\|_2$.
Therefore
\begin{align*}
\|Y\|_2
& \leq \eps_0 (Y[\bar\Delta,u,u] + Y[q,\bar\Delta,u] +
Y[q,q,\bar\Delta]) + \eps
\leq 3\eps_0 \|Y\|_2 + \eps
\end{align*}
which implies $\|Y\|_2 \leq \eps / (1-3\eps_0)$.
This proves that
\[ \Pr[ \|Y\|_2 \leq \eps/(1-3\eps_0) ] \geq \Pr[ \forall q
\in Q \centerdot |Y[q,q,q]| \leq \eps ] \geq 1-\delta \] as required.
\end{proof}

\end{document}